\documentclass[10pt]{article}

\usepackage[utf8]{inputenc}
\usepackage[T1]{fontenc}
\PassOptionsToPackage{hyphens}{url}\usepackage{url}
\usepackage{xurl}
\usepackage{booktabs}
\usepackage{graphicx}
\usepackage{amsmath,amssymb,amsfonts}
\usepackage{amsthm}
\usepackage{mathtools}
\usepackage{nicefrac}
\usepackage{microtype}
\usepackage{pifont}
\usepackage{bm}

\usepackage[ruled, lined, linesnumbered, commentsnumbered, longend]{algorithm2e}
\SetAlCapFnt{\small}\SetAlCapNameFnt{\small}
\usepackage{multirow}
\usepackage{physics}

\usepackage{array}
\usepackage{xr}
\usepackage{wrapfig}
\usepackage{setspace}
\usepackage{exscale}
\usepackage{relsize}
\usepackage{ragged2e}
\usepackage[table,xcdraw]{xcolor}
\usepackage{tikz}
\usepackage{echo_arxiv}
\usepackage{adjustbox}
\usetikzlibrary{arrows.meta,positioning,fit,backgrounds,calc,shapes.geometric}
\usetikzlibrary{decorations.pathreplacing}

\definecolor{pA}{RGB}{70,120,200}\definecolor{pB}{RGB}{225,140,60}
\definecolor{pC}{RGB}{90,170,100}\definecolor{pD}{RGB}{150,110,190}
\definecolor{segzero}{HTML}{426AB3}
\definecolor{segone}{HTML}{D77A32}
\definecolor{segtwo}{HTML}{438E72}
\definecolor{segthree}{HTML}{8661B3}
\definecolor{ink}{HTML}{243245}
\definecolor{muted}{HTML}{607084}
\definecolor{grid}{HTML}{D8E0E8}
\definecolor{soft}{HTML}{F3F6F9}
\definecolor{tileaccent}{HTML}{247F92}
\definecolor{tilefill}{HTML}{E6F1F3}
\newcommand{\figlab}{\footnotesize}
\newcommand{\fignote}{\scriptsize}
\newcommand{\FN}{45}
\tikzset{
  ecfig/.style={
    font=\figlab,
    every node/.append style={font=\figlab},
    >={Stealth[length=2.0mm]},
    line width=0.4pt},
  cell/.style={draw=black!45, line width=0.4pt, inner sep=0pt},
  hdr/.style={font=\figlab, text=black!80},
  note/.style={font=\fignote, text=black!60},
  err/.style={font=\figlab, text=carnelian},
  flow/.style={-{Stealth[length=2.0mm]}, line width=0.7pt, black!60},
  tap/.style={-{Stealth[length=2.0mm]}, line width=0.7pt, black!50},
}

\definecolor{oxfordblue}{rgb}{0.0, 0.13, 0.28}
\definecolor{carnelian}{rgb}{0.7, 0.11, 0.11}
\definecolor{darkslategray}{rgb}{0.18, 0.31, 0.31}
\definecolor{skyblue}{rgb}{0.53, 0.81, 0.92}
\definecolor{cadetblue}{rgb}{0.37, 0.62, 0.63}
\definecolor{oA}{RGB}{70,120,200}

\usepackage{hyperref}

\newcolumntype{P}[1]{>{\centering\arraybackslash}p{#1}}
\newtheorem{theorem}{Theorem}
\newtheorem{lemma}{Lemma}
\newtheorem{remark}{Remark}
\newtheorem{proposition}{Proposition}
\newtheorem{corollary}{Corollary}
\newtheorem{assumption}{Assumption}

\def\*#1{\mathbf{#1}}
\def\$#1{\mathcal{#1}}
\def\^#1{\mathbb{#1}}

\newcommand{\method}{\mbox{\textsc{Echo}}}

\newcommand{\TP}{P}
\newcommand{\Nsite}{N}
\newcommand{\bits}{b}
\newcommand{\gsz}{g}
\newcommand{\AllReduce}{\mathsf{AllReduce}}
\newcommand{\QAR}{\mathsf{QAR}}

\newcommand{\AllGather}{\mathsf{All\text{-}Gather}}

\newcommand{\injnoise}{\bm{\delta}}
\newcommand{\qbound}{q}

\newcommand{\site}{k}

\begin{document}

\title{\method{}: Error-Compensated 4-Bit Tensor-Parallel\\
Communication for LLM Training and Inference}

\author{%
  \mbox{Xingyu Xie$^{1,*}$} \hspace{1.6em}
  \mbox{Xiaoyong Xie$^{1,*}$} \hspace{1.6em}
  \mbox{Zhouchen Lin$^{2,\dagger}$} \hspace{1.6em}
  \mbox{Kim-Chuan Toh$^{1,\dagger}$}}
\affiliation{$^{1}$Department of Mathematics, National University of Singapore\\[2pt]
  $^{2}$State Key Laboratory of General Artificial Intelligence,
  School of Intelligence Science and Technology, Peking University}
\shorttitle{\method{}: Error-Compensated 4-Bit Tensor-Parallel Communication}
\preprintdate{October 2026}
\titlenote{$^{*}$Equal contribution. \quad $^{\dagger}$Corresponding authors: Zhouchen Lin (\href{mailto:zlin@pku.edu.cn}{zlin@pku.edu.cn}) and Kim-Chuan Toh (\href{mailto:mattohkc@nus.edu.sg}{mattohkc@nus.edu.sg}).}

\maketitle

\begin{abstract}
Tensor parallelism ends every layer with an all-reduce that sits between two
dependent matrix multiplications, so its cost cannot be hidden and grows as
compute outpaces interconnect bandwidth. Sending 4-bit instead of 16-bit values
would cut this cost by nearly 4$\times$, but the all-reduce must add the values it
carries, and rounding the sum injects an error at every call. Model quality
depends on how these errors accumulate over layers, yet existing methods treat
each call on its own. We present \method{}, a 4-bit all-reduce for training and
inference that controls the accumulated error. \method{} moves data in 4~bits
and sums in high precision, so each value is quantized at most twice per call
for any number of GPUs. The summation reveals each injected error, and
\method{} feeds it back into later calls, so old errors cancel instead of
accumulating. We prove that the accumulated error stays bounded independently
of depth. On A100 GPUs, \method{} is 2.09$\times$ faster than the 16-bit NCCL
all-reduce on 1\,GB messages. It speeds up the time to first token by up to
1.16$\times$ and the training step by up to 1.09$\times$ on an idle interconnect, and
by up to 1.44$\times$ and 1.38$\times$ as background traffic grows. It keeps the average accuracy over seven benchmarks
within 0.4 points of 16-bit inference on Qwen3-8B. In 10B-token pretraining,
its final loss is less than 0.9\% above that of 16-bit training.
\method{} needs no FP8 or FP4 hardware.

\vspace{4pt}\noindent\textbf{Keywords:} Tensor parallelism, collective communication, low-bit quantization, error
feedback, large language models.
\end{abstract}

\section{Introduction}\label{sec:intro}
Tensor parallelism splits the weight matrices of every
attention and MLP layer across several GPUs, and each GPU computes a partial
result from its share of the weights~\citep{megatronlm}. Turning the partial
results into the layer output requires an all-reduce, a collective operation
that adds the partial results from all GPUs and returns the sum to each of
them. The all-reduce runs between two dependent matrix multiplications, so it
is a synchronization barrier on the critical path. Unlike the gradient
all-reduce of data parallelism, which runs during the backward pass, or the
point-to-point transfers of pipeline parallelism, which are pipelined across
microbatches, it cannot be hidden behind the surrounding computation in the
same way. Overlapping can hide only part of it~\citep{domino}. In production
inference the all-reduce contributes up to 30\% of end-to-end
latency~\citep{metablog}. An 80-layer model issues 160 all-reduce calls in one
forward pass, and on GPUs with limited interconnect bandwidth this
communication can exceed half of the prefill cost~\citep{flashcomm}. In
large-scale training, tensor-parallel communication takes 50 to 60\% of total
communication time~\citep{taco}. Hardware trends make the problem worse every
generation, because peak compute has grown $3.0\times$ every two years while
interconnect bandwidth has grown only $1.4\times$~\citep{memorywall}. Kernel and decoding optimizations reduce the
compute cost of serving~\citep{flashattention,sfi} and leave the communication
untouched. Faster
collective algorithms and better use of spare links recover part of the
latency but leave the transferred bytes unchanged~\citep{flexlink,metablog}.
Reducing the cost further requires sending fewer bits.

Sending fewer bits means using a lower-precision format, and lower precision
has been adopted by every other part of the LLM stack. Matrix computation has
moved from FP16 to FP8 in production-scale training~\citep{deepseekv3} and to
FP4 on recent hardware~\citep{nvfp4}, with model quality preserved by careful
algorithm design. Model weights are served in 4-bit
formats~\citep{gptq,awq}. KV caches are compressed to 2 to 4
bits~\citep{kivi,kvquant}. Sharded data parallelism communicates weights and
gradients at close to 4 bits~\citep{sdp4bit}, and production mixture-of-experts systems
send expert-layer activations at FP8~\citep{deepseekv3,megascalemoe}.
Below 8 bits, tensor-parallel communication is
the exception. The all-reduce of most deployed systems still transfers 16-bit values, although
communication time is proportional to the bits sent and a 4-bit all-reduce
would nearly quadruple the effective interconnect bandwidth on unchanged hardware.
The reason is that the all-reduce must add the numbers it
moves, and a sum of 4-bit values does not fit in 4 bits.

The same constraint appears outside communication. Low-precision matrix
multiplication has the same problem, and tensor cores solve it
inside the chip. They multiply low-precision operands but accumulate the
products in high-precision registers, so the wide sum is never stored and
never moved~\citep{fp8formats,nvfp4}. The all-reduce cannot do the same. The
values it adds start on different GPUs and must cross the interconnect before
the addition, and the sum must cross the interconnect again to reach every
GPU. If both crossings are to stay at 4 bits, the collective must
dequantize what arrives, add in high precision, and quantize the sum again
for the way back. Every call injects a fresh rounding error into the hidden
state. The injections are also frequent. An LLM weight is quantized once, before
deployment, and only read afterwards, while the all-reduce quantizes at every
layer of every forward pass and, in training, of every backward pass over
tens of thousands of steps. The injected errors accumulate, and published
results show the consequences. In inference, quantizing tensor-parallel
communication to 4 bits without training costs measurable accuracy. On
Gemma 2 27B, ARC-challenge accuracy drops from 62.4 to 57.5 and WinoGrande
from 79.1 to 74.3, and keeping outlier channels at 16 bits recovers only part
of the loss~\citep{lowbittp}. In training, naive low-bit communication
destabilizes the loss curve or diverges~\citep{taco}. The most recent
training-side work identifies error accumulation under high-frequency
communication as the central obstacle, yet its remedy is a stronger quantizer
applied call by call, and it stops at FP8~\citep{taco}. Every existing method
works on each call in isolation, but low-bit communication is a dynamic process,
a sequence of rounding errors that flows through the network and through the
training run. Model quality depends on the accumulated error of this process,
which a per-call quantizer does not control.

We control the process rather than the individual calls. We
present \method{}, a tensor-parallel all-reduce that sends 4-bit integers,
for both inference and training. \method{} first
restructures the collective around the constraint that addition must happen
in high precision. A quantized all-to-all brings each data segment to one
owning GPU, the owner forms the sum in high precision, and a quantized
all-gather returns the reduced segments, so each element is quantized at most
twice per call (Sec.~\ref{sec:quant-allreduce}). On top of this primitive,
\method{} controls the error process. The reduction step reveals each rounding
error exactly, and a two-timescale compensator feeds it back into subsequent
calls, so the accumulated error stays bounded instead of growing with the
number of calls. The guarantee is independent of network depth and free of the
exponential Lipschitz factor of worst-case bounds (Secs.~\ref{sec:dual-ef} and~\ref{sec:theory}). The
name comes from this mechanism, in which each injected error returns to the
collective at the calls that follow, like an echo. The
compensation is fused into the entry pass of the all-reduce and into an update
that runs beside the next matrix multiplication, without changing
the computed values (Sec.~\ref{subsec:fusion}). The complete design transports 4-bit integers
with half-precision quantization parameters for each group and requires no FP8 or FP4 hardware support.

Our contributions are summarized as follows.
\begin{enumerate}
\item A quantized all-reduce primitive in which each element is quantized
  at most twice per call, independent of the number of GPUs. Computation
  and communication overlap inside the collective through a
  four-stage tiled pipeline that runs asynchronously over peer-to-peer direct
  writes, entirely on the GPUs with no CPU coordination between stages.
\item A two-timescale feedback compensator that measures the rounding error
  at the reduction step and cancels its accumulation across calls, keeping
  the total error bounded regardless of the number of calls, with the same
  mechanism serving inference and training.
\item A theory of the compensated collective. The bounds are independent of
  network depth and free of the exponential Lipschitz factor of worst-case
  bounds. The source bound holds for any
  error sequence on the forward pass and, at every site, in mean square on the backward pass
  under a smoothness condition. The propagation bound keeps the first-order
  output deviation, measured against the hidden state, at the level of a
  single rounding error at every depth.
\item A CUDA implementation that fuses compensation, quantization, and
  communication into one set of kernels. For 1\,GB messages, the compensated
  4-bit all-reduce is 2.09$\times$ faster than the 16-bit NCCL all-reduce, and the
  time to first token and the training step improve by up to 1.16$\times$ and
  1.09$\times$ on an idle interconnect and by up to 1.44$\times$ and
  1.38$\times$ as background traffic grows. 8-bit inference matches 16-bit accuracy, and 4-bit inference
  stays within 0.4 and 1.7 points of it on average over seven tasks for Qwen3-8B
  and Llama-3-8B. Over 10B tokens of pretraining, 4-bit training with a
  slightly larger learning rate ends less than 0.9\% above the 16-bit loss.
  To our knowledge, \method{} is the first to run tensor-parallel
  communication with 4-bit integers for both training and inference.
\end{enumerate}
The code will be released upon acceptance.

\section{Related Work}\label{sec:related}

We discuss three directions of related work: methods that hide or accelerate
the tensor-parallel all-reduce, methods that quantize it, and error-feedback
methods that compensate compression errors in other forms of parallelism. None
of them controls the accumulated error of a quantized all-reduce, which is the
central object of this paper.

The first direction hides or accelerates the all-reduce without reducing the
bits it moves. Domino and FLUX split the computation of a transformer layer
into pieces with no mutual dependence, so that one piece computes while
another communicates~\citep{domino,flux}. TokenWeave fuses the all-reduce with
the normalization that follows it, using multicast hardware that delivers one
write to all GPUs at once~\citep{tokenweave}. FlexLink routes part of the traffic over otherwise idle PCIe and RDMA
links~\citep{flexlink}. SHARP builds the addition into the network switches, so
the data is summed on its way through the network instead of on the
GPUs~\citep{sharp}. SwitchML, which aggregates gradients inside a programmable switch, must
first convert them to a fixed-point format with enough headroom to hold the
sum~\citep{switchml}, the same constraint our collective is built around
(Sec.~\ref{sec:quant-allreduce}). All of these methods transfer 16-bit
values. Hiding works only when enough independent computation exists to
run behind the communication, and this amount shrinks every hardware generation
because peak compute grows faster than interconnect
bandwidth~\citep{memorywall}. Our collective reduces the bytes instead, and a
scheduler can treat it as an ordinary all-reduce with less communication
to hide.

The second direction quantizes tensor-parallel communication directly and is
the closest to our work. For inference, Flash Communication (FlashComm) uses fine-grained
group quantization and a two-step all-reduce~\citep{flashcomm}.
FlashComm V2 supports arbitrary bit widths through bit splitting and
stores outliers in floating point~\citep{flashcommv2}. Mixed-precision schemes
keep outlier channels at 16 bits and reach an average of 4.2
bits~\citep{lowbittp}. Fine-grained quantization of the exchanged activations
shortens the time to first token by up to 2$\times$~\citep{hansenpalmus}, and
EQuARX builds a block-wise int8 all-reduce into the XLA compiler for
TPUs~\citep{equarx}. QuickReduce brings INT4, INT6, and INT8 all-reduce to
inference serving on AMD GPUs~\citep{quickreduce}. For training, TACO compresses the communicated
tensors to FP8 with a Hadamard transform and a two-level scaling
scheme~\citep{taco}, and MegaScale-MoE replaces the tensor-parallel
reduce-scatter of FP8 mixture-of-experts training with an FP8 all-to-all and an
FP32 sum~\citep{megascalemoe}. All of these techniques work within a single call. Finer
groups, preserved outliers, rotations, and wider formats reduce the error of
one quantization, but no method compensates the errors that earlier calls have
already injected, so the accumulated error over layers and steps remains
uncontrolled. \method{} controls
the accumulated error rather than the per-call error, and since its bounds scale
with the per-call error, these techniques can be combined with it. The compensator keeps
the accumulation bounded (Sec.~\ref{sec:dual-ef}), so a standard INT4
quantizer is sufficient for both inference and training, and each element is
quantized at most twice per call regardless of the number of GPUs
(Sec.~\ref{sec:quant-allreduce}).

The third direction compensates quantization errors in other forms of
parallelism. Error feedback is a classical technique from signal processing
and entered deep-learning training with 1-bit SGD, which stores the
quantization error of the gradient and adds it to the gradient of the next
step~\citep{onebitsgd}. Later analysis proved that this correction removes the
bias of aggressive compressors~\citep{efsignsgd}, and EF21 strengthened the
guarantees with a different mechanism that communicates a compressed
difference to a reference state~\citep{richtarik2021ef21}. PowerSGD, 1-bit
Adam, ZeRO++, and SDP4Bit bring compressed communication to large-scale
data-parallel training~\citep{powersgd,tang20211,wang2023zero++,sdp4bit}, and
LoCo shows that a moving-average estimate of the compression error, applied
before quantization, keeps low-bit gradient communication accurate at
scale~\citep{loco}. AQ-SGD extends compensation from gradients to the
activations at a pipeline-parallel boundary, quantizing the change of each example's activations since its previous
visit~\citep{aqsgd}. Error feedback works in all of these settings
because the compressed data is a gradient, or changes slowly over training.
The training loop produces a new gradient for the same parameters at every
step, so an error left uncorrected at one step can be added to the gradient
of the next step, and nothing is lost. A hidden state gives no such
opportunity. It is produced once, consumed by the next layer, and never
communicated again, and at inference time there is no training loop at all.
Accordingly, error feedback applied to the activations of model-parallel
training gives no gain over plain compression~\citep{rudakov2024}, and compression of
model-parallel activations has been studied empirically across pruning,
learned, and quantized schemes~\citep{bian2024}.
In tensor parallelism the data does not repeat, but the operation does. The
all-reduce runs again at the next layer, and in a pre-normalization
transformer all its outputs feed the same residual stream
(Sec.~\ref{sec:problem}). \method{} builds its feedback on this
repetition. The reduction step exposes every rounding error exactly
(Sec.~\ref{sec:dual-ef}), the compensator adds a correction at the
following calls, and the corrections cancel the accumulated error along the
residual stream, with a bound independent of depth
(Sec.~\ref{sec:theory}). To our knowledge, \method{} is the first to place
feedback compensation inside a quantized collective for tensor parallelism.

\section{Problem Formulation}\label{sec:problem}

\subsection{The quantized all-reduce and its error}\label{subsec:qar}

An all-reduce adds the contributions of the $\TP$ GPUs in the tensor-parallel
group, and this addition cannot be done in the low-bit transport format,
because a sum of low-bit values overflows its range. The sum is
formed in high precision, and the low-bit format only moves data between GPUs,
so a quantized all-reduce returns the exact sum plus the rounding error of the
data it moved. Section~\ref{sec:quant-allreduce}
states this constraint precisely and builds the decomposition that keeps the number
of roundings as small as possible.

Let $\mathbf{x}_{k,i}$ be the contribution held by GPU $i$ at the $k$-th all-reduce
of a pass, so an exact all-reduce returns $\sum_i \mathbf{x}_{k,i}$. The quantized
all-reduce, written $\QAR$, returns
\begin{equation}\label{eq:qar}
  \hat{\mathbf{y}}_k \;=\; \QAR\big(\{\mathbf{x}_{k,i}\}_{i=1}^{\TP}\big)
  \;=\; \sum_{i=1}^{\TP} \mathbf{x}_{k,i} \;+\; \injnoise_k .
\end{equation}
The vector $\injnoise_k$ is the error the quantized all-reduce injects at site $k$,
and it is the only difference between $\hat{\mathbf{y}}_k$ and the exact sum. Its
norm is bounded by a constant $\qbound$, $\norm{\injnoise_k}\le\qbound$, set by the
bit width $\bits$, the group size $\gsz$, and the scale of the data
(Sec.~\ref{sec:quant-allreduce}). Communication
volume is proportional to $\bits$, so a $\bits$-bit all-reduce sends about a fraction
$\bits/16$ of the bytes of a 16-bit all-reduce, the most that lower precision alone
can save.

\subsection{The communication noise process}\label{subsec:noise-process}

A pre-normalization transformer keeps a hidden state and, at each attention and
MLP block, normalizes the hidden state, runs the block, and adds the result
back~\citep{preln}. In tensor-parallel form each block produces its output in pieces,
so GPU $i$ computes a partial output $\mathbf{x}_{k,i}$ and an all-reduce sums the
$\TP$ pieces into the block output. Writing $\mathbf{z}_k$ for the hidden state
entering the $k$-th such block,
\begin{equation}\label{eq:residual}
  \mathbf{z}_{k+1} \;=\; \mathbf{z}_k \;+\; \sum_{i=1}^{\TP} \mathbf{x}_{k,i}.
\end{equation}
Each transformer layer has two of these all-reduces, one for its attention block
and one for its MLP block, so a pass has $N$ of them. Qwen3-0.6B, with 28 layers, has
$N=56$.

With the quantized all-reduce the block contributes
$\hat{\mathbf{y}}_k=\sum_i \mathbf{x}_{k,i}+\injnoise_k$ instead, so
\begin{equation}\label{eq:residual-quant}
  \mathbf{z}_{k+1} \;=\; \mathbf{z}_k \;+\; \sum_{i=1}^{\TP} \mathbf{x}_{k,i}
  \;+\; \injnoise_k .
\end{equation}
The useful block output $\sum_i \mathbf{x}_{k,i}$ is unchanged and the quantized
all-reduce only adds the error $\injnoise_k$. Low-bit tensor-parallel communication
is therefore a noise process on the
computation graph, injecting an error at each all-reduce site of a pass
(Fig.~\ref{fig:noise-process}). The same process occurs every time the model
runs. A single forward pass in inference injects $N$ such errors. In training the
same injection happens on the forward and the backward pass of every step, and over
a full run it repeats across many steps, so the process acts over far more sites.
The point of injection also differs between the two passes. On the forward pass
$\injnoise_k$ joins the hidden state directly, while on the backward pass it
enters the gradient of the hidden state after the normalization, so the forward
and backward errors are corrected separately (Sec.~\ref{subsec:backward-impl}).

\begin{figure}[t]
\begin{minipage}[c]{0.600\linewidth}\centering
\begin{adjustbox}{width=\linewidth}
\begin{tikzpicture}[ecfig,rounded corners=2pt,scale=0.97]
\def\dx{2.45}
\def\tapo{0.95}   
\def\po{0.95}   
\draw[->,line width=1.0pt,pA!75!black,rounded corners=0pt] (-1.35,0) -- (5.635+\po+0.45,0);
\node[hdr,anchor=south west] at (-1.33,0.07) {hidden state $\*{z}$};
\foreach \X/\kk in {0/0,2.45/1,5.635/{N\!-\!1}}{
  \node[cell,rounded corners=1.5pt,draw=pA!65!black,line width=0.6pt,fill=pA!\FN!white,
        minimum width=1.6cm,minimum height=0.70cm,align=center,font=\figlab]
       (B) at (\X,-1.30) {block $\kk$};
  \fill[pA!75!black] (\X-\tapo,0) circle (0.045);
  \draw[tap] (\X-\tapo,0) -- (\X-\tapo,-1.30) -- (B.west);
  \draw[->,line width=0.8pt,pA!75!black] (B.east) -- (\X+\po,-1.30)
        -- node[err,pos=0.62,left=3pt,fill=white,inner sep=1pt] {$+\,\injnoise_{\kk}$} (\X+\po,-0.22);
  \node[circle,draw=pA!75!black,line width=0.7pt,fill=white,inner sep=0pt,minimum size=3.4mm]
       at (\X+\po,0) {\scriptsize$+$};
}
\node[font=\figlab] at (4.04,-1.30) {$\cdots$};
\node[note,anchor=north] at (2.30,-1.92)
  {one all-reduce per block, so a forward pass has $N$ injection sites};
\end{tikzpicture}
\end{adjustbox}
\end{minipage}\hfill
\begin{minipage}[c]{0.370\linewidth}
\caption{\textbf{Low-bit tensor-parallel communication as a noise process.} A
pre-normalization transformer adds each attention and MLP block's output to the
hidden state. In tensor-parallel form the block ends in an all-reduce, and
replacing it with the quantized all-reduce leaves the block output in place while
adding an error $\injnoise_\site$ at that site. A forward pass has $N$ such
sites, and the backward pass injects at the same sites
(Sec.~\ref{subsec:backward-impl}).}\label{fig:noise-process}
\end{minipage}
\end{figure}

\subsection{Controlling the accumulated error}\label{subsec:controlling}

The output error is not set by any single $\injnoise_k$. Once injected, an error is
carried to the output by the blocks that follow it, which respond to the
already-perturbed hidden state and amplify it. The output error is built from
the $N$ injections after this amplification.
Its size is bounded by two factors. The first is how much the network amplifies a perturbation,
which Sec.~\ref{sec:theory} bounds under one assumption on the trained network. The second is how much the injections
reinforce one another as they build up, measured by the largest partial sum of the
injected sequence. The amplification is a property of the network, so the factor a
communication primitive can change is the partial sum.

The partial sum can be kept from growing because the quantized all-reduce reveals
its own error. The residual left on GPU $i$ after the all-reduce,
$\mathbf{u}_{k,i}=\mathbf{x}'_{k,i}-\hat{\mathbf{y}}_k/\TP$, with $\mathbf{x}'_{k,i}$ the (compensated) contribution it sends into the all-reduce (Sec.~\ref{sec:dual-ef}), sums across the GPUs to
\begin{equation}\label{eq:residual-identity}
  \sum_{i=1}^{\TP} \mathbf{u}_{k,i} \;=\; -\,\injnoise_k ,
\end{equation}
so the group recovers the error it has just injected, at no extra communication. The
primitive feeds this measurement back: before the next quantization it adds a
correction $\mathbf{C}_k$, built from the errors measured at earlier sites, to the
all-reduce input. Through the skip connection the correction enters the hidden state
beside the injection, so the net error added at the site is
\begin{equation}\label{eq:net-error}
  \bm{\rho}_k \;=\; \injnoise_k \;+\; \mathbf{C}_k .
\end{equation}
Because $\mathbf{C}_k$ is built from the earlier injected errors, it cancels their
buildup in the running sum, so the partial sums stay bounded as the pass proceeds.
This is error feedback, and Sec.~\ref{sec:dual-ef} gives the specific filter.

The factor to control is then the largest partial sum of these net errors. Writing
\begin{equation}\label{eq:partial-sum}
  \mathbf{S}_t \;=\; \sum_{k \le t} \bm{\rho}_k ,
\end{equation}
the requirement on the primitive is that $\max_t \norm{\mathbf{S}_t}$ does not grow with
the number of all-reduces. Without correction the partial sum is
$\sum_{k \le t} \injnoise_k$, whose largest value can reach $N\qbound$ over a
forward pass. This growth causes the accuracy loss of a plain 4-bit all-reduce
in inference and its instability in training. Since the amplification is
shared, inference and training both need a partial-sum bound that stays flat as
the number of sites grows. Section~\ref{sec:dual-ef} builds the
correction $\mathbf{C}_k$. Section~\ref{sec:theory} proves that the partial sum
$\max_t \norm{\mathbf{S}_t}$ stays bounded independently of $N$ for any error sequence on the forward pass, that its backward counterpart stays bounded in mean square at every site, and
that its first-order effect on the output, measured against the hidden state, stays at the level of a single rounding error. Section~\ref{sec:experiments} evaluates the
accuracy.

\section{Quantized P2P All-Reduce}\label{sec:quant-allreduce}

\method{} replaces the tensor-parallel (TP) all-reduce with a communication primitive
that moves data between GPUs in low precision but forms the sum in high precision.
Two properties of this design are used throughout the paper. First, the reduction
is carried out in high precision and introduces no error of its own, so the
all-reduce output differs from the exact sum only because the transported data is
quantized. Second, every value passes through the quantizer at most twice,
regardless of how many GPUs participate, so the number of roundings at each site
does not grow with the tensor-parallel degree $\TP$.
The rest of the section describes an implementation that realizes both
properties at low latency.

\subsection{A forced decomposition}\label{subsec:cannot-sum}

Turning the lower precision of Sec.~\ref{sec:problem} into a faster all-reduce
runs into a constraint. The all-reduce must add the contributions of the $\TP$
GPUs, and this addition cannot be carried out in the low-bit transport format. Two
4-bit values can already add up to a number that no longer fits in four bits, so a
low-bit accumulator overflows. Even where 4-bit arithmetic is available, four bits
are too few to hold a running partial sum without large rounding error, and low-bit
accumulation is in any case not widely supported across accelerators. The sum is
therefore formed in high precision, and the low-bit format only moves data between
GPUs. Carried out naively, a quantized all-reduce becomes a repeated cycle of
dequantize, add, and requantize, and each requantization injects one of the errors
$\injnoise_{\site}$ of Sec.~\ref{sec:problem}. The rest of this section keeps the
number of these requantizations as small as possible.

We first recall the standard structure of an all-reduce. Each GPU, one rank of the
TP group, holds a tensor split into $\TP$ equal segments, one associated with each
GPU. A \emph{reduce-scatter} sums the segments across GPUs so that GPU $j$ is left
with the sum of the $j$-th segments. An \emph{all-gather} then copies every summed
segment to every GPU, after which all GPUs hold the complete result. This
two-phase form is the bandwidth-optimal way to build an
all-reduce~\citep{patarasuk2009bandwidth}. This section treats a single all-reduce and drops the site index $\site$ of
Sec.~\ref{sec:problem}, writing $\*x_i$ for GPU $i$'s contribution $\*x_{\site,i}$
and $\*x_i^{\,j}$ for its $j$-th segment,
\begin{equation}\label{eq:decomp}
\begin{aligned}
  \*s^{\,j} &= \sum_{i=1}^{\TP} \*x_i^{\,j},\\[2pt]
  \AllReduce\big(\{\*x_i\}\big) &= \AllGather\big(\*s^{\,1},\dots,\*s^{\,\TP}\big).
\end{aligned}
\end{equation}
Because the segment sums $\*s^{\,j}$ must be formed in high precision, we implement
the reduce-scatter with an \emph{all-to-all}, the collective in which every GPU
sends a different piece to every other GPU. GPU $i$ sends its $j$-th segment to the
owner GPU $j$, which then holds the $\TP$ copies it needs and adds them locally. For
each segment it does not own, GPU $i$ quantizes that segment to $\bits$ bits and
writes it directly into GPU $j$'s buffer over the peer-to-peer link. The segment a
GPU owns is never sent over the link, so it stays in full precision and enters the
sum without any quantization error, and only the $\TP-1$ copies arriving from peers
are quantized. GPU $j$ dequantizes those $\TP-1$ copies, adds them to its own
full-precision segment in high precision to obtain $\*s^{\,j}$, and quantizes
$\*s^{\,j}$ once more for transport. The all-gather sends each quantized
$\*s^{\,j}$ back to the other GPUs, and every GPU dequantizes what it receives. GPU
$j$ reconstructs its own copy of $\*s^{\,j}$ from that same $\bits$-bit code rather
than keeping the high-precision value, so that its result is bit-for-bit identical
to the copies the other GPUs receive. Figure~\ref{fig:decomposition}(a) illustrates
the process for $\TP=4$, and Fig.~\ref{fig:decomp-detailed} in
Appendix~\ref{app:abi} works the same case through with numbers.

Among bandwidth-optimal schemes, this decomposition also uses the fewest
quantization steps. A ring all-reduce
passes partial sums from GPU to GPU and requantizes each value once per hop,
$\order{\TP}$ times in total. In our decomposition a value is quantized only
when it crosses the link, once in the all-to-all and once in the all-gather,
so each value is rounded at most twice however many GPUs participate
(Table~\ref{tab:vs-ring}). The two-phase skeleton
itself is standard and is shared by quantized systems such as
ZeRO++~\citep{wang2023zero++}, MegaScale-MoE~\citep{megascalemoe}, and
FlashComm~\citep{flashcomm,flashcommv2}. We
adopt it because it minimizes the roundings. On
top of this, the compensator keeps the error accumulated over a whole forward
pass at about twice the error of a single site, however deep the network
(Secs.~\ref{sec:dual-ef} and~\ref{sec:theory}).

\begin{figure*}[t]
\centering
\resizebox{\textwidth}{!}{%
\begin{tikzpicture}[
  x=1cm,y=1cm,>=Stealth,
  every node/.style={font=\footnotesize,text=ink},
  tinytext/.style={font=\scriptsize,text=muted},
  heading/.style={font=\footnotesize\bfseries,text=ink},
  flow/.style={->,line width=0.8pt,draw=ink!70},
  op/.style={rounded corners=1.5pt,draw=grid,fill=soft,inner sep=2pt},
  slot/.style={rounded corners=1.5pt,draw=grid,fill=soft,
               minimum width=1.23cm,minimum height=0.37cm,inner sep=0pt,
               font=\scriptsize},
  firstslot/.style={slot,draw=tileaccent,fill=tileaccent!16,
                    text=tileaccent!50!black,font=\scriptsize\bfseries}
]
\def\segcol#1{\ifcase#1 segzero\or segone\or segtwo\or segthree\fi}

\node[heading,anchor=west] at (0,0.06) {(a) 4-bit P2P all-reduce: route $\rightarrow$ accumulate $\rightarrow$ share};
\node[tinytext,anchor=east] at (17.55,0.06) {$P=4$ shown};
\draw[grid,line width=0.45pt] (0,-0.16)--(17.55,-0.16);

\node[heading] at (2.08,-0.48) {Input on each GPU};
\node[heading] at (8.06,-0.48) {Owner-side high-precision sum};
\node[heading] at (14.78,-0.48) {Identical output};

\foreach \j/\c in {0/segzero,1/segone,2/segtwo,3/segthree}{
  \node[tinytext] at ({1.32+0.51*\j},-0.79) {$\j$};
}
\foreach \i in {0,1,2,3}{
  \node[tinytext,anchor=east] at (1.03,{-1.13-0.45*\i}) {GPU $\i$};
  \foreach \j/\c in {0/segzero,1/segone,2/segtwo,3/segthree}{
    \ifnum\i=\j
      \draw[draw=\c!75!black,fill=\c!72,line width=0.9pt,rounded corners=0.5pt]
        ({1.08+0.51*\j},{-1.32-0.45*\i}) rectangle ++(0.46,0.38);
    \else
      \draw[draw=\c!70!black,fill=\c!27,line width=0.45pt,rounded corners=0.5pt]
        ({1.08+0.51*\j},{-1.32-0.45*\i}) rectangle ++(0.46,0.38);
    \fi
  }
}

\draw[flow] (3.31,-1.79)--(4.43,-1.79);
\node[font=\scriptsize\bfseries,fill=white,inner sep=1pt] at (3.87,-1.49) {Push};

\foreach \j/\c in {0/segzero,1/segone,2/segtwo,3/segthree}{
  \node[tinytext,anchor=east] at (5.41,{-1.13-0.45*\j}) {owner $\j$};
  \foreach \i in {0,1,2,3}{
    \ifnum\i=\j
      \draw[draw=\c!75!black,fill=\c!72,line width=0.9pt,rounded corners=0.5pt]
        ({5.52+0.49*\i},{-1.32-0.45*\j}) rectangle ++(0.44,0.38);
    \else
      \draw[dashed,draw=\c!82!black,fill=\c!11,line width=0.55pt,rounded corners=0.5pt]
        ({5.52+0.49*\i},{-1.32-0.45*\j}) rectangle ++(0.44,0.38);
    \fi
  }
  \draw[flow,line width=0.55pt] (7.64,{-1.13-0.45*\j})--(8.05,{-1.13-0.45*\j});
  \node[circle,draw=ink!65,fill=white,inner sep=0pt,minimum size=0.40cm,
        font=\scriptsize] at (8.32,{-1.13-0.45*\j}) {$\Sigma$};
  \draw[flow,line width=0.55pt] (8.57,{-1.13-0.45*\j})--(9.20,{-1.13-0.45*\j});
  \draw[draw=\c!75!black,fill=\c!58,line width=0.65pt,rounded corners=0.5pt]
        (9.32,{-1.32-0.45*\j}) rectangle ++(0.47,0.38);
}
\foreach \i in {0,1,2,3}{
  \node[tinytext] at ({5.74+0.49*\i},-0.79) {$\i$};
}
\node[tinytext] at (8.33,-0.79) {FP};
\node[tinytext] at (9.55,-0.79) {$Q_4$};

\draw[flow] (9.92,-1.79)--(12.38,-1.79);
\node[font=\scriptsize\bfseries,fill=white,inner sep=1pt] at (11.13,-1.48) {Gather};
\node[tinytext,align=center] at (11.13,-2.17)
  {$DQ_4$ on all GPUs};

\foreach \i in {0,1,2,3}{
  \node[tinytext,anchor=east] at (13.54,{-1.13-0.45*\i}) {GPU $\i$};
  \foreach \j/\c in {0/segzero,1/segone,2/segtwo,3/segthree}{
    \draw[draw=\c!70!black,fill=\c!40,line width=0.5pt,rounded corners=0.5pt]
      ({13.65+0.51*\j},{-1.32-0.45*\i}) rectangle ++(0.46,0.38);
  }
}
\draw[draw=grid,fill=soft,rounded corners=2pt] (0.08,-3.35) rectangle (17.47,-2.99);
\draw[draw=segzero!75!black,fill=segzero!72,line width=0.7pt]
  (0.27,-3.24) rectangle ++(0.22,0.16);
\node[tinytext,anchor=west] at (0.58,-3.16) {local FP segment};
\draw[dashed,draw=segzero!82!black,fill=segzero!11,line width=0.55pt]
  (3.45,-3.24) rectangle ++(0.22,0.16);
\node[tinytext,anchor=west] at (3.76,-3.16) {remote 4-bit code};
\node[font=\scriptsize\bfseries,text=ink,anchor=west] at (7.20,-3.16)
  {At most 2 quantizations / element, independent of $P$};

\node[heading,anchor=west] at (0,-3.79) {(b) Tiled four-stage pipeline};
\node[tinytext,anchor=east] at (17.55,-3.79) {time $\rightarrow$ (schematic slots)};
\draw[grid,line width=0.45pt] (0,-4.00)--(17.55,-4.00);

\def\sx{5.42}\def\step{1.54}
\foreach \k in {0,1,2,3,4,5,6}{
  \node[tinytext] at ({\sx+\step*\k},-4.29) {$\k$};
}
\begin{scope}[on background layer]
  \draw[draw=tileaccent!24,fill=tileaccent!6,rounded corners=2pt]
     ({\sx+3*\step-0.71},-6.79) rectangle ({\sx+3*\step+0.71},-4.47);
\end{scope}
\foreach \r/\stage/\detail in {
  0/Push/$Q_4$ + P2P write,
  1/Reduce/$DQ_4$ + FP sum + $Q_4$,
  2/Gather/P2P share + owner $DQ_4$,
  3/Tail/peer $DQ_4$
}{
  \node[heading,anchor=west] at (0.34,{-4.80-0.60*\r}) {\stage};
  \node[tinytext,anchor=west] at (1.68,{-4.80-0.60*\r}) {\detail};
  \foreach \t in {0,1,2,3}{
    \ifnum\t=0
      \node[firstslot] at ({\sx+\step*(\t+\r)},{-4.80-0.60*\r}) {$t_{\t}$};
    \else
      \node[slot] at ({\sx+\step*(\t+\r)},{-4.80-0.60*\r}) {$t_{\t}$};
    \fi
  }
}
\foreach \r in {0,1,2}{
  \pgfmathtruncatemacro{\nr}{\r+1}
  \draw[->,draw=tileaccent,line width=0.75pt]
    ({\sx+\step*\r+0.54},{-4.80-0.60*\r-0.15})
    -- ({\sx+\step*\nr-0.54},{-4.80-0.60*\nr+0.15});
}
\node[tinytext,anchor=west] at (0.34,-7.19)
  {Cross-GPU readiness: peer-visible flags $\quad\vert\quad$ Within-GPU ordering: CUDA events};
\node[font=\scriptsize\bfseries,text=tileaccent!65!black,anchor=east] at (17.45,-7.19)
  {4 tiles active concurrently};
\end{tikzpicture}%
}
\caption{\textbf{The quantized P2P all-reduce.}
(a) For $P=4$, each GPU sends only its non-owned segments as 4-bit codes to their owners. Each owner dequantizes the received codes and sums them with its own unquantized segment in high precision. The reduced segment is quantized once for the all-gather, and every GPU, including the owner, dequantizes the same code to obtain identical outputs. Each element is quantized at most twice, independently of $P$.
(b) Tiles are staggered across push, reduce, gather, and tail. Peer-visible flags signal cross-GPU readiness and CUDA events order work within a GPU, allowing quantization and transfers of different tiles to overlap.}
\label{fig:decomposition}
\end{figure*}

\begin{table}[t]
\begin{minipage}[c]{0.370\linewidth}
\caption{\textbf{Quantized P2P all-reduce against a full-precision ring
all-reduce} on $\TP$ GPUs: how many times each element is (re)quantized, the
source of the injected noise $\injnoise_{\site}$, and the bytes on the wire,
with $\bits$ the communication bit width.}\label{tab:vs-ring}
\end{minipage}\hfill
\begin{minipage}[c]{0.590\linewidth}\centering
\footnotesize
\setlength{\tabcolsep}{4.5pt}
\begin{adjustbox}{max width=\linewidth}
\begin{tabular}{@{}lccc@{}}
\toprule
 & \textbf{Where the sum} & \textbf{Quant.\ events} & \textbf{Wire} \\
 & \textbf{is formed}     & \textbf{per element}    & \textbf{format} \\
\midrule
16-bit ring AR        & low or high prec.       & $0$            & $16$-bit \\
Naive low-bit ring  & (invalid in low-bit)    & $\order{\TP}$  & $\bits$-bit \\
\textbf{Ours (P2P)} & owner, high prec.       & $\mathbf{2}$   & $\bits$-bit \\
\bottomrule
\end{tabular}
\end{adjustbox}
\end{minipage}
\end{table}

\subsection{Pipelined execution and synchronization}\label{subsec:pipeline}

The all-reduce lies between two dependent matrix multiplications, so its data
path cannot overlap with them, and the room for overlap on this path is inside
the primitive, between its quantization work and its data transfers.
To expose it, we split each segment into small contiguous blocks, called
\emph{tiles}, and run the tiles through a four-stage pipeline. The stages are push,
reduce, gather, and tail. Push quantizes the remote segments and writes them to
their owners. Reduce dequantizes the received copies, adds them to the owned
segment, and requantizes the result. Gather sends this requantized segment to the
other GPUs and, on the owning GPU, dequantizes it to produce that GPU's own portion
of the output. Tail dequantizes the reduced segments arriving from the other
owners. Because the tiles pass through the stages independently, one tile can be
quantized while an earlier tile is still traveling over NVLink
(Fig.~\ref{fig:decomposition}(b)), so the quantization work of one tile is hidden
behind the transfers of the tiles around it. The stream and event schedule is given
in Appendix~\ref{app:abi}.

The four stages run on different GPUs and must be kept in order across them.
Routing that ordering through the host CPU would add latency and serialize the
pipeline, so we synchronize the GPUs directly over their peer-to-peer (P2P) links.
Each GPU keeps a small flag in memory that its peers can read over the P2P link. A
stage advances the flag after it finishes a tile, and the consuming stage on
another GPU waits until the flag reaches the value it expects before it starts.
Ordering within a GPU is handled by CUDA events, and ordering across GPUs by these
flags.

\looseness=-1 Signaling after each tile, rather than after the whole tensor, lets communication
and quantization use different block sizes, which they need because they favor
opposite choices. Peer-to-peer transfers reach their peak bandwidth with a few
large blocks, whereas the quantization kernels are most efficient with many small
blocks. An implementation that exchanges the tensor only after it has been staged
in full must commit to a single block size for both. Tracking
readiness at the tile level removes this coupling, because a consuming stage
waits only for the specific tiles it needs rather than for the entire tensor. The exact
flag values are given in Appendix~\ref{app:abi}.

\subsection{Quantization}\label{subsec:quant}

The quantizer must not overflow the $\bits$-bit codes and must preserve model
quality. We quantize group-wise. Each group of $\gsz$ consecutive values is sent
as $\bits$-bit codes together with a small set of per-group parameters.

A symmetric group tracks its absolute maximum and uses a single scale $s$,
\begin{equation}\label{eq:qsym}
\begin{aligned}
  q(x) &= \mathrm{clip}\!\big(\mathrm{round}(x/s),\,-(2^{\bits-1}\!-\!1),\,2^{\bits-1}\!-\!1\big),\\[2pt]
  s &= \max\nolimits_k |x_k| \,\big/\, (2^{\bits-1}-1).
\end{aligned}
\end{equation}
The scale sends the largest magnitude to the code $2^{\bits-1}\!-\!1$, so the most
negative code $-2^{\bits-1}$ is never used. Leaving it out keeps the levels
symmetric about zero, which we found to be more stable at 4~bits.

Activations are often not centered on zero, so a symmetric grid, whose levels are
symmetric about zero, spends much of its range on values that never appear. An
asymmetric group instead fits an affine map to the observed range. This map has
strictly more freedom, so it should be at least as accurate as the symmetric
grid. At 4~bits a naive asymmetric map often is not, and the reason lies in its
zero-point. The map is set by a scale $s$ and an integer zero-point $z$,
\begin{equation}\label{eq:qasym}
\begin{aligned}
  s &= \frac{\max_k x_k - \min_k x_k}{2^{\bits}-1},\\[2pt]
  q(x) &= \mathrm{clip}\!\big(\mathrm{round}(x/s) + z,\,0,\,2^{\bits}-1\big),
\end{aligned}
\end{equation}
and dequantizes as $\hat{x}=s\,(q-z)$. The integer $z$ is the code that maps to
$0$, so $0$ is represented exactly. A naive map instead anchors the grid at the
group minimum, leaving the level nearest $0$ at the non-integer position
$-\min_k x_k/s$, so $0$ falls between two codes. Since activation distributions
are sharply peaked at zero, this rounds the densest part of the distribution, and
with only sixteen levels the error there outweighs the finer resolution the
asymmetric map was meant to gain.

\looseness=-1 Our policy is zero-aware. When a group straddles zero we round its
zero-point to the nearest integer. When a group lies entirely on one side of zero
the value $0$ never occurs, so we keep the naive map and spend the whole grid on
the values that are present. Groups in between blend smoothly between the two. This change alone removes
the 4-bit regression, and
asymmetric quantization then outperforms the symmetric
policy (Sec.~\ref{sec:experiments}).

\looseness=-1 We expose only the two parameters that matter, the bit width $\bits\in\{4,8\}$ and
the group size $\gsz$, and otherwise keep both policies fixed. The per-group storage is
compact, one value for symmetric and a scale and a zero-point for
asymmetric, and a backward-pass scaling prevents small gradient groups
from underflowing it (Sec.~\ref{subsec:backward-impl}).

\section{Dual Error Feedback}\label{sec:dual-ef}

The quantized all-reduce injects the error $\injnoise_k$ of Eqn.~\eqref{eq:qar} at
every site, with $\norm{\injnoise_k}\le\qbound$. Left uncorrected, these errors
accumulate, and the partial sum $\mathbf{S}_t$ of Eqn.~\eqref{eq:partial-sum}, which
governs the output error, grows to order $\Nsite\qbound$ over a pass
(Sec.~\ref{subsec:controlling}). We remove this growth with an error-feedback
compensator that realizes the correction $\mathbf{C}_k$ of Eqn.~\eqref{eq:net-error}.
It recovers each injected error, summed over the group, from the output of the
all-reduce and feeds a running estimate of it back into the next all-reduce, so the injected errors cancel in
$\mathbf{S}_t$ rather than accumulate. Section~\ref{subsec:compensator} defines the
compensator and Sec.~\ref{subsec:suppression} shows, term by term, why it keeps
$\mathbf{S}_t$ bounded.

\subsection{Two-timescale error-feedback compensator}\label{subsec:compensator}

On each GPU the compensator keeps two moving averages, applies a blend of them as a
correction before each all-reduce, and refreshes them from the residual it then
forms. The update at site $\site$ on GPU $i$ runs as follows.

It first forms a correction from its current averages and adds it to the
contribution, before the all-reduce quantizes it,
\begin{equation}\label{eq:apply}
  \mathbf{c}_{k,i}=\eta\big(a\,\mathbf{m}^{(1)}_{k-1,i}+(1-a)\,\mathbf{m}^{(2)}_{k-1,i}\big),
  \qquad
  \mathbf{x}'_{k,i}=\mathbf{x}_{k,i}+\mathbf{c}_{k,i},
\end{equation}
with blend weight $a\in[0,1]$ and gain $\eta\in[0,1]$. The all-reduce acts on the
compensated contribution $\mathbf{x}'_{k,i}$ and returns
$\hat{\mathbf{y}}_k=\sum_i\mathbf{x}'_{k,i}+\injnoise_k$ (Eqn.~\eqref{eq:qar}). From it
each GPU $i$ forms the residual
$\mathbf{u}_{k,i}=\mathbf{x}'_{k,i}-\hat{\mathbf{y}}_k/\TP$ of
Sec.~\ref{subsec:controlling}, whose two operands are already on the device, so it
costs no communication, and whose group sum recovers the injected error,
$\sum_i\mathbf{u}_{k,i}=-\injnoise_k$ (Eqn.~\eqref{eq:residual-identity}). The two
averages are then refreshed with this residual, at decay rates
$\beta_1,\beta_2\in[0,1)$,
\begin{equation}\label{eq:ema}
  \mathbf{m}^{(j)}_{k,i}=\beta_j\,\mathbf{m}^{(j)}_{k-1,i}+(1-\beta_j)\,\mathbf{u}_{k,i},
  \qquad j=1,2.
\end{equation}
Equation~\eqref{eq:apply} is the error-feedback step. Summed over the group, the
averages estimate the errors injected at earlier sites, and feeding them back into
the input corrects those errors rather than passing them on. Each GPU stores and updates its own averages, so the
compensator adds no communication, and summed over the group the corrections give the
aggregate $\mathbf{C}_k=\sum_i\mathbf{c}_{k,i}$ of Eqn.~\eqref{eq:net-error}. The
averages are reset to zero at the start of each pass. One pair serves all sites of
a pass, so the state costs two tensors of the size of the contribution per pass
direction. Algorithm~\ref{alg:dual} lists
the update.

\begin{algorithm}[t]
\caption{Compensated all-reduce at site $\site$ (GPU $i$)}\label{alg:dual}
\small
\DontPrintSemicolon
\KwIn{contribution $\mathbf{x}_{k,i}$, state $\mathbf{m}^{(1)},\mathbf{m}^{(2)}$, and parameters $\beta_1,\beta_2,a,\eta$}
\KwOut{block output $\hat{\mathbf{y}}_k$}
\BlankLine
$\mathbf{c}\leftarrow\eta\big(a\,\mathbf{m}^{(1)}+(1-a)\,\mathbf{m}^{(2)}\big)$\tcp*{correction}
$\mathbf{x}'\leftarrow\mathbf{x}_{k,i}+\mathbf{c}$\tcp*{apply}
$\hat{\mathbf{y}}_k\leftarrow\QAR\big(\{\mathbf{x}'_{k,j}\}_j\big)$\tcp*{all-reduce}
$\mathbf{u}\leftarrow\mathbf{x}'-\hat{\mathbf{y}}_k/\TP$\tcp*{residual}
$\mathbf{m}^{(1)}\leftarrow\beta_1\mathbf{m}^{(1)}+(1-\beta_1)\,\mathbf{u}$\tcp*{update}
$\mathbf{m}^{(2)}\leftarrow\beta_2\mathbf{m}^{(2)}+(1-\beta_2)\,\mathbf{u}$\tcp*{update}
\end{algorithm}

\subsection{Suppression of error accumulation}\label{subsec:suppression}

We first write the correction in closed form. Unrolling Eqn.~\eqref{eq:ema} from the
reset gives $\mathbf{m}^{(j)}_{k,i}=(1-\beta_j)\sum_{s\le k}\beta_j^{\,k-s}\mathbf{u}_{s,i}$,
so each average is a decaying sum of past residuals. Substituting into
Eqn.~\eqref{eq:apply}, summing over the group, and using
$\sum_i\mathbf{u}_{s,i}=-\injnoise_s$ gives the correction as a weighted sum of the
earlier injected errors,
\begin{equation}\label{eq:kernel}
\begin{aligned}
  \mathbf{C}_k&=-\eta\sum_{s<k}w_{k-1-s}\,\injnoise_s,\\
  w_r&:=a(1-\beta_1)\beta_1^{\,r}+(1-a)(1-\beta_2)\beta_2^{\,r},
\end{aligned}
\end{equation}
where $w_r$ is the weight placed on an error of age $r$, and the weights sum to one,
$\sum_{r\ge0}w_r=1$.

We can now read off how much of each injected error survives in the partial sum.
Putting $\bm{\rho}_k=\injnoise_k+\mathbf{C}_k$ into
$\mathbf{S}_t=\sum_{k\le t}\bm{\rho}_k$ (Eqn.~\eqref{eq:partial-sum}) and collecting the
terms in each $\injnoise_s$,
\begin{equation}\label{eq:coeff}
  \mathbf{S}_t=\sum_{s\le t}\phi_{t-s}\,\injnoise_s,
  \qquad
  \phi_m:=1-\eta\sum_{r=0}^{m-1}w_r.
\end{equation}
The coefficient $\phi_m$ is the fraction of an injected error that still remains in the
partial sum $m$ sites after it enters. It starts at $\phi_0=1$ and decreases with age,
since $\sum_{r=0}^{m-1}w_r$ climbs toward $\sum_{r\ge0}w_r=1$. At gain $\eta=1$ it
decreases to zero. An old error is then subtracted away and eventually leaves
$\mathbf{S}_t$, so the partial sum carries only a bounded window of recent errors.
As a result, $\max_t\norm{\mathbf{S}_t}$ stays bounded as the pass lengthens, in place of the
order-$\Nsite\qbound$ growth of the uncorrected sum $\sum_{k\le t}\injnoise_k$
(Sec.~\ref{subsec:controlling}). Section~\ref{sec:theory} turns
Eqn.~\eqref{eq:coeff} into the exact bound, and treats the case $\eta<1$, where
$\phi_m$ settles at $1-\eta$ and a fixed share of each error persists.

The two averages, with decay rates $\beta_1$ and $\beta_2$, give the correction two
time scales. A single average with decay $\beta$ weights past residuals by
$(1-\beta)\beta^r$, which shrink by the factor $\beta$ each site, so it
spans about $1/(1-\beta)$ sites. The decay rate $\beta_1$ sets a longer memory of
about $1/(1-\beta_1)$ sites, and $\beta_2$ sets a shorter memory of about
$1/(1-\beta_2)$ sites. The longer one estimates the part of the injected error that
stays similar from site to site, and the shorter one adapts to the part that changes
quickly. One average alone cannot do both. Combining moving averages at several time scales in this way
is a known technique for accelerating first-order
optimization~\citep{ademamix,xie2022adan}. The two decay rates
and the blend $a$ are studied in Sec.~\ref{sec:experiments}.

\subsection{Fusing the compensator into the all-reduce}\label{subsec:fusion}

The compensator of Sec.~\ref{subsec:compensator} runs two steps at every
site, the apply step $\*x'=\*x+\*c$ of Eqn.~\eqref{eq:apply} before the
all-reduce and, after it returns $\hat{\*y}$, the update step of
Eqn.~\eqref{eq:ema} on the residual $\*u=\*x'-\hat{\*y}/\TP$. Here and in
the rest of this section we drop the site and GPU indices. Each step reads and
writes tensors as large as $\*x$ itself, so implemented as separate kernels the
two steps add full memory passes of their own. This subsection shows that the apply step can instead run inside the
entry pass of the quantized all-reduce of Sec.~\ref{sec:quant-allreduce} and the
update beside the next matrix multiplication, at a small added latency
(Table~\ref{tab:latency}).
Section~\ref{subsec:backward-impl} then describes the deployment of the
compensator on the backward pass.

A direct implementation surrounds the all-reduce with two kernels, an apply
kernel that forms $\*x'=\*x+\*c$ and writes it to a buffer of its own, and an
update kernel that forms $\*u=\*x'-\hat{\*y}/\TP$ and refreshes $\*m^{(1)}$,
$\*m^{(2)}$. The two kernels are costly in two ways. They move about ten
tensor-sized arrays through memory, while the all-reduce itself reads $\*x$
once and writes $\hat{\*y}$ once. They also stretch the critical
path: the apply kernel must deliver $\*x'$ before the all-reduce can start, and
the update kernel needs $\hat{\*y}$, so it starts only after the all-reduce
returns. The all-reduce is bandwidth bound, so the extra traffic cannot hide
behind computation, and the two effects together place the full cost of
compensation in the latency of every site.

On entry the all-reduce already moves the full tensor through registers. Every
element of $\*x$ is loaded, the quantization statistics are formed, and the value
is quantized for transport (Sec.~\ref{subsec:pipeline}). The apply step needs $\*x$, which
the entry pass is holding. The update step needs $\hat{\*y}$, which the next
matrix multiplication only reads, so the update can run while that
multiplication does.

The apply step is therefore fused into the entry pass. As each element of $\*x$
arrives in registers, the kernel also loads $\*m^{(1)}$ and $\*m^{(2)}$, forms
$\*c=\eta\,(a\,\*m^{(1)}+(1-a)\,\*m^{(2)})$ and $\*x'=\*x+\*c$ in registers, and
only then computes the group statistics and quantizes. The tensor $\*x'$ exists
only in registers and is never written to memory. This is also the only place
where the step can go. The all-reduce quantizes $\*x'$, not $\*x$
(Sec.~\ref{subsec:compensator}), so the quantization statistics must be taken on
$\*x'$, and the entry load is the last moment before they are formed.

The update step is taken off the critical path. Once the exit pass has written
$\hat{\*y}$, the update runs on a separate stream alongside the next matrix
multiplication. It recomputes $\*x'=\*x+\*c$ from $\*x$, $\*m^{(1)}$,
$\*m^{(2)}$, forms $\*u=\*x'-\hat{\*y}/\TP$, and applies
$\*m^{(j)}\leftarrow\beta_j\,\*m^{(j)}+(1-\beta_j)\,\*u$ in place. The apply step
of the next site waits for it, so it always reads current averages. Every GPU, owner included, dequantizes $\hat{\*y}$ from the same
transported code (Sec.~\ref{subsec:cannot-sum}), so the residuals across the
group are formed from a common $\hat{\*y}$, as the identity
$\sum_i\*u_i=-\injnoise$ of Eqn.~\eqref{eq:residual-identity} requires.

\looseness=-1 After fusion the apply kernel and the $\*x'$ buffer are gone, and
neither step runs as a serial phase on the critical path. The apply step
inherits the pipelining of Sec.~\ref{subsec:pipeline} and runs tile by tile
inside the entry stage, overlapped with the communication of the other tiles,
and the update runs beside the next matrix multiplication. The extra traffic
falls from about ten tensor-sized arrays to about eight, reading $\*m^{(1)}$,
$\*m^{(2)}$ at entry, reading $\*x$, $\hat{\*y}$, $\*m^{(1)}$, $\*m^{(2)}$ in the
update, and writing $\*m^{(1)}$, $\*m^{(2)}$ back. Six of the eight are the cost
any implementation pays to consult the averages and refresh them, and the update
traffic overlaps with the matrix multiplication. The added
arithmetic hides behind the memory accesses of the entry pass and of the update, and
Table~\ref{tab:latency} measures the resulting cost. Fusion changes when and
where each step executes,
not what it computes, and $\*m^{(1)}$, $\*m^{(2)}$ remain the only state,
updated in place.

\subsection{The backward pass}\label{subsec:backward-impl}

Training runs the all-reduce in both directions. The compensator is unchanged on
the backward pass, with the same recurrence and the same coefficients, but its
working conditions change in two ways.

First, backward errors enter the computation graph at a different point. A
forward site delivers its output directly to the residual addition, while the
output of a backward site reaches the gradient stream through the Jacobian of
the normalization (Sec.~\ref{subsec:noise-process}), the setting in which
Appendix~\ref{app:proofs} proves the backward source bound. Residuals of the
two directions therefore have different coordinates and different
magnitudes, and a single pair $\*m^{(1)}$, $\*m^{(2)}$ fed by both streams
would form a correction that matches neither pass. Each direction keeps its
own pair and runs the recurrence of Sec.~\ref{subsec:compensator} along its
own sequence of sites.

Second, gradients are small enough to underflow the storage of the quantizer's
scale. The quantizer of Sec.~\ref{subsec:quant} stores one scale $s$ per
group in half precision. Backward gradients are orders of magnitude
smaller than activations, so a group scale can fall below the smallest
positive half-precision number and load back as zero. Every value in that
group then dequantizes to zero. The remedy is a change of units around the
communication. Before the quantized all-reduce, every GPU multiplies its
gradient by the same power-of-two factor, chosen from the largest gradient
magnitude across the tensor-parallel group with a one-scalar all-reduce. After the all-reduce, it divides the result
by that factor. The two steps of the compensator run outside the scaling, so
$\*m^{(1)}$, $\*m^{(2)}$ stay in original gradient units. The scaling is
exact and leaves the analysis of Secs.~\ref{subsec:compensator}
and~\ref{sec:theory} unchanged, as Appendix~\ref{app:abi} verifies.

\section{Theoretical Guarantees}\label{sec:theory}

\subsection{Decomposition of the output error}\label{subsec:decomposition}

The quantized all-reduce injects an error at each of the $\Nsite$ sites of a pass.
Let $\mathbf{z}_k$ be the hidden state carried by the exact all-reduce and
$\hat{\mathbf{z}}_k$ the hidden state carried by the quantized one, so that
\begin{equation}\label{eq:two-rec}
  \mathbf{z}_{k+1}=\mathbf{z}_k+F_k(\mathbf{z}_k),
  \qquad
  \hat{\mathbf{z}}_{k+1}=\hat{\mathbf{z}}_k+F_k(\hat{\mathbf{z}}_k)+\bm{\rho}_k,
\end{equation}
where $F_k$ is the block at site $k$, the map that normalizes the hidden state and
applies the attention or MLP transformation, and
$\bm{\rho}_k=\injnoise_k+\mathbf{C}_k$ is the net error of Eqn.~\eqref{eq:net-error}.
Write $\mathbf{E}_k=\hat{\mathbf{z}}_k-\mathbf{z}_k$ for the deviation between the
two, with $\mathbf{E}_0=\mathbf{0}$. Subtracting the recursions of
Eqn.~\eqref{eq:two-rec},
\begin{equation}\label{eq:err-exact}
  \mathbf{E}_{k+1}=\mathbf{E}_k+\bm{\psi}_k+\bm{\rho}_k,
  \qquad
  \bm{\psi}_k=F_k(\hat{\mathbf{z}}_k)-F_k(\mathbf{z}_k).
\end{equation}
Equation~\eqref{eq:err-exact} is exact, and it separates the two roles of a site. The
term $\bm{\psi}_k$ propagates the deviation accumulated so far, and adds nothing of its
own, since $\mathbf{E}_k$ determines it. The term $\bm{\rho}_k$ injects the new error
of site $k$. Expanding $F_k$ about $\mathbf{z}_k$,
\begin{equation}\label{eq:err-expand}
  \bm{\psi}_k=\mathbf{J}_k\mathbf{E}_k+\mathbf{r}_k,
  \qquad
  \norm{\mathbf{r}_k}\le\tfrac12 H_k\norm{\mathbf{E}_k}^2 ,
\end{equation}
with $\mathbf{J}_k$ the Jacobian of $F_k$ at $\mathbf{z}_k$ and $H_k$ a bound on its
second derivative on the segment from $\mathbf{z}_k$ to $\hat{\mathbf{z}}_k$. Substituting Eqn.~\eqref{eq:err-expand} into
Eqn.~\eqref{eq:err-exact} and unrolling from $\mathbf{E}_0=\mathbf{0}$,
\begin{equation}\label{eq:carry}
\begin{aligned}
  \mathbf{E}_{\Nsite}&=\sum_{t=0}^{\Nsite-1}\bm{\Phi}_{\Nsite,t+1}\big(\bm{\rho}_t+\mathbf{r}_t\big),\\
  \bm{\Phi}_{\Nsite,j}&=(\mathbf{I}+\mathbf{J}_{\Nsite-1})\cdots(\mathbf{I}+\mathbf{J}_{j}).
\end{aligned}
\end{equation}
The deviation at the output is a sum of the injected errors, each propagated there by
the Jacobians of the sites that follow it, together with the second-order remainders.
The remainders are second order in the injected errors, and
Proposition~\ref{prop:remainder} in Appendix~\ref{app:proofs} quantifies them. The first-order terms therefore carry the analysis, and two
quantities remain to be bounded, the propagation $\bm{\Phi}$ and the accumulation of
the $\bm{\rho}_t$.

The accumulation must be measured through the partial sum $\mathbf{S}_t$ of
Eqn.~\eqref{eq:partial-sum}, not through the individual errors. A term-by-term
triangle inequality on Eqn.~\eqref{eq:carry} allows $\norm{\bm{\rho}_k}$ as
large as $2\qbound$ and gives a bound of order $\Nsite\qbound$. This bound
shows no benefit from error feedback, because feedback does not reduce any single
net error. Instead, feedback makes the errors cancel as they accumulate
(Sec.~\ref{subsec:controlling}). Summation by parts
rewrites the first-order part of Eqn.~\eqref{eq:carry} so that the errors enter
only through their partial sums, with coefficients formed from the propagation
(Appendix~\ref{app:proofs}).

Section~\ref{subsec:bounds} bounds the two in turn, the partial sums by a
constant for any error sequence, and the propagation in mean square under one
assumption on the trained network. Neither result restates existing error
feedback theory. That theory concerns gradient compression, where the compressed
errors perturb an optimization step and the outcome is a convergence rate. Here the
errors enter the forward computation and are then carried through the network itself.
This propagation has no counterpart in the gradient setting, and
Assumption~\ref{asm:noreinf} and Theorem~\ref{thm:prop} address it.

\subsection{Depth-independent bounds}\label{subsec:bounds}

This subsection contains two results. Theorem~\ref{thm:source} bounds the source. With
compensation, the accumulated error at the sites stays below a constant,
about $2\qbound$ at gain $\eta=1$, for any
error sequence, while without compensation it can reach
$\Nsite\qbound$. This requires no probabilistic model and no property of the network.
Theorem~\ref{thm:prop} bounds the output. Under a noise model for the injected errors
and one assumption on the trained network, the mean-square first-order deviation of
the output, measured against the energy of the hidden state, stays at the level set
by a single rounding error, at every depth. Theorem~\ref{thm:prop} builds on
Theorem~\ref{thm:source}, whose constant reappears inside its bound. The new element
is the propagation, which Assumption~\ref{asm:noreinf} controls.

We start with the source. At gain $\eta=1$ the corrections return each error in full. Of an
error injected $m$ sites ago, the fraction $\sum_{r<m}w_r$ has been subtracted back
out by later corrections, and the fraction
\begin{equation}\label{eq:tail}
  T_m:=\sum_{r\ge m}w_r
\end{equation}
remains. These tails are exactly the coefficients $\phi_m$ of Eqn.~\eqref{eq:coeff},
so each error enters the partial sum with weight $1$ and decays toward zero, and no
partial sum can exceed the total of the tails.

\begin{theorem}[source bound]\label{thm:source}
At gain $\eta=1$, for every error sequence with $\norm{\injnoise_k}\le\qbound$,
\begin{equation}\label{eq:source-bound}
\begin{aligned}
  \max_t\norm{\mathbf{S}_t}\;&\le\;\qbound\sum_{m\ge0}T_m\;=\;(1+\mu_w)\,\qbound,\\
  \mu_w\;&=\;\textstyle\sum_{r\ge0}r\,w_r,
\end{aligned}
\end{equation}
where $\mu_w$ is the mean age, in sites, at which the compensator returns an injected
error. Without compensation the same quantity reaches $\Nsite\qbound$.
\end{theorem}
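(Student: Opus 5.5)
The plan is to work directly from the closed form of Eqn.~\eqref{eq:coeff}, $\mathbf{S}_t=\sum_{s\le t}\phi_{t-s}\injnoise_s$, specialized to gain $\eta=1$.

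\textbf{Step 1: identify the coefficients as tails.} The weights $w_r$ of Eqn.~\eqref{eq:kernel} are nonnegative, since $a\in[0,1]$ and $\beta_j\in[0,1)$. They also sum to one, because each geometric series gives $(1-\beta_j)\sum_r\beta_j^r=1$. Hence at $\eta=1$,
\[
\phi_m=1-\sum_{r<m}w_r=\sum_{r\ge m}w_r=T_m\ge 0 .
\]
I will rederive Eqn.~\eqref{eq:coeff} briefly to be safe. It comes from the kernel identity $\mathbf{C}_k=-\sum_{s<k}w_{k-1-s}\injnoise_s$, which uses the residual identity Eqn.~\eqref{eq:residual-identity}. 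That identity relies on every GPU reconstructing the same $\hat{\mathbf{y}}_k$, and it holds for any error sequence because it is purely algebraic. Exchanging the order of summation, the coefficient of $\injnoise_s$ in $\mathbf{S}_t$ is $1-\sum_{k=s+1}^{t}w_{k-1-s}=1-\sum_{r=0}^{t-s-1}w_r$.

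\textbf{Step 2: triangle inequality.} For any $t$,
\[
\norm{\mathbf{S}_t}\le\sum_{s\le t}T_{t-s}\norm{\injnoise_s}\le\qbound\sum_{m=0}^{t}T_m\le\qbound\sum_{m\ge0}T_m .
\]
This uses only $\norm{\injnoise_s}\le\qbound$ and $T_m\ge0$, so it holds uniformly in $t$ and $\Nsite$ and for every error sequence. Taking the maximum over $t$ gives the first inequality.

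\textbf{Step 3: evaluate the constant.} Swap sums (Tonelli, all terms nonnegative):
\[
\sum_{m\ge0}T_m=\sum_{r\ge0}(r+1)w_r=1+\mu_w .
\]
Explicitly,
\[
\mu_w=a\,\frac{\beta_1}{1-\beta_1}+(1-a)\,\frac{\beta_2}{1-\beta_2},
\]
which is finite since $\beta_j<1$. This step is where finiteness must be checked, and the only real care point is the nonnegativity of the $w_r$ needed in Step 2.

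\textbf{Step 4: the uncompensated claim.} Without compensation $\mathbf{C}_k=0$, so $\mathbf{S}_t=\sum_{k\le t}\injnoise_k$. The triangle inequality gives at most $\Nsite\qbound$. To show this value is attained, choose all $\injnoise_k$ equal to a fixed vector of norm $\qbound$; this is admissible since the statement quantifies over arbitrary bounded sequences. Then $\norm{\mathbf{S}_{\Nsite-1}}=\Nsite\qbound$.

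I expect no serious obstacle. The main subtlety is making sure the indexing of the partial sum, which starts at the reset with zero averages, matches the unrolled kernel so that $\phi_0=1$ and the tails align exactly.
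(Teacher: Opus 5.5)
Your proposal is correct and follows the paper's proof essentially step for step: you identify $\phi_m=T_m$ at $\eta=1$ (the paper's tail-representation lemma), apply the triangle inequality using $T_m\ge0$, and swap sums to get $\sum_{r\ge0}(r+1)w_r=1+\mu_w$. For the uncompensated case you use the same constant sequence $\injnoise_k=\qbound\,\mathbf{e}$ that the paper uses to attain $\Nsite\qbound$. Your closed form $\mu_w=a\beta_1/(1-\beta_1)+(1-a)\beta_2/(1-\beta_2)$ goes slightly beyond the paper's proof and is consistent with its quoted value $1+\mu_w\approx2.1$ at the experimental settings.
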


The theorem has one hypothesis, the bound $\qbound$ on each error. The $T_m$ are
nonnegative, so the proof is a triangle inequality, and it covers errors that are
dependent, correlated with the activations, or chosen adversarially. The constant
reflects how the compensator works. An injected error stays in the partial sum until
later corrections have subtracted it back out, which takes $1+\mu_w$ sites on
average, about $2.1$ for the decay rates and blend used in the
experiments. At any moment the partial
sum therefore holds only the errors of the last few sites, about two errors in total.
Without compensation errors are never subtracted, all $\Nsite$ of them accumulate, and
the partial sum reaches $\Nsite\qbound$. Compensation reduces the accumulated source
from $\Nsite\qbound$ to $(1+\mu_w)\qbound$, a factor of $\Nsite/(1+\mu_w)$,
about $27$ at the $\Nsite=56$ of Sec.~\ref{subsec:noise-process}.

Theorem~\ref{thm:source} controls what the sites accumulate. The deviation at the
output also depends on the factor $\bm{\Phi}$ of Eqn.~\eqref{eq:carry}, the
transformation the network applies to each error on its way to the output. Bounds that
take the worst case of every site grow with depth. The standard estimate is
$\norm{\bm{\Phi}}\le(1+L)^{\Nsite}$, with $L$ a Lipschitz constant of the blocks.
Growth with depth is not an artifact of the estimate. Proposition~\ref{prop:sharp}(a) in Appendix~\ref{app:proofs} constructs a
network that amplifies one fixed direction at every site, and in it even
isotropic injections produce a deviation that grows exponentially in
$\sqrt\Nsite$. A depth-independent bound must therefore exclude networks that
keep amplifying one direction. The noise model rules out injections aligned
with such a direction, and Assumption~\ref{asm:noreinf} rules out the networks. These are the two hypotheses of
Theorem~\ref{thm:prop}.

We first treat the injections. A rounding error is fixed by the quantizer at the moment of
injection, from the activations alone, with no reference to the directions the network
later amplifies. We model the errors accordingly. Conditioned on everything injected
earlier, $\injnoise_s$ has mean zero, no preferred direction, and norm at most
$\qbound$~\citep{quantnoise}. Under this model the contributions of different sites are
uncorrelated, and the linearized deviation $\mathbf{E}_\Nsite^{\mathrm{lin}}$, the
first-order part of Eqn.~\eqref{eq:carry}, satisfies
\begin{equation}\label{eq:meansq}
  \mathbb{E}\,\bigl\|\mathbf{E}_\Nsite^{\mathrm{lin}}\bigr\|^2\;\le\;\frac{\qbound^2}{d}\sum_s\norm{\mathbf{K}_s}_F^2,
\end{equation}
where $d$ is the dimension of the hidden state, $\mathbf{K}_s$ is the matrix carrying
$\injnoise_s$ to the output, built from the tails $T_m$ and the propagation
$\bm{\Phi}$ (Appendix~\ref{app:proofs}), and $\norm{\cdot}_F$ is the Frobenius norm.

We next treat the networks. The reference scale here is the hidden state itself. Let
$\sigma_k^2=\norm{\mathbf{z}_k}^2/d$, the mean-square entry of the hidden state at
site $k$. In a pre-normalization transformer $\sigma_k$ rises with depth, because
every site adds new content to the hidden state~\citep{curseofdepth}. The
hidden state therefore gains energy at a definite rate, and
Assumption~\ref{asm:noreinf} allows a deviation to gain energy at most at that same
rate.

\begin{assumption}[bounded reinforcement]\label{asm:noreinf}
There are $\varepsilon_k\ge0$ with $\sum_k\varepsilon_k\le\Lambda$, for a $\Lambda$ independent of $\Nsite$, such that
for every site $k$, every earlier site $s$, and every isotropic injection $\injnoise_s$ from the
noise model, the deviation $\mathbf{v}=\bm{\Phi}_{k,s+1}\injnoise_s$ it produces at
site $k$ satisfies
\[
  \mathbb{E}\,\bigl\|(\mathbf{I}+\mathbf{J}_k)\mathbf{v}\bigr\|^2
  \;\le\;(1+\varepsilon_k)\,\frac{\sigma_{k+1}^2}{\sigma_k^2}\;\mathbb{E}\norm{\mathbf{v}}^2 .
\]
Appendix~\ref{app:proofs} completes the family of deviations $\mathbf{v}$ used in the
proofs.
\end{assumption}

The assumption does not restrict what a block does to any single vector. A block may
rotate an arriving deviation, spread it across coordinates, or overwrite it. The
$\varepsilon_k$ give further room. Single sites may amplify beyond the ratio
$\sigma_{k+1}^2/\sigma_k^2$, and only their total over all sites must stay bounded
independently of depth.
The assumption excludes one behavior, a network that keeps amplifying the arriving
errors from site to site. Trained transformers show the opposite
tendency. Consecutive blocks tend to cancel each other's
contributions~\citep{tlcm}, and small perturbations grow with depth at the rate of the
hidden state, not faster~\citep{numstable}. Such an exclusion is also
necessary. The amplifying network of Appendix~\ref{app:proofs} satisfies the
noise model, so no depth-independent bound holds without an assumption of this
kind.

Write $\bar{q}=\qbound/(\sqrt d\,\sigma_1)$, the per-entry size of one injected error
relative to the scale $\sigma_1$ of the hidden state after the first site.

\begin{theorem}[propagation]\label{thm:prop}
Under Assumption~\ref{asm:noreinf}, the noise model, and the regularity
conditions of Appendix~\ref{app:proofs}, the following hold.
\begin{itemize}
\item[(i)] In mean square,
\[
  \frac{\mathbb{E}\,\|\mathbf{E}_\Nsite^{\mathrm{lin}}\|^2}{d\sigma_\Nsite^2}
  \;\le\;2\Big[\tau_2\frac{\sigma_1^2}{\sigma_\Nsite^2}+(1+\mu_w)^2C_0e^{\Lambda}\Big]\bar{q}^2,
\]
with $\tau_2=\sum_mT_m^2$ a constant of the kernel and $C_0$ a constant of
the checkpoint defined in Appendix~\ref{app:proofs}.
\item[(ii)] For every $p\in(0,1)$, with probability at least $1-p$,
\[
  \frac{\|\mathbf{E}_\Nsite^{\mathrm{lin}}\|^2}{d\sigma_\Nsite^2}
  \;\le\;C\,\bar{q}^2\log\frac2p+C'\bar{q}^2\log^2\frac2p ,
\]
with $C$ and $C'$ the depth-independent constants of
Appendix~\ref{app:proofs}.
\end{itemize}
The term $\tau_2\,\sigma_1^2/\sigma_\Nsite^2$ in (i) decays with depth. No other
quantity in (i) and (ii) grows with $\Nsite$.
\end{theorem}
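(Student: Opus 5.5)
The plan is to work from the first-order part of Eqn.~\eqref{eq:carry}, $\mathbf{E}_\Nsite^{\mathrm{lin}}=\sum_{t<\Nsite}\bm{\Phi}_{\Nsite,t+1}\bm{\rho}_t$, and apply summation by parts so that the errors enter only through the partial sums $\mathbf{S}_t$. Writing $\bm{\rho}_t=\mathbf{S}_t-\mathbf{S}_{t-1}$ and using $\bm{\Phi}_{\Nsite,t+1}-\bm{\Phi}_{\Nsite,t+2}=\bm{\Phi}_{\Nsite,t+2}\mathbf{J}_{t+1}$ gives
\[
\mathbf{E}_\Nsite^{\mathrm{lin}}=\mathbf{S}_{\Nsite-1}+\sum_{t<\Nsite-1}\bm{\Phi}_{\Nsite,t+2}\mathbf{J}_{t+1}\mathbf{S}_t .
\]
I then substitute $\mathbf{S}_t=\sum_{s\le t}T_{t-s}\injnoise_s$ from Eqn.~\eqref{eq:coeff} at $\eta=1$. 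This identifies the matrix $\mathbf{K}_s=T_{\Nsite-1-s}\mathbf{I}+\sum_{t\ge s}T_{t-s}\bm{\Phi}_{\Nsite,t+2}\mathbf{J}_{t+1}$ of Eqn.~\eqref{eq:meansq}. Under the noise model the cross terms between different sites vanish, since each $\injnoise_s$ is conditionally mean zero given the past. Isotropy gives $\mathbb{E}\|\mathbf{K}_s\injnoise_s\|^2\le(\qbound^2/d)\|\mathbf{K}_s\|_F^2$, which yields Eqn.~\eqref{eq:meansq}.

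For (i) I split $\mathbf{K}_s$ into its two pieces and use $\|a+b\|^2\le2\|a\|^2+2\|b\|^2$, which produces the factor $2$. The identity piece contributes $(\qbound^2/d)\cdot d\sum_sT_{\Nsite-1-s}^2\le\qbound^2\tau_2$. Dividing by $d\sigma_\Nsite^2$ gives $\tau_2\,\bar q^2\sigma_1^2/\sigma_\Nsite^2$, which is the decaying term.

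For the propagated piece I would not bound matrix norms site by site. Instead I iterate Assumption~\ref{asm:noreinf} on the isotropic deviation. Applied from site $t+1$ to site $\Nsite$, it shows that the mean-square energy of $\bm{\Phi}_{\Nsite,t+2}\mathbf{v}$ grows by at most $\prod_k(1+\varepsilon_k)\,\sigma_\Nsite^2/\sigma_{t+1}^2\le e^{\Lambda}\sigma_\Nsite^2/\sigma_{t+1}^2$. The factor $\mathbf{J}_{t+1}$ together with the normalization of the deviation family is what the regularity conditions must absorb into $C_0$, through a bound of the type $\|\mathbf{J}_{t+1}\|$-energy $\lesssim\sigma_{t+1}^2/\sigma_1^2$ relative to the injection scale. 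The $T$-weights must also be collected. By Cauchy--Schwarz with weights $T_{t-s}$ and $\sum_mT_m=1+\mu_w$ (Theorem~\ref{thm:source}), the sum over $t$ of the weighted terms has squared norm at most $(1+\mu_w)\sum_tT_{t-s}\|\cdot\|^2$. Summing over $s$ then yields another factor $(1+\mu_w)$, and the whole piece is bounded by $(1+\mu_w)^2C_0e^{\Lambda}\qbound^2\sigma_\Nsite^2/\sigma_1^2$. Dividing by $d\sigma_\Nsite^2$ gives the second term times $\bar q^2$.

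The main obstacle is this step. I need the sum over $s$ of per-site contributions to stay $O(1)$ rather than $O(\Nsite)$. That requires the residual stream's energy growth, rather than a sum over $\Nsite$ sites, to pay for the accumulation. The natural route is a telescoping argument in which $\sum_t\|\mathbf{J}_{t+1}\|^2$-energy relative to $\sigma_{t+1}^2$ is controlled by $\log(\sigma_\Nsite^2/\sigma_1^2)$ or by the increments $\sigma_{t+1}^2-\sigma_t^2$. I expect $C_0$ to be defined precisely so that this holds, possibly as $\sup$ of such a ratio over the checkpoint.

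For (ii), $\mathbf{E}_\Nsite^{\mathrm{lin}}=\sum_s\mathbf{K}_s\injnoise_s$ is a vector martingale with bounded increments $\|\mathbf{K}_s\injnoise_s\|\le\|\mathbf{K}_s\|\qbound$. I would apply a Freedman or Pinelis inequality for Hilbert-space martingales. The predictable quadratic variation is bounded by the right-hand side of (i), which is depth-independent after normalization by $d\sigma_\Nsite^2$. The maximal increment gives the $\log(2/p)$ range term. Squaring $\sqrt{V\log(2/p)}+R\log(2/p)$ produces the stated $C\bar q^2\log(2/p)+C'\bar q^2\log^2(2/p)$.
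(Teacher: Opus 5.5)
Your skeleton matches the paper's. You use summation by parts to reach the partial sums, the matrices $\mathbf{K}_s$, the martingale mean-square identity, and the split that gives the factor $2$ and the $\tau_2$ term. Your weighted Cauchy--Schwarz does the job of the paper's Young inequality: both reduce the propagated piece to $2(1+\mu_w)^2\sum_t\norm{\bm{\Phi}_{\Nsite,t+2}\mathbf{J}_{t+1}}_F^2$. Part (ii) uses a Pinelis-type bound, as in the paper.

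The gap is the step you flag as the main obstacle, $\sum_t\norm{\bm{\Phi}_{\Nsite,t+2}\mathbf{J}_{t+1}}_F^2/d\le e^{\Lambda}C_0\,\sigma_\Nsite^2/\sigma_1^2$ (the paper's Lemma~\ref{lem:gamma2}). All of the depth independence comes from this step, and you leave it to a guess about $C_0$. Two ingredients are missing.

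First, Assumption~\ref{asm:noreinf} as stated in Sec.~\ref{subsec:bounds} only controls deviations $\bm{\Phi}_{k,s+1}\injnoise_s$, whose covariance is proportional to $\bm{\Phi}_{k,s+1}\bm{\Phi}_{k,s+1}^\top$. Iterating it on the isotropic deviation says nothing directly about $\bm{\Phi}_{\Nsite,t+2}\mathbf{J}_{t+1}$, whose seed covariance is $\mathbf{J}_{t+1}\mathbf{J}_{t+1}^\top$. The hypothesis is a trace condition, so it gives no operator-norm control of $\bm{\Phi}$. You could peel $\mathbf{J}_{t+1}$ off via $\norm{\bm{\Phi}\mathbf{J}}_F\le\norm{\bm{\Phi}}_F\norm{\mathbf{J}}$, but with only a Frobenius-type bound on $\mathbf{J}$ this costs a factor $d$, so it does not give the stated constant. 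The paper instead uses the completed family of Eqn.~\eqref{eq:family}. Its second collection, $\bm{\Phi}_{k,t+2}\mathbf{J}_{t+1}\mathbf{J}_{t+1}^\top\bm{\Phi}_{k,t+2}^\top$, is closed under propagation. Iterating the trace inequality from site $t+2$ (not $t+1$) then gives $\norm{\bm{\Phi}_{\Nsite,t+2}\mathbf{J}_{t+1}}_F^2/\sigma_\Nsite^2\le e^{\Lambda}\norm{\mathbf{J}_{t+1}}_F^2/\sigma_{t+2}^2$.

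Second, the regularity condition must make each Jacobian small in proportion to the energy its block adds. R2 reads $\norm{\mathbf{J}_k}_F^2/d\le C_0\,\Delta_k/\sigma_k^2$, with $\Delta_k=\sigma_{k+1}^2-\sigma_k^2$. The $t$-th term is then at most $e^{\Lambda}C_0\,d\,\sigma_\Nsite^2\,\Delta_{t+1}/(\sigma_{t+1}^2\sigma_{t+2}^2)=e^{\Lambda}C_0\,d\,\sigma_\Nsite^2\,(\sigma_{t+1}^{-2}-\sigma_{t+2}^{-2})$. This telescopes to $e^{\Lambda}C_0\,d\,\sigma_\Nsite^2/\sigma_1^2$. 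The alternatives you float fail. A Jacobian energy of order $\sigma_{t+1}^2/\sigma_1^2$ makes each of the $\Nsite-1$ terms of size $\sigma_\Nsite^2/\sigma_1^2$. A bound of order $\log(\sigma_\Nsite^2/\sigma_1^2)$ grows like $\log\Nsite$ under R1, which is exactly the uncompensated rate of Remark~\ref{rem:nocomp}.

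For (ii) you also need $\qbound\max_s\norm{\mathbf{K}_s}\le c\,\bar{q}\sqrt d\,\sigma_\Nsite$ with $c$ depth independent, which your sketch does not address. The paper gets it from the same per-term bound together with $\Delta_{t+1}\le\Delta_{\max}$. Alternatively, the crude estimate $\max_s\norm{\mathbf{K}_s}^2\le\sum_s\norm{\mathbf{K}_s}_F^2$, combined with a repaired part (i), also works. It costs a factor $d$, which the paper's $C'$ carries anyway.
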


The quantity bounded is the energy of the deviation measured against the energy of the
hidden state. The second-order remainders are at most $C_2\bar{q}\sqrt\Nsite$ times
the scale $\bar{q}\sqrt d\,\sigma_\Nsite$ of the linear bound, for as long as the deviation stays within twice its linear
bound (Proposition~\ref{prop:remainder}, sharp by Proposition~\ref{prop:sharp}(b), see also Corollary~\ref{cor:budget}).
Here $C_2$ depends on the curvature and the energy growth of the network and grows at most as $\log^2(\Nsite/p)$.
Up to the logarithm in $C_2$, the factor $C_2\bar{q}\sqrt\Nsite$ is small whenever
the noise injected over a pass is a small fraction of the hidden state.

In the same mean-square terms the advantage over the uncompensated primitive is
direct. With compensation the variance of the accumulated source stays
below $\tau_2\qbound^2$ at every site, about $1.3\,\qbound^2$ for the decay
rates and blend used in the experiments. Theorem~\ref{thm:prop}
turns this bounded source into a first-order output deviation at the level of a single
rounding error, independent of depth. Without compensation each site keeps its full
contribution and the same bound grows as $\log\Nsite$
(Remark~\ref{rem:nocomp}). Compensation multiplies the term of each site in the bound by at most
$2(1+\mu_w)^2C_0\Delta_k/\sigma_k^2$, where $\Delta_k=\sigma_{k+1}^2-\sigma_k^2$ is the
energy that block $k$ adds, so the sum stays bounded.

The source analysis extends to the backward pass. Its injected errors enter through the
Jacobian of the normalization (Sec.~\ref{subsec:noise-process}), and
Appendix~\ref{app:proofs} proves the corresponding depth-independent
source bound, in mean square and under a smoothness condition on the
normalization Jacobians.
Section~\ref{subsec:backward-impl} describes the backward implementation.

\subsection{Choice of the compensator weights}\label{subsec:weights}

Theorems~\ref{thm:source} and~\ref{thm:prop} fix the gain at $\eta=1$ and take the
decay rates $\beta_1,\beta_2$ of Sec.~\ref{sec:dual-ef} as given. Both choices
affect the guarantee.

The gain interpolates between no compensation and full compensation. At gain $\eta$
the coefficient of an injected error in the partial sum is $(1-\eta)+\eta T_{t-s}$,
and the source bound follows the same split,
\begin{equation}\label{eq:gain-bound}
  \max_t\norm{\mathbf{S}_t}\;\le\;(1-\eta)\,\Nsite\qbound+\eta\,(1+\mu_w)\,\qbound .
\end{equation}
Theorem~\ref{thm:source} is the case $\eta=1$, the only gain at which the growing term
vanishes. At $\eta<1$, the fraction $1-\eta$ of every error is never corrected and
accumulates as it would without compensation. The compensator state is reset at the
start of each pass, so this growth does not carry over between passes.
In mean square this uncorrected share is small. The net error is linear in the
gain, so the first-order output deviation is $1-\eta$ times the uncompensated deviation plus
$\eta$ times the deviation of Theorem~\ref{thm:prop} (Corollary~\ref{cor:gain}).
At the gain $\eta=0.75$ used in the experiments, the part that grows with depth is the
uncompensated bound scaled by $2(1-\eta)^2=1/8$, which grows only as $\log\Nsite$. At
$\Nsite=56$ the variance of the source is about $5\qbound^2$, against
$56\qbound^2$ without compensation.

The decay rates enter the source bound only through $\mu_w$, so every choice of
$\beta_1,\beta_2$ keeps the source below a constant independent of $\Nsite$.
The constant grows with the memory of the averages, and is about $2\qbound$
at the values used in the experiments. The smallest constant belongs to the limit $\beta_1,\beta_2\to0$, where
the compensator keeps only the error of the preceding site and reduces to single-step
error feedback. This optimum is a worst-case one, attained on adversarial sequences,
and practical error sequences are not adversarial. Away from that extreme, correcting
with a short-range and a long-range average together does better than correcting with
the last error alone, and Sec.~\ref{sec:experiments} reports the ablation over
$\beta_1$, $\beta_2$, and $\eta$ around the values used in the experiments.

\section{Experiments}\label{sec:experiments}

The experiments cover four parts. We first measure speed, including the latency
of the quantized all-reduce, the overhead of the compensator before and after
fusion, and end-to-end speed in inference and training
(Sec.~\ref{subsec:exp-speed}). We then evaluate inference accuracy under
4-bit communication (Sec.~\ref{subsec:exp-acc}), compare training loss
curves against the 16-bit tensor-parallel baseline
(Sec.~\ref{subsec:exp-train}), and report ablations, including the sweep
over $\beta_1$, $\beta_2$, and $\eta$ (Sec.~\ref{subsec:exp-abl}).

\smallskip\noindent\textbf{Setup.}
All experiments run on a single server with eight A100 GPUs connected by
NVLink. The inference experiments serve Llama-3-8B~\citep{llama3} and Qwen3-8B~\citep{qwen3} at $\TP=4$ and
evaluate language modeling perplexity together with standard downstream
accuracy benchmarks. The training experiments pretrain Qwen3-0.6B from scratch
at $\TP=4$ with BF16 mixed precision and sequence length 2,048. Data and
schedules are given in Sec.~\ref{subsec:exp-train}. End-to-end speed is
measured on Llama-3-8B and Qwen3-8B at $\TP=8$ (Sec.~\ref{subsec:exp-speed}). Unless stated otherwise,
communication uses the 4-bit quantized all-reduce of
Sec.~\ref{sec:quant-allreduce} with group size $\gsz=128$ and the zero-aware
asymmetric quantizer, and the compensator uses $\eta=0.75$, $a=0.35$, $\beta_1=0.7$, and
$\beta_2=0.3$ (Sec.~\ref{subsec:exp-abl}).

\subsection{Latency and end-to-end speed}\label{subsec:exp-speed}

Table~\ref{tab:latency} reports the median latency of one all-reduce at
$\TP=4$ for 16-bit messages of 64\,MB to 1\,GB per GPU. Each value is the median
of 80 iterations after 30 warmup iterations, and all rows come from the same
run. The fused compensator updates its state while the next matrix
multiplication runs, so for the fused row we time the matrix multiplications and the
all-reduce together and subtract the time of the matrix multiplications alone.
The rows are the 16-bit NCCL baseline, a two-collective quantized all-reduce assembled
from DeepSpeed's quantized all-to-all and quantized all-gather primitives in
the style of ZeRO++~\citep{wang2023zero++} at 8 and 4 bits, our all-reduce at 8
and 4 bits without compensation, and the 4-bit all-reduce with the compensator
attached in two forms, as separate PyTorch operations and fused as described in
Sec.~\ref{subsec:fusion}.

\begin{table}[t]
\begin{minipage}[t]{0.490\linewidth}\centering
\caption{\textbf{Median all-reduce latency} (ms) at $\TP=4$ over message sizes,
measured in one run. Lower is better. Bold marks the minimum in each column.
The last two rows attach the compensator to our 4-bit all-reduce, unfused and
fused. The fused row updates its state beside the next matrix multiplication
and reports the time the call adds to the critical path, measured as the time of
the matrix multiplications and the all-reduce together minus that of the matrix
multiplications alone.}\label{tab:latency}
\footnotesize
\setlength{\tabcolsep}{3.5pt}
\renewcommand{\arraystretch}{1.15}
\begin{adjustbox}{max width=\linewidth}
\begin{tabular}{@{}lccccc@{}}
\toprule
 & 64\,MB & 128\,MB & 256\,MB & 512\,MB & 1\,GB \\
\midrule
NCCL 16-bit                   & 0.70 & 1.27 & 2.45 & 4.60 & 9.00 \\
Two-collective, 8-bit           & 0.74 & 1.26 & 2.27 & 4.14 & 7.87 \\
Two-collective, 4-bit           & 0.62 & 1.00 & 1.76 & 3.25 & 5.76 \\
Ours, 8-bit                 & 0.56 & 0.91 & 1.42 & 2.80 & 5.51 \\
Ours, 4-bit                 & \textbf{0.36} & \textbf{0.62} & \textbf{0.94} & \textbf{1.73} & \textbf{3.40} \\
\quad + unfused compensator & 1.27 & 2.33 & 4.50 & 8.78 & 17.49 \\
\quad + fused compensator   & 0.44 & 0.75 & 1.19 & 2.15 & 4.30 \\
\bottomrule
\end{tabular}
\end{adjustbox}
\end{minipage}\hfill
\begin{minipage}[t]{0.490\linewidth}\centering
\caption{\textbf{End-to-end speedup} ($\times$) of our fused, compensated 4-bit
all-reduce over the 16-bit NCCL all-reduce on eight A100 GPUs ($\TP=8$,
$\gsz=128$). Speedup is $T_{\mathrm{NCCL}}/T_{\mathrm{ours}}$. Inference reports
the time to first token of Llama-3-8B and Qwen3-8B, training the mean step time.
Load is the duty cycle of background peer-to-peer copies. AR slowdown is
$T_{\mathrm{AR}}^{\mathrm{NCCL}}(\ell)/T_{\mathrm{AR}}^{\mathrm{NCCL}}(0)$
for a fixed-size 1\,GB 16-bit NCCL all-reduce, the per-call message of the
batch-64 runs, at load $\ell$ against no load.}\label{tab:e2e}
\footnotesize
\setlength{\tabcolsep}{2.5pt}
\renewcommand{\arraystretch}{1.15}
\begin{adjustbox}{max width=\linewidth}
\begin{tabular*}{\linewidth}{@{\extracolsep{\fill}}lccccc@{}}
\toprule
\multirow{2}{*}{Load}
 & \multirow{2}{*}{\shortstack{AR slowdown\\($\times$)}}
 & \multicolumn{2}{c}{Inference (TTFT)}
 & \multicolumn{2}{c}{Training (step time)} \\
\cmidrule(lr){3-4}\cmidrule(l){5-6}
 & & Llama & Qwen & Llama & Qwen \\
\midrule
0\%   & 1.00 & 1.16 & 1.09 & 1.09 & 1.08 \\
50\%  & 1.80 & 1.33 & 1.21 & 1.22 & 1.20 \\
75\%  & 2.07 & 1.38 & 1.24 & 1.27 & 1.25 \\
100\% & 2.73 & 1.44 & 1.31 & 1.38 & 1.34 \\
\bottomrule
\end{tabular*}
\end{adjustbox}
\end{minipage}
\end{table}

On the 1\,GB message, our uncompensated 8-bit all-reduce is 1.63$\times$
faster than NCCL, and our uncompensated 4-bit all-reduce is 2.65$\times$ faster
and takes 37.8\% of the NCCL time. At equal bit width, our all-reduce is
1.43$\times$ faster than the two-collective baseline at 8 bits and 1.69$\times$
faster at 4 bits. The gap comes from the pipelined design of Sec.~\ref{sec:quant-allreduce}, which overlaps quantization with
communication instead of running two dependent collectives back to back.

The measured 4-bit speedup is below the $4\times$ byte ratio of 4-bit against
16-bit transport, for two reasons. First, the payload is not the only traffic.
With group size $\gsz=128$, every 128 quantized values share one pair of
half-precision quantization parameters, so a group costs 68 bytes instead of
64, which alone lowers the ideal ratio from $4\times$ to about $3.8\times$.
Second, quantization adds computation on the datapath. Each message is
quantized before it is sent and dequantized after it is received, and the owner
performs the sum in high precision between a dequantize and a requantize, none
of which exists in the 16-bit all-reduce. The pipeline hides most of this
computation behind the transfers, and the remaining gap to $3.8\times$ is the
part it cannot hide.

The two compensator rows measure the cost of compensation. Built from
separate PyTorch operations, the compensator adds two full passes over the
tensor and an intermediate buffer, and on the 1\,GB message it adds 14.09\,ms to
the all-reduce. With the apply step fused into the entry pass and the update
running beside the next matrix multiplication, it adds 0.90\,ms, 10\% of the NCCL
time. The fused path matches a PyTorch
reference implementation of the compensator exactly, with a maximum absolute
difference of zero at every size. With fused compensation, the 4-bit
all-reduce adds 4.30\,ms to the critical path for the 1\,GB message, against
9.00\,ms for NCCL, a 2.09$\times$ reduction.

Table~\ref{tab:e2e} reports end-to-end speed on the same server at $\TP=8$ with
group size $\gsz=128$. We compare the 16-bit NCCL all-reduce with our fused,
compensated 4-bit all-reduce without background load and with background
peer-to-peer copies running between the GPUs for 50\%, 75\%, and 100\% of the
time, which emulates an interconnect shared with other traffic. In each setting
both collectives run under the same load, and we report the speedup
$T_{\mathrm{NCCL}}/T_{\mathrm{ours}}$.

For inference, we measure the time to first token of Llama-3-8B and Qwen3-8B
with batch size 64 and a 2,048-token prefill. For training, we measure the mean step time of
Llama-3-8B with the same batch size and sequence length and of Qwen3-8B with
batch size 32 and sequence length 2,048. The training measurement covers the
all-reduce calls of both the forward and the backward pass, including the
gradient rescaling of the 4-bit path. Without background load, the speedup is
1.16$\times$ on Llama-3-8B and 1.09$\times$ on Qwen3-8B for inference, and
1.09$\times$ and 1.08$\times$ for training. It grows with the load and reaches
1.44$\times$ and 1.31$\times$ for inference and 1.38$\times$ and 1.34$\times$ for
training at 100\% load.

These speedups are bounded by the share of time the all-reduce blocks the
16-bit baseline. On Llama-3-8B without background load, this share is 20.3\%
of the time to first token and 18.8\% of the training step, which bounds the
speedups at 1.25$\times$ and 1.23$\times$. At 100\% load, the shares
rise to 43.0\% and 36.2\%, and the bounds to 1.75$\times$ and
1.57$\times$. A prefill waits on every
all-reduce, and our all-reduce synchronizes through flags that the GPUs set and
poll themselves, tile by tile (Sec.~\ref{subsec:pipeline}), so it starts on
each tile as soon as its data arrives and loses little time when links are
congested. At 100\% load, the fixed-size 16-bit NCCL all-reduce takes 2.73$\times$ as
long as it does without background load. The speedup thus grows with the share
of the step that the all-reduce takes.

\subsection{Inference accuracy}\label{subsec:exp-acc}

The speedups reported in Sec.~\ref{subsec:exp-speed} matter only if accuracy
is preserved. We evaluate Llama-3-8B and Qwen3-8B with the 4-bit all-reduce at
every forward site, on three groups of tasks: language modeling perplexity on
WikiText~\citep{wikitext}, accuracy on seven standard benchmarks
(PIQA~\citep{piqa}, ARC-easy, ARC-challenge~\citep{arc},
HellaSwag~\citep{hellaswag}, WinoGrande~\citep{winogrande}, MMLU~\citep{mmlu},
MMLU-Pro~\citep{mmlupro}), and reasoning on GPQA~\citep{gpqa} and
GSM8K~\citep{gsm8k}.
All measured rows, including the 16-bit baselines, are obtained by us in a single
evaluation harness~\citep{lmevalharness}, and each task uses the standard shot
count of the harness. We compare two rows on a task with a two-sided test at the
5\% level, using the binomial standard error of each accuracy. The FlashComm rows
are quoted from the original papers~\citep{flashcomm,flashcommv2} and are
set apart in Tables~\ref{tab:ppl} and~\ref{tab:downstream}.

\begin{table}[t]
\begin{minipage}[t]{0.400\linewidth}\centering
\caption{\textbf{WikiText perplexity} for Llama-3-8B and Qwen3-8B. Lower is
better. The measured rows use $\TP=4$. The FlashComm rows are quoted
from~\citep{flashcomm} and set in gray. That paper reports no 16-bit
perplexity, so these rows are compared through the change from 8 to 4 bits. Dashes
mark results not reported in the cited work.}\label{tab:ppl}
\footnotesize
\setlength{\tabcolsep}{8pt}
\renewcommand{\arraystretch}{1.15}
\begin{adjustbox}{max width=\linewidth}
\begin{tabular}{@{}lcc@{}}
\toprule
 & Llama-3-8B & Qwen3-8B \\
\midrule
16-bit                    & 6.07 & 9.70 \\
4-bit, no compensation  & 6.73 & 10.45 \\
Ours, 8-bit             & 6.07 & 9.70 \\
Ours, 4-bit             & 6.36 & 9.80 \\
\midrule
\textcolor{gray}{FlashComm 8-bit (quoted)} & \textcolor{gray}{6.14} & \textcolor{gray}{--} \\
\textcolor{gray}{FlashComm 4-bit (quoted)} & \textcolor{gray}{6.70} & \textcolor{gray}{--} \\
\bottomrule
\end{tabular}
\end{adjustbox}
\end{minipage}\hfill
\begin{minipage}[t]{0.570\linewidth}\centering
\caption{\textbf{Math reasoning accuracy} (\%) for Llama-3-8B and Qwen3-8B at
$\TP=4$. GSM8K is scored by strict-match and flexible-extract. Early-site
8-bit promotes the all-reduce calls of the first half of the layers to 8 bits,
32 calls on Llama-3-8B and 36 on Qwen3-8B, and keeps the rest at 4 bits.
The cited work
reports neither benchmark.}\label{tab:reasoning}
\footnotesize
\setlength{\tabcolsep}{3.5pt}
\renewcommand{\arraystretch}{1.15}
\begin{adjustbox}{max width=\linewidth}
\begin{tabular}{@{}llccc@{}}
\toprule
 & & & \multicolumn{2}{c}{GSM8K} \\
\cmidrule(l){4-5}
Model & Method & GPQA & strict & flexible \\
\midrule
\multirow{3}{*}{Llama-3-8B}
 & 16-bit                     & 33.71 & 49.81 & 49.96 \\
 & Ours, 4-bit              & 33.03 & 46.17 & 46.40 \\
 & \quad + early-site 8-bit & 33.39 & 49.81 & 50.11 \\
\midrule
\multirow{3}{*}{Qwen3-8B}
 & 16-bit                     & 38.51 & 82.49 & 82.56 \\
 & Ours, 4-bit              & 38.84 & 80.06 & 80.52 \\
 & \quad + early-site 8-bit & 39.95 & 81.73 & 82.18 \\
\bottomrule
\end{tabular}
\end{adjustbox}
\end{minipage}
\end{table}

\smallskip\noindent\textbf{Perplexity.}
Table~\ref{tab:ppl} reports WikiText perplexity, where lower is better. The
4-bit path raises perplexity by 0.29 on Llama-3-8B (6.07 to 6.36) and by 0.10
on Qwen3-8B (9.70 to 9.80), and the 8-bit path leaves it unchanged. An
increase of this size is small at these perplexity levels, and it is much smaller
than the increase that the raw quantizer error causes. Without compensation the 4-bit perplexity is
6.73 and 10.45, and the compensator closes most of the distance between
those values and the 16-bit model. Moving from 8 to 4 bits raises the
perplexity of Llama-3-8B by 0.29 with our all-reduce and by 0.56 with
FlashComm~\citep{flashcomm}. Without compensation our 4-bit path rises by 0.66,
close to the 0.56 of FlashComm, so the difference in perplexity comes from the
compensator. The task benchmarks below test whether the remaining
difference affects answers.

\begin{table*}[t]
\centering
\caption{\textbf{Downstream accuracy} (\%). The FlashComm rows are
quoted from~\citep{flashcomm,flashcommv2}, set in gray, and are read against
the 16-bit references of those papers. All other rows are measured in our
harness at $\TP=4$ and are read against the 16-bit row of the same model. Dashes mark results not reported in the cited work.}
\label{tab:downstream}
\footnotesize
\setlength{\tabcolsep}{5.5pt}
\renewcommand{\arraystretch}{1.15}
\begin{tabular}{@{}llccccccc@{}}
\toprule
Model & Method & PIQA & ARC-e & ARC-c & HellaSwag & WinoGrande & MMLU & MMLU-Pro \\
\midrule
\multirow{6}{*}{Llama-3-8B}
 & 16-bit                    & 80.52 & 77.69 & 53.39 & 79.26 & 73.80 & 61.54 & 32.34 \\
 & 4-bit, no compensation  & 78.18 & 72.22 & 48.38 & 77.35 & 70.64 & 58.82 & 28.76 \\
 & Ours, 8-bit             & 80.68 & 77.77 & 53.07 & 79.38 & 74.03 & 61.50 & 31.27 \\
 & Ours, 4-bit             & 79.65 & 76.68 & 51.79 & 78.93 & 70.72 & 59.01 & 30.13 \\
\cmidrule(l){2-9}
 & \textcolor{gray}{FlashComm 8-bit (quoted)} & \textcolor{gray}{80.58} & \textcolor{gray}{77.40} & \textcolor{gray}{52.47} & \textcolor{gray}{79.09} & \textcolor{gray}{73.09} & \textcolor{gray}{--} & \textcolor{gray}{--} \\
 & \textcolor{gray}{FlashComm 4-bit (quoted)} & \textcolor{gray}{80.09} & \textcolor{gray}{75.84} & \textcolor{gray}{51.02} & \textcolor{gray}{78.11} & \textcolor{gray}{70.48} & \textcolor{gray}{--} & \textcolor{gray}{--} \\
\midrule
\multirow{6}{*}{Qwen3-8B}
 & 16-bit                    & 78.28 & 80.22 & 56.81 & 78.76 & 72.22 & 73.02 & 47.20 \\
 & 4-bit, no compensation  & 76.77 & 77.73 & 55.54 & 76.47 & 67.10 & 71.08 & 45.51 \\
 & Ours, 8-bit             & 79.39 & 79.88 & 56.14 & 78.65 & 71.74 & 72.95 & 47.75 \\
 & Ours, 4-bit             & 79.38 & 81.21 & 56.00 & 77.81 & 70.32 & 72.65 & 46.38 \\
\cmidrule(l){2-9}
 & \textcolor{gray}{FlashComm 8-bit (quoted)} & \textcolor{gray}{77.58} & \textcolor{gray}{80.89} & \textcolor{gray}{56.57} & \textcolor{gray}{74.92} & \textcolor{gray}{68.11} & \textcolor{gray}{--} & \textcolor{gray}{--} \\
 & \textcolor{gray}{FlashComm 4-bit (quoted)} & \textcolor{gray}{76.82} & \textcolor{gray}{79.38} & \textcolor{gray}{55.63} & \textcolor{gray}{74.16} & \textcolor{gray}{67.72} & \textcolor{gray}{--} & \textcolor{gray}{--} \\
\bottomrule
\end{tabular}
\end{table*}

\smallskip\noindent\textbf{Downstream accuracy.}
Table~\ref{tab:downstream} reports seven standard benchmarks. At 8 bits, our
all-reduce matches 16-bit inference, with average scores within 0.12 points on
both models. At 4 bits, Qwen3-8B loses only 0.39 points on average and scores
higher than 16-bit on PIQA and ARC-easy, and Llama-3-8B loses 1.66 points on
average. The compensator keeps these losses small. Without it, the average
drop is 3.46 points on Llama-3-8B and 2.33 on Qwen3-8B, so compensation removes
about half of the loss on Llama-3-8B and 83\% of it on Qwen3-8B. Against the
16-bit row, the uncompensated 4-bit path loses significant accuracy on 10 of
the 14 model and task pairs. Compensation raises accuracy on all 14 pairs, and
significantly on 6 of them. On the five
tasks that FlashComm also reports, moving from 8 to 4 bits costs our all-reduce
1.43 points on Llama-3-8B and 0.22 on Qwen3-8B on average, against 1.42 and
0.87 for FlashComm.

\smallskip\noindent\textbf{Math reasoning.}
Table~\ref{tab:reasoning} reports GPQA and GSM8K. For GSM8K we report both
scoring modes of the harness: strict-match, which accepts the answer only in
the exact expected format, and flexible-extract, which accepts any correctly
extracted number. On GPQA the two paths differ by less than a point on both
models and in either direction, 33.03 against 33.71 on Llama-3-8B and 38.84
against 38.51 on Qwen3-8B. On GSM8K the 4-bit path scores 46.17
strict-match against 49.81 for 16-bit, a loss of 48 of 1,319 problems. The two
tasks differ in what they ask of the model. GPQA is answered with a single
choice, while a GSM8K solution is a generated chain of several hundred tokens
in which each token conditions on the ones before it, so one flipped
intermediate token can make the rest of the derivation wrong. The compensator keeps the
accumulated deviation small, but at 4 bits even a single rounding error
occasionally flips a token, and a long chain offers many chances. We
therefore spend more bits on a small part of the traffic. An error injected
at an early call enters the residual stream and reaches every later layer,
while an error injected near the output reaches only a few. In a long generated
chain, early errors therefore have more chances to change a token. We promote
the all-reduce calls of the first half of the layers, on each prefill and each
decode forward, to 8 bits, and call this policy early-site 8-bit. This recovers
most of the loss. It moves about 1.5 times the bytes of uniform 4-bit
communication. On
Llama-3-8B it recovers strict-match
to 49.81, the 16-bit score, and reaches 50.11 on flexible-extract against 49.96
for 16-bit. On Qwen3-8B it reaches 81.73 and 82.18 against 82.49 and 82.56 for
16-bit.
Early-site 8-bit is used only in the serving experiments of this section. The
training experiments of Sec.~\ref{subsec:exp-train} use uniform 4-bit
communication throughout.

\subsection{Training}\label{subsec:exp-train}

\begin{figure}[t]
\centering
\begin{minipage}[c]{0.485\linewidth}\centering
\includegraphics[width=\linewidth,trim=0 125 0 0,clip]{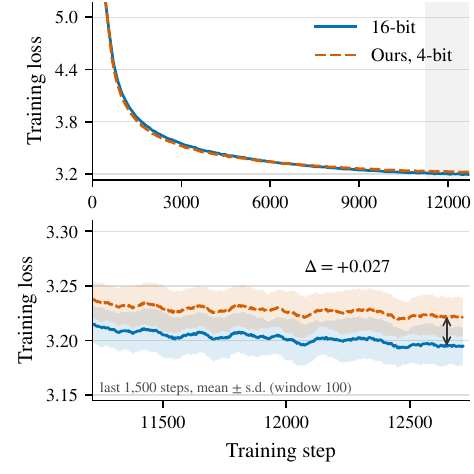}
\end{minipage}\hfill
\begin{minipage}[c]{0.485\linewidth}\centering
\includegraphics[width=\linewidth,trim=0 0 0 104.5,clip]{figures/train_c4_panels}
\end{minipage}
\caption{\textbf{Training loss} of Qwen3-0.6B at $\TP=4$ on 10B tokens
of Dolma C4, 16-bit at peak LR $3.0\times10^{-4}$ against our 4-bit all-reduce
at peak LR $3.5\times10^{-4}$. Left:
full run (50-step moving average, y axis clipped, gray band marks the steps on the right). Right: last 1,500 steps
(100-step average, $\pm$1 s.d. band).}\label{fig:train-c4}
\end{figure}

\begin{table}[t]
\centering
\caption{\textbf{Training loss} of Qwen3-0.6B at $\TP=4$ on Dolma C4.
(a) Full 10B-token training runs. (b) A separate 4-bit learning-rate
ablation with batch size 32 and 4,096 steps. Tail-100 is the mean loss over
the last 100 steps. In (a), $\Delta$ is relative to the 16-bit run, and in (b),
$\Delta_{3.5}$ is relative to the run at LR $3.5\times10^{-4}$.
All differences use unrounded tail-100 losses.}\label{tab:train}
\footnotesize
\setlength{\tabcolsep}{3pt}
\renewcommand{\arraystretch}{1.15}
\begin{minipage}[t]{0.485\linewidth}\centering
{\footnotesize\emph{(a) Full training: 10B tokens, batch size 384}}\par\vspace{3pt}
\begin{tabular*}{\linewidth}{@{\extracolsep{\fill}}lcccc@{}}
\toprule
Method & \shortstack{Max LR\\($\times10^{-4}$)}
 & Step 1000 & Tail-100 & $\Delta$ \\
\midrule
16-bit       & 3.0 & 4.095 & 3.197 & -- \\
Ours, 4-bit  & 3.5 & 4.042 & 3.225 & +0.027 \\
\bottomrule
\end{tabular*}
\end{minipage}\hfill
\begin{minipage}[t]{0.485\linewidth}\centering
{\footnotesize\emph{(b) LR ablation: batch size 32, 4,096 steps}}\par\vspace{3pt}
\begin{tabular*}{\linewidth}{@{\extracolsep{\fill}}lcccc@{}}
\toprule
Max LR ($\times10^{-4}$) & 2.0 & 2.5 & 3.0 & 3.5 \\
\midrule
Tail-100 loss & 4.135 & 4.078 & 4.016 & 3.952 \\
$\Delta_{3.5}$ & +0.183 & +0.126 & +0.064 & 0.000 \\
\bottomrule
\end{tabular*}
\end{minipage}
\end{table}

We pretrain Qwen3-0.6B~\citep{qwen3} from scratch at $\TP=4$ with the 4-bit
all-reduce at every forward and backward site. The main experiment trains on
10B tokens of Dolma C4~\citep{soldaini2024dolma,raffel2020t5} with a global
batch of 384 sequences for 12,716 steps, using AdamW, 100 warmup steps, and
cosine decay (Appendix~\ref{app:train}). The 16-bit and 4-bit runs share the
data, batch size, optimizer, and schedule. Low-bit training has been observed to
benefit from a larger learning rate than its full-precision
counterpart~\citep{wang2023bitnet,loco}, so we give the 4-bit run a slightly larger
peak learning rate, $3.5\times10^{-4}$ against $3.0\times10^{-4}$
for 16-bit.

Figure~\ref{fig:train-c4} compares the two runs. Their loss curves track each
other closely over all 12,716 steps, and the right panel magnifies the last
1,500 steps. Over the last 100 steps (tail-100), the 4-bit loss averages 3.225
and the 16-bit loss 3.197, a gap of 0.027 computed before rounding, less than
0.9\% of the 16-bit loss. Early in training the
4-bit run, with its larger learning rate, has the lower loss, 4.042 against 4.095 at step
1,000 (Table~\ref{tab:train}a).

Table~\ref{tab:train}b supports this choice of learning rate. In separate,
shorter 4-bit runs on Dolma C4 with a global batch of 32 sequences for 4,096
steps, the tail-100 loss falls steadily as the peak learning rate rises, from
4.135 at $2.0\times10^{-4}$ to 4.078, 4.016, and 3.952 at $2.5$, $3.0$, and
$3.5\times10^{-4}$. The final loss is sensitive to the learning
rate~\citep{xie2024optimization}, and the 4-bit run gains from the larger
value.

\subsection{Ablations}\label{subsec:exp-abl}

\begin{table}[t]
\centering
\caption{\textbf{Component ablations}, Llama-3-8B at $\TP=4$, WikiText and C4
perplexity (WT = WikiText), lower is better. Each row changes one component
of the method, and the $\gsz=256$ rows also change the group size. Bold marks the minimum within each bit-width column, excluding
the 16-bit reference.}\label{tab:abl}
\footnotesize
\setlength{\tabcolsep}{2.5pt}
\renewcommand{\arraystretch}{1.15}
\begin{minipage}[t]{0.475\linewidth}\centering\vspace{0pt}
\begin{tabular*}{\linewidth}{@{\extracolsep{\fill}}lcccc@{}}
\toprule
 & \multicolumn{2}{c}{4-bit} & \multicolumn{2}{c}{8-bit} \\
\cmidrule(lr){2-3}\cmidrule(l){4-5}
 & WT $\downarrow$ & C4 $\downarrow$ & WT $\downarrow$ & C4 $\downarrow$ \\
\midrule
\textbf{Full method} & \textbf{6.362} & 9.484 & \textbf{6.070} & \textbf{8.970} \\
\addlinespace[3pt]
\multicolumn{5}{@{}l}{\emph{Quantizer}} \\
\multicolumn{5}{@{}l}{\quad\textcolor{black!60}{(full: zero-aware asymmetric, $\gsz=128$)}} \\
\quad naive zero-point & 7.296 & 11.260 & 6.072 & 8.973 \\
\quad symmetric grid & 6.568 & 9.826 & 6.075 & 8.979 \\
\quad symmetric grid, $\gsz=256$ & 6.921 & 10.611 & 6.081 & 8.990 \\
\quad naive zero-point, $\gsz=256$ & 9.204 & 14.582 & 6.075 & 8.978 \\
\addlinespace[3pt]
\multicolumn{5}{@{}l}{\emph{Compensator}} \\
\multicolumn{5}{@{}l}{\quad\textcolor{black!60}{(full: dual, two averages)}} \\
\quad single average ($\beta=0.4$) & 6.506 & 9.796 & 6.071 & 8.972 \\
\quad no compensation & 6.730 & 10.130 & 6.079 & 8.984 \\
\bottomrule
\end{tabular*}
\end{minipage}\hfill
\begin{minipage}[t]{0.475\linewidth}\centering\vspace{0pt}
\begin{tabular*}{\linewidth}{@{\extracolsep{\fill}}lcccc@{}}
\toprule
 & \multicolumn{2}{c}{4-bit} & \multicolumn{2}{c}{8-bit} \\
\cmidrule(lr){2-3}\cmidrule(l){4-5}
 & WT $\downarrow$ & C4 $\downarrow$ & WT $\downarrow$ & C4 $\downarrow$ \\
\midrule
\multicolumn{5}{@{}l}{\emph{Coefficients}} \\
\multicolumn{5}{@{}l}{\quad\textcolor{black!60}{(full: $\eta=0.75$, $a=0.35$, $(\beta_1,\beta_2)=(0.7,0.3)$)}} \\
\quad $\eta=0.7$ & 6.365 & 9.469 & 6.071 & 8.972 \\
\quad $\eta=0.9$ & 6.384 & 9.525 & 6.071 & 8.972 \\
\quad $a=0.25$ & 6.365 & \textbf{9.467} & 6.072 & 8.972 \\
\quad $a=0.45$ & 6.367 & 9.481 & 6.072 & 8.971 \\
\quad $(\beta_1,\beta_2)=(0.8,0.2)$ & 6.370 & 9.489 & 6.071 & 8.973 \\
\quad $(\beta_1,\beta_2)=(0.6,0.4)$ & 6.370 & 9.489 & 6.071 & 8.973 \\
\midrule
16-bit reference & 6.070 & 8.966 & 6.070 & 8.966 \\
\bottomrule
\end{tabular*}
\end{minipage}
\end{table}

We now remove or replace one component of the method at a time and measure
the effect on perplexity, where lower is better. All runs serve Llama-3-8B at
$\TP=4$ with the same evaluation as Table~\ref{tab:ppl}, on WikiText and C4.

Table~\ref{tab:abl} evaluates every variant at 4 and at 8 bits. At 8 bits the
variants span only 0.011 perplexity on WikiText and 0.020 on C4. At 4 bits they
span 2.842 and 5.115. At 4 bits, the full design has much lower perplexity than
the naive zero-point and no-compensation variants. Changing any single
coefficient moves the 4-bit perplexity by at most 0.022 on WikiText and 0.041 on
C4, so the method is not sensitive to these coefficients. We examine the 4-bit
columns below.

The first factor is the group size. Coarsening the groups from 128 to 256
raises perplexity by 0.35 on WikiText and 0.79 on C4 with the symmetric
quantizer, and by 1.9 and 3.3 with the naive zero-point. A group has sixteen
levels and one scale, so groups must stay small for the scale to resolve the
values it covers. Since each group also carries its own scale and zero-point,
finer groups add bytes to the message (Sec.~\ref{subsec:exp-speed}), and we
use $\gsz=128$.

The second factor is the zero-point. With a naive zero-point the asymmetric
quantizer loses to the symmetric grid by 0.73 on WikiText and 1.43 on C4. As
shown in Sec.~\ref{subsec:quant}, the naive map
represents zero inexactly, and the error this injects into the densest part
of the activation distribution outweighs the gain from the finer resolution
of the fitted range. The zero-aware assignment removes this error, and the asymmetric
quantizer reaches 6.362 on WikiText and 9.484 on C4, lower than both
alternatives.

The third factor is the compensator. Removing it makes each call quantize
independently, so the errors accumulate across sites, which is the
setting of one-shot compression in prior work. Perplexity rises to 6.730 on
WikiText and 10.130 on C4.
The accuracy counterpart of this row appears in Table~\ref{tab:downstream}.
Keeping the compensator but collapsing its two averages into one loses
0.14 on WikiText and 0.31 on C4, the contribution of the second time scale. At
8 bits the same substitution moves C4 perplexity only from 8.970 to 8.972.

The last rows vary the coefficients. The decay rates and the blend enter the
guarantee only through a constant (Sec.~\ref{subsec:weights}). Consistent
with this, every $a$ and $(\beta_1,\beta_2)$ row stays within 0.022 of
the full method. The gain
trades two errors against each other. At $\eta=1$ nothing is left uncorrected
(Eqn.~\eqref{eq:gain-bound}), but the correction is an estimate of an error
the compensator has not yet seen, and at full strength an imperfect estimate
overcorrects whenever the error changes between sites. A gain below one leaves
the fraction $1-\eta$ of each error uncorrected in exchange for less
overcorrection. The theory is
stated at $\eta=1$, and the sweep favors a smaller gain. The gain $\eta=0.9$ is worse than
both 0.7 and 0.75 on WikiText and C4. We use $\eta=0.75$, $a=0.35$, $\beta_1=0.7$, and
$\beta_2=0.3$ throughout, and nothing is tuned per model.

The 4-bit accuracy of Sec.~\ref{subsec:exp-acc} thus depends on both
components. The quantizer bounds the error each site injects, and the
compensator keeps the remaining error from accumulating.

\section{Conclusion}\label{sec:conclusion}
\method{} starts from the arithmetic constraint of a low-bit collective. The
all-reduce must add the values it moves, and the sum cannot be formed in the
transport format. The collective is therefore built as low-bit transport around
a high-precision reduction, which quantizes each element at most twice per call
for any tensor-parallel degree. The
reduction also reveals each injected error at no extra communication, and the
fused compensator feeds it back into later calls, so the accumulated error
stays bounded with guarantees independent of network depth. At 8 bits, inference matches 16-bit accuracy, and 4-bit inference stays
within 0.4 and 1.7 points of it on average over seven tasks for Qwen3-8B and
Llama-3-8B. With a slightly larger learning rate, 4-bit pretraining over 10B tokens
ends less than 0.9\% above the 16-bit loss.
For 1\,GB messages the compensated all-reduce is 2.09$\times$ faster than the
16-bit NCCL all-reduce.

The end-to-end gains grow with the traffic on the interconnect. The time to first token
of Llama-3-8B improves by 1.16$\times$ without background load and by
1.44$\times$ at 100\% load, because communication then takes a larger share of each step. The compensation itself relies
on two structural properties. The collective reveals the error it injects, and
its outputs join the same residual stream call after call, so a later
correction can cancel an earlier error. The reduce-scatter of sequence
parallelism and the combine step of expert parallelism share both, and are natural next targets
for the compensated collective.

\setlength{\emergencystretch}{1.5em}
\bibliographystyle{abbrvnat}
\bibliography{references,extra}

\clearpage
\appendix

\section{Complete Theoretical Proofs}\label{app:proofs}

This appendix proves the results of Sec.~\ref{sec:theory} in the order in which
the section states them: notation and regularity, Theorem~\ref{thm:source} together
with its gain and variance forms, the partial-sum and coefficient forms behind
Eqn.~\eqref{eq:meansq}, the completion of Assumption~\ref{asm:noreinf},
Theorem~\ref{thm:prop}, the constructions behind the sharpness
discussion of Sec.~\ref{subsec:bounds}, and the backward source bound.

\subsection{Notation and regularity}

Sites are indexed $k=0,\dots,\Nsite-1$. The clean trajectory $\mathbf{z}_k$, the
deviation $\mathbf{E}_k$, the Jacobians $\mathbf{J}_k$, and the products
$\bm{\Phi}_{n,j}=(\mathbf{I}+\mathbf{J}_{n-1})\cdots(\mathbf{I}+\mathbf{J}_j)$,
with $\bm{\Phi}_{n,n}=\mathbf{I}$, are those of Eqn.~\eqref{eq:carry}. The scale is
$\sigma_k^2=\norm{\mathbf{z}_k}^2/d$, with increment
$\Delta_k=\sigma_{k+1}^2-\sigma_k^2$. Three per-block quantities enter the
constants, all uniform over the class of trained checkpoints considered, as is the
budget $\Lambda$ of Assumption~\ref{asm:noreinf}:
\begin{align*}
  \text{(R1)}\quad & 0<\Delta_{\min}\le\Delta_k\le\Delta_{\max}, \\
  \text{(R2)}\quad & \norm{\mathbf{J}_k}_F^2/d\le C_0\,\Delta_k/\sigma_k^2, \\
  \text{(R3)}\quad & H_k\le\bar h/(\sigma_k^2\sqrt d),
\end{align*}
with $H_k$ the second-derivative bound of Eqn.~\eqref{eq:err-expand}. R1 records
that every site adds a bounded, nonvanishing amount of energy to the hidden state,
and R2 and R3 record the scaling that the normalization in front of each block
imposes on its derivatives. A bound is called depth independent when its constant
is built from $\Delta_{\min},\Delta_{\max},C_0,\bar h$, the scales
$\sigma_0,\sigma_1$, the compensator constants $\mu_w,\tau_2$, the budget
$\Lambda$ of Assumption~\ref{asm:noreinf}, and possibly the hidden dimension $d$,
and from nothing that grows with $\Nsite$.

With $\mathcal{F}_s$ the history of the injections up to and including site $s$,
the noise model is
\begin{equation}\label{eq:noise-model}
  \mathbb{E}[\injnoise_s\mid\mathcal{F}_{s-1}]=\mathbf{0},
  \qquad
  \mathbb{E}[\injnoise_s\injnoise_s^\top\mid\mathcal{F}_{s-1}]\preceq\frac{\qbound^2}{d}\,\mathbf{I},
  \qquad
  \norm{\injnoise_s}\le\qbound\ \text{a.s.}
\end{equation}
The clean trajectory is fixed, so the $\mathbf{J}_k$ and $\bm{\Phi}_{n,j}$ are
deterministic and all probability is over the injected errors.

\subsection{The source bound}

\begin{lemma}[tail representation]\label{lem:tails}
At gain $\eta=1$, $\;\mathbf{S}_t=\sum_{s\le t}T_{t-s}\,\injnoise_s$.
\end{lemma}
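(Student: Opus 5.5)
The plan is to derive the representation directly from the closed form of the correction, Eqn.~\eqref{eq:kernel}, by regrouping the double sum that defines $\mathbf{S}_t$. First I would recheck Eqn.~\eqref{eq:kernel} itself, since the lemma rests on it. Unrolling the moving averages of Eqn.~\eqref{eq:ema} from the reset $\mathbf{m}^{(j)}_{-1,i}=\mathbf{0}$ gives $\mathbf{m}^{(j)}_{k-1,i}=(1-\beta_j)\sum_{s\le k-1}\beta_j^{\,k-1-s}\mathbf{u}_{s,i}$. Substituting this into Eqn.~\eqref{eq:apply} and summing over $i$, the identity $\sum_i\mathbf{u}_{s,i}=-\injnoise_s$ of Eqn.~\eqref{eq:residual-identity} yields $\mathbf{C}_k=-\eta\sum_{s<k}w_{k-1-s}\injnoise_s$. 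This identity holds because every GPU dequantizes the same $\hat{\mathbf{y}}_s$ (Sec.~\ref{subsec:fusion}).

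Second, at $\eta=1$ I would substitute $\bm{\rho}_k=\injnoise_k+\mathbf{C}_k$ into $\mathbf{S}_t=\sum_{k\le t}\bm{\rho}_k$ and exchange the order of summation in the double sum $\sum_{k\le t}\sum_{s<k}w_{k-1-s}\injnoise_s$. Fixing $s\le t$, the index $k$ ranges over $s+1,\dots,t$, so the coefficient of $\injnoise_s$ is
\[
1-\sum_{r=0}^{t-s-1}w_r=\phi_{t-s}.
\]
This is Eqn.~\eqref{eq:coeff} at $\eta=1$. For $s=t$ the inner sum is empty and the coefficient is $1$.

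Third, I would identify $\phi_m$ with $T_m$. The weights satisfy
\[
\sum_{r\ge0}w_r=a(1-\beta_1)\sum_r\beta_1^r+(1-a)(1-\beta_2)\sum_r\beta_2^r=a+(1-a)=1.
\]
These geometric series converge because $\beta_j\in[0,1)$. Hence $1-\sum_{r<m}w_r=\sum_{r\ge m}w_r=T_m$, and $T_0=1$ agrees with the coefficient $1$ on $\injnoise_t$.

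The only delicate point is bookkeeping of the index ranges. One must confirm the conventions that the correction at site $k$ uses the state after site $k-1$, so that age $0$ corresponds to $w_0$ applied at site $s+1$. One must also confirm that the reset makes the sums start at site $0$, with no contribution from earlier passes. Once these indices are aligned, the lemma is a finite rearrangement and needs no analytic estimate.
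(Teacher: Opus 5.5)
Your proposal is correct and follows the paper's argument. The paper's proof takes Eqn.~\eqref{eq:coeff} as given and uses $\sum_r w_r=1$ to rewrite $\phi_m=1-\sum_{r<m}w_r$ as the tail $T_m$, which is your third step; your first two steps re-derive Eqns.~\eqref{eq:kernel} and~\eqref{eq:coeff}, which the paper does in Sec.~\ref{subsec:suppression} rather than inside the proof.
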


\begin{proof}
Equation~\eqref{eq:coeff} gives the coefficients $\phi_m=1-\eta\sum_{r<m}w_r$. At
$\eta=1$, since $\sum_rw_r=1$,
\[
  \phi_m=1-\sum_{r<m}w_r=\sum_{r\ge m}w_r=T_m .\qedhere
\]
\end{proof}

\begin{proof}[Proof of Theorem~\ref{thm:source}]
The $T_m$ are nonnegative, so for every sequence with
$\norm{\injnoise_k}\le\qbound$,
\[
  \norm{\mathbf{S}_t}
  \;\le\;\sum_{s\le t}T_{t-s}\norm{\injnoise_s}
  \;\le\;\qbound\sum_{m\ge0}T_m
  \;=\;\qbound\sum_{m\ge0}\sum_{r\ge m}w_r
  \;=\;\qbound\sum_{r\ge0}(r+1)\,w_r
  \;=\;(1+\mu_w)\,\qbound .
\]
Without compensation $\phi_m\equiv1$, and for any fixed unit vector $\mathbf{e}$
the constant sequence $\injnoise_k=\qbound\,\mathbf{e}$ attains
\[
  \norm{\mathbf{S}_{\Nsite-1}}=\Bigl\|\sum_{k<\Nsite}\qbound\,\mathbf{e}\Bigr\|=\Nsite\qbound .
\]
At gain $\eta<1$ the coefficient is $\phi_m=(1-\eta)+\eta T_m$, and the same
estimate gives
\[
  \max_t\norm{\mathbf{S}_t}\;\le\;(1-\eta)\,\Nsite\qbound+\eta\,(1+\mu_w)\,\qbound,
\]
which is Eqn.~\eqref{eq:gain-bound}. As $\beta_1,\beta_2\to0$ the kernel
concentrates at age zero, $T_0=1$, $T_{m\ge1}=0$, and the bound is $\qbound$.
\end{proof}

\begin{remark}[source variance]\label{rem:srcvar}
Under the noise model the cross terms vanish, since for $s<s'$,
$\mathbb{E}\langle\injnoise_s,\injnoise_{s'}\rangle
=\mathbb{E}\langle\injnoise_s,\mathbb{E}[\injnoise_{s'}\mid\mathcal{F}_{s'-1}]\rangle=0$.
Hence, by Lemma~\ref{lem:tails},
\[
  \mathbb{E}\,\norm{\mathbf{S}_t}^2
  =\sum_{s\le t}T_{t-s}^2\,\mathbb{E}\norm{\injnoise_s}^2
  \;\le\;\qbound^2\sum_{m\ge0}T_m^2\;=\;\tau_2\,\qbound^2 ,
\]
the variance form quoted after Theorem~\ref{thm:prop}. At gain $\eta$ the same
computation with $\phi_m=(1-\eta)+\eta T_m$ gives
$\mathbb{E}\norm{\mathbf{S}_t}^2\le\qbound^2\sum_{m\le t}\phi_m^2$.
\end{remark}

\subsection{The linearized deviation}

The linearized deviation is
$\mathbf{E}^{\mathrm{lin}}_\Nsite=\sum_t\bm{\Phi}_{\Nsite,t+1}\bm{\rho}_t$, the
first-order part of Eqn.~\eqref{eq:carry}. From here on the gain is $\eta=1$.

\begin{lemma}[partial-sum form]\label{lem:psum}
\[
  \mathbf{E}^{\mathrm{lin}}_\Nsite
  =\mathbf{S}_{\Nsite-1}
  +\sum_{t=0}^{\Nsite-2}\bm{\Phi}_{\Nsite,t+2}\,\mathbf{J}_{t+1}\,\mathbf{S}_t .
\]
\end{lemma}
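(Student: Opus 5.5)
The identity is a discrete summation by parts (Abel summation) applied to $\mathbf{E}^{\mathrm{lin}}_\Nsite=\sum_{t=0}^{\Nsite-1}\bm{\Phi}_{\Nsite,t+1}\bm{\rho}_t$. No property of the noise or of the compensator is needed. The argument is purely algebraic and holds for any sequence $\bm{\rho}_t$ once we set $\mathbf{S}_t=\sum_{k\le t}\bm{\rho}_k$ as in Eqn.~\eqref{eq:partial-sum}. The convention is $\mathbf{S}_{-1}=\mathbf{0}$, so that $\bm{\rho}_t=\mathbf{S}_t-\mathbf{S}_{t-1}$ for every $t\ge0$.

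The first step substitutes $\bm{\rho}_t=\mathbf{S}_t-\mathbf{S}_{t-1}$ and splits the sum into two pieces, $\sum_{t=0}^{\Nsite-1}\bm{\Phi}_{\Nsite,t+1}\mathbf{S}_t$ and $\sum_{t=0}^{\Nsite-1}\bm{\Phi}_{\Nsite,t+1}\mathbf{S}_{t-1}$. In the second piece we shift the index $t\mapsto t+1$, drop the $t=0$ term since $\mathbf{S}_{-1}=\mathbf{0}$, and obtain $\sum_{t=0}^{\Nsite-2}\bm{\Phi}_{\Nsite,t+2}\mathbf{S}_t$. The first piece contains the boundary term $t=\Nsite-1$, which equals $\bm{\Phi}_{\Nsite,\Nsite}\mathbf{S}_{\Nsite-1}=\mathbf{S}_{\Nsite-1}$ because $\bm{\Phi}_{\Nsite,\Nsite}=\mathbf{I}$ by the convention of the notation subsection. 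The remaining indices $t=0,\dots,\Nsite-2$ pair up between the two pieces. This gives
\[
  \mathbf{E}^{\mathrm{lin}}_\Nsite=\mathbf{S}_{\Nsite-1}+\sum_{t=0}^{\Nsite-2}\bigl(\bm{\Phi}_{\Nsite,t+1}-\bm{\Phi}_{\Nsite,t+2}\bigr)\mathbf{S}_t .
\]

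The second step evaluates the difference of propagators. From the product definition, $\bm{\Phi}_{\Nsite,t+1}=\bm{\Phi}_{\Nsite,t+2}(\mathbf{I}+\mathbf{J}_{t+1})$, since the rightmost factor of $\bm{\Phi}_{\Nsite,t+1}$ is $\mathbf{I}+\mathbf{J}_{t+1}$. Hence $\bm{\Phi}_{\Nsite,t+1}-\bm{\Phi}_{\Nsite,t+2}=\bm{\Phi}_{\Nsite,t+2}\mathbf{J}_{t+1}$. Substituting this yields exactly the claimed formula.

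There is no real obstacle here. The only point requiring care is index bookkeeping: the ordering of the noncommuting factors (later sites on the left), the boundary conventions $\bm{\Phi}_{\Nsite,\Nsite}=\mathbf{I}$ and $\mathbf{S}_{-1}=\mathbf{0}$, and checking that the upper limit $\Nsite-2$ matches the last index at which both pieces contribute.
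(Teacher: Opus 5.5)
Your proof is correct and is the paper's argument: summation by parts using $\bm{\rho}_t=\mathbf{S}_t-\mathbf{S}_{t-1}$ with $\mathbf{S}_{-1}=\mathbf{0}$, the boundary term $\bm{\Phi}_{\Nsite,\Nsite}=\mathbf{I}$, and the identity $\bm{\Phi}_{\Nsite,t+1}-\bm{\Phi}_{\Nsite,t+2}=\bm{\Phi}_{\Nsite,t+2}\mathbf{J}_{t+1}$. Your index bookkeeping, including the order of the noncommuting factors, is right, and you are also right that the identity is purely algebraic and holds for any sequence $\bm{\rho}_t$.
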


\begin{proof}
Since $\bm{\rho}_t=\mathbf{S}_t-\mathbf{S}_{t-1}$ with
$\mathbf{S}_{-1}=\mathbf{0}$ (Eqn.~\eqref{eq:partial-sum}), summation by parts
with
$\bm{\Phi}_{\Nsite,t+1}-\bm{\Phi}_{\Nsite,t+2}=\bm{\Phi}_{\Nsite,t+2}\mathbf{J}_{t+1}$
and $\bm{\Phi}_{\Nsite,\Nsite}=\mathbf{I}$ gives
\[
  \sum_{t=0}^{\Nsite-1}\bm{\Phi}_{\Nsite,t+1}(\mathbf{S}_t-\mathbf{S}_{t-1})
  =\bm{\Phi}_{\Nsite,\Nsite}\,\mathbf{S}_{\Nsite-1}
  +\sum_{t=0}^{\Nsite-2}\bigl(\bm{\Phi}_{\Nsite,t+1}-\bm{\Phi}_{\Nsite,t+2}\bigr)\mathbf{S}_t .
  \qedhere
\]
\end{proof}

The deviation therefore depends on the injected errors only through their partial
sums, the form promised in Sec.~\ref{subsec:decomposition}. The final partial
sum is carried by the identity path to the output, and each earlier partial sum enters
through the Jacobian of the site that follows it and is then propagated forward.
This is where the compensator enters the output bound, through the partial sums
$\mathbf{S}_t$.

\begin{lemma}[coefficient matrices]\label{lem:ksmat}
\[
  \mathbf{E}^{\mathrm{lin}}_\Nsite=\sum_{s=0}^{\Nsite-1}\mathbf{K}_s\,\injnoise_s,
  \qquad
  \mathbf{K}_s=T_{\Nsite-1-s}\,\mathbf{I}
  +\sum_{t=s}^{\Nsite-2}T_{t-s}\,\bm{\Phi}_{\Nsite,t+2}\,\mathbf{J}_{t+1}.
\]
\end{lemma}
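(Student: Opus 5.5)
The plan is to substitute the tail representation of Lemma~\ref{lem:tails} into the partial-sum form of Lemma~\ref{lem:psum} and exchange the order of summation. Throughout the gain is $\eta=1$, so Lemma~\ref{lem:tails} gives $\mathbf{S}_t=\sum_{s\le t}T_{t-s}\,\injnoise_s$ for every $t$. All the $T_m$ are scalars, and the matrices $\bm{\Phi}_{\Nsite,t+2}\mathbf{J}_{t+1}$ are deterministic, since the clean trajectory is fixed. The identity is therefore purely linear-algebraic and needs no probabilistic input.

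First, treat the boundary term. Substituting into $\mathbf{S}_{\Nsite-1}$ gives
\[
  \sum_{s=0}^{\Nsite-1}T_{\Nsite-1-s}\,\injnoise_s ,
\]
which supplies the identity part $T_{\Nsite-1-s}\mathbf{I}$ of $\mathbf{K}_s$.

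Second, treat the interior sum. Substituting into each $\mathbf{S}_t$ for $0\le t\le\Nsite-2$ gives the double sum
\[
  \sum_{t=0}^{\Nsite-2}\sum_{s=0}^{t}T_{t-s}\,\bm{\Phi}_{\Nsite,t+2}\mathbf{J}_{t+1}\,\injnoise_s .
\]
The index set is the triangle $0\le s\le t\le\Nsite-2$. Exchanging the order of summation turns it into
\[
  \sum_{s=0}^{\Nsite-2}\Bigl(\sum_{t=s}^{\Nsite-2}T_{t-s}\,\bm{\Phi}_{\Nsite,t+2}\mathbf{J}_{t+1}\Bigr)\injnoise_s .
\]

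Third, extend this sum to $s=\Nsite-1$. The inner sum over $t$ is empty there, so adding that index changes nothing. Collecting the coefficient of each $\injnoise_s$ from the two pieces then yields exactly the stated $\mathbf{K}_s$.

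No step here is a real obstacle; the only care needed is bookkeeping. The indices $\bm{\Phi}_{\Nsite,t+2}$ and $\mathbf{J}_{t+1}$ must be tied to $t$ and not to $s$. The empty-range convention at $s=\Nsite-1$ must be consistent with $\bm{\Phi}_{\Nsite,\Nsite}=\mathbf{I}$, which was already used in Lemma~\ref{lem:psum}.

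As a sanity check, set all $\mathbf{J}_k=\mathbf{0}$. The identity then reduces to $\mathbf{E}^{\mathrm{lin}}_\Nsite=\mathbf{S}_{\Nsite-1}$, consistent with Lemma~\ref{lem:tails}.
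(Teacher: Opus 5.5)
Your proposal is correct and follows the paper's own argument: substitute the tail representation of Lemma~\ref{lem:tails} into the partial-sum form of Lemma~\ref{lem:psum}, exchange the order of summation over the triangle $0\le s\le t\le\Nsite-2$, and read off the coefficient of each $\injnoise_s$. Your explicit treatment of the empty inner sum at $s=\Nsite-1$ spells out what the paper's one-line bookkeeping leaves implicit.
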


\begin{proof}
Substitute $\mathbf{S}_t=\sum_{s\le t}T_{t-s}\injnoise_s$ (Lemma~\ref{lem:tails})
into Lemma~\ref{lem:psum} and collect the coefficient of $\injnoise_s$: the term
$\mathbf{S}_{\Nsite-1}$ contributes $T_{\Nsite-1-s}\,\mathbf{I}$, and the term at
$t\ge s$ contributes $T_{t-s}\,\bm{\Phi}_{\Nsite,t+2}\mathbf{J}_{t+1}$.
\end{proof}

The first term of $\mathbf{K}_s$ is the tail of the error not yet returned by the
end of the pass, carried by the identity path. The second collects the corrections
applied at the later sites, each entering through the block Jacobian there and
then propagated to the output.

\begin{lemma}[mean square]\label{lem:msid}
Under Eqn.~\eqref{eq:noise-model}, Eqn.~\eqref{eq:meansq} holds:
$\;\mathbb{E}\,\|\mathbf{E}^{\mathrm{lin}}_\Nsite\|^2\le(\qbound^2/d)\sum_s\norm{\mathbf{K}_s}_F^2$.
\end{lemma}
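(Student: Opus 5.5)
The proof starts from the representation of Lemma~\ref{lem:ksmat}, $\mathbf{E}^{\mathrm{lin}}_\Nsite=\sum_s\mathbf{K}_s\injnoise_s$. The matrices $\mathbf{K}_s$ are deterministic, because they are built from the tails $T_m$ and from the fixed clean-trajectory Jacobians. Expanding the squared norm gives
\[
  \mathbb{E}\,\|\mathbf{E}^{\mathrm{lin}}_\Nsite\|^2
  =\sum_{s}\mathbb{E}\,\|\mathbf{K}_s\injnoise_s\|^2
  +2\sum_{s<s'}\mathbb{E}\,\langle\mathbf{K}_s\injnoise_s,\mathbf{K}_{s'}\injnoise_{s'}\rangle ,
\]
and the two kinds of terms are handled separately.

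\textbf{Cross terms.} Take $s<s'$. The vector $\mathbf{K}_s\injnoise_s$ is $\mathcal{F}_{s'-1}$-measurable. Conditioning on $\mathcal{F}_{s'-1}$ and pulling $\mathbf{K}_{s'}^\top\mathbf{K}_s\injnoise_s$ out of the conditional expectation leaves $\mathbb{E}[\injnoise_{s'}\mid\mathcal{F}_{s'-1}]=\mathbf{0}$, so every cross term vanishes. This is the same step used in Remark~\ref{rem:srcvar}. Integrability is not an issue, since $\norm{\injnoise_s}\le\qbound$ almost surely.

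\textbf{Diagonal terms.} Write
\[
  \mathbb{E}\|\mathbf{K}_s\injnoise_s\|^2
  =\mathbb{E}\,\mathrm{tr}\bigl(\mathbf{K}_s^\top\mathbf{K}_s\,\injnoise_s\injnoise_s^\top\bigr)
  =\mathbb{E}\,\mathrm{tr}\bigl(\mathbf{K}_s^\top\mathbf{K}_s\,\mathbb{E}[\injnoise_s\injnoise_s^\top\mid\mathcal{F}_{s-1}]\bigr).
\]
The matrix $\mathbf{K}_s^\top\mathbf{K}_s$ is positive semidefinite, so $\mathbf{A}\mapsto\mathrm{tr}(\mathbf{K}_s^\top\mathbf{K}_s\mathbf{A})$ is monotone in the Loewner order. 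The covariance bound in Eqn.~\eqref{eq:noise-model} then gives at most $(\qbound^2/d)\,\mathrm{tr}(\mathbf{K}_s^\top\mathbf{K}_s)=(\qbound^2/d)\norm{\mathbf{K}_s}_F^2$. Summing over $s$ yields Eqn.~\eqref{eq:meansq}.

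The only point that needs care is the measurability. We must confirm that $\mathbf{K}_s$ really carries no dependence on the injections, since it is formed from $\mathbf{J}_k$ at the clean $\mathbf{z}_k$ and not at $\hat{\mathbf{z}}_k$. We must also confirm that the trace-monotonicity step is applied to the conditional covariance and not to an unconditional bound. Once these are checked, the argument is routine.
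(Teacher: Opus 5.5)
Your proposal is correct and follows essentially the same route as the paper: start from the representation of Lemma~\ref{lem:ksmat} with deterministic $\mathbf{K}_s$, kill the cross terms by conditioning on $\mathcal{F}_{s'-1}$, and bound each diagonal term through $\operatorname{tr}(\mathbf{K}_s^\top\mathbf{K}_s\,\injnoise_s\injnoise_s^\top)$ and the covariance bound of Eqn.~\eqref{eq:noise-model}. Your explicit use of the tower property and of Loewner monotonicity of the trace against the positive semidefinite $\mathbf{K}_s^\top\mathbf{K}_s$ only spells out a step that the paper's proof leaves implicit.
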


\begin{proof}
The $\mathbf{K}_s$ are deterministic, so conditioning on $\mathcal{F}_{t-1}$,
\[
  \mathbb{E}\langle\mathbf{K}_s\injnoise_s,\mathbf{K}_t\injnoise_t\rangle
  =\mathbb{E}\bigl\langle\mathbf{K}_s\injnoise_s,\ \mathbf{K}_t\,
  \mathbb{E}[\injnoise_t\mid\mathcal{F}_{t-1}]\bigr\rangle=0,
  \qquad s<t .
\]
Hence
\[
  \mathbb{E}\,\|\mathbf{E}^{\mathrm{lin}}_\Nsite\|^2
  =\sum_s\mathbb{E}\norm{\mathbf{K}_s\injnoise_s}^2
  =\sum_s\mathbb{E}\,\operatorname{tr}\bigl(\mathbf{K}_s^\top\mathbf{K}_s\,\injnoise_s\injnoise_s^\top\bigr)
  \;\le\;\frac{\qbound^2}{d}\sum_s\norm{\mathbf{K}_s}_F^2 .\qedhere
\]
\end{proof}

\subsection{The completed assumption}

The statement in Sec.~\ref{subsec:bounds} covers the deviations
$\bm{\Phi}_{k,s+1}\injnoise_s$, with covariances proportional to
$(\qbound^2/d)\,\bm{\Phi}_{k,s+1}\bm{\Phi}_{k,s+1}^\top$. The proofs apply the same inequality to
two further collections produced by the same propagation. The completed family
$\mathcal{C}_k$ is the closed convex hull of the normalizations of
\begin{equation}\label{eq:family}
  \bm{\Phi}_{k,\ell}\bm{\Phi}_{k,\ell}^\top,
  \qquad
  \bm{\Phi}_{k,t+2}\mathbf{J}_{t+1}\mathbf{J}_{t+1}^\top\bm{\Phi}_{k,t+2}^\top,
  \qquad
  \bm{\Phi}_{k,j+1}\,\mathbb{E}\bigl[\mathbf{r}^\tau_j(\mathbf{r}^\tau_j)^\top\bigr]\,\bm{\Phi}_{k,j+1}^\top,
\end{equation}
over $\ell\le k$, $t+2\le k$, and $j+1\le k$, with $\mathbf{r}^\tau_j$ the stopped
remainder of Proposition~\ref{prop:remainder} below, included at
the noise level $\qbound$ in use. Every member is the covariance of a deviation
that injected noise, or its second-order response, produces at site $k$. The
third collection varies with the truncation threshold of
Proposition~\ref{prop:remainder}, which depends on $p$ and $\Nsite$, and the
assumption is understood to hold with the same budget $\Lambda$ for every
truncation threshold. For $\bm{\Sigma}=\mathbb{E}\,\mathbf{v}\mathbf{v}^\top$
the inequality of Assumption~\ref{asm:noreinf} is
\begin{equation}\label{eq:asm-trace}
  \operatorname{tr}\bigl((\mathbf{I}+\mathbf{J}_k)^\top(\mathbf{I}+\mathbf{J}_k)\bm{\Sigma}\bigr)
  \;\le\;(1+\varepsilon_k)\,\frac{\sigma_{k+1}^2}{\sigma_k^2}\,\operatorname{tr}\bm{\Sigma},
  \qquad\bm{\Sigma}\in\mathcal{C}_k .
\end{equation}

\begin{lemma}[propagation of covariances]\label{lem:covprop}
Let $\bm{\Sigma}$ be a member of one of the three collections in
Eqn.~\eqref{eq:family} at site $j$. Then for $j\le n\le\Nsite$,
\[
  \frac{\operatorname{tr}\bigl(\bm{\Phi}_{n,j}\bm{\Sigma}\bm{\Phi}_{n,j}^\top\bigr)}{\sigma_n^2}
  \;\le\;e^{\Lambda}\,\frac{\operatorname{tr}\bm{\Sigma}}{\sigma_j^2}.
\]
\end{lemma}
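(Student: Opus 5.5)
The proof is an induction on $n$ from $n=j$ upward: at each step we apply the completed inequality \eqref{eq:asm-trace} once and multiply the resulting factors. Set $\bm{\Sigma}_n=\bm{\Phi}_{n,j}\bm{\Sigma}\bm{\Phi}_{n,j}^\top$, so that $\bm{\Sigma}_j=\bm{\Sigma}$ because $\bm{\Phi}_{j,j}=\mathbf{I}$. Since $\bm{\Phi}_{n+1,j}=(\mathbf{I}+\mathbf{J}_n)\bm{\Phi}_{n,j}$, the recursion is $\bm{\Sigma}_{n+1}=(\mathbf{I}+\mathbf{J}_n)\bm{\Sigma}_n(\mathbf{I}+\mathbf{J}_n)^\top$. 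Cyclicity of the trace gives $\operatorname{tr}\bm{\Sigma}_{n+1}=\operatorname{tr}\bigl((\mathbf{I}+\mathbf{J}_n)^\top(\mathbf{I}+\mathbf{J}_n)\bm{\Sigma}_n\bigr)$, which is exactly the left side of \eqref{eq:asm-trace} at site $k=n$.

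The step that needs checking is membership, $\bm{\Sigma}_n\in\mathcal{C}_n$ up to normalization, for every intermediate $n$ with $j\le n<\Nsite$. The inequality \eqref{eq:asm-trace} is homogeneous in $\bm{\Sigma}$, so normalization does not matter. Each collection in \eqref{eq:family} is closed under further propagation, because $\bm{\Phi}_{n,j}\bm{\Phi}_{j,\ell}=\bm{\Phi}_{n,\ell}$. Concretely:
\begin{itemize}
\item If $\bm{\Sigma}=\bm{\Phi}_{j,\ell}\bm{\Phi}_{j,\ell}^\top$, then $\bm{\Sigma}_n=\bm{\Phi}_{n,\ell}\bm{\Phi}_{n,\ell}^\top$ with $\ell\le j\le n$, which lies in the first collection at site $n$.
\item If $\bm{\Sigma}=\bm{\Phi}_{j,t+2}\mathbf{J}_{t+1}\mathbf{J}_{t+1}^\top\bm{\Phi}_{j,t+2}^\top$, then $\bm{\Sigma}_n$ is the same expression with $j$ replaced by $n$, and $t+2\le n$ holds. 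It lies in the second collection.
\item The remainder collection is handled the same way, with $j+1\le n$.
\end{itemize}
We read ``member at site $j$'' as the propagated matrix at site $j$ itself, that is, the indices satisfy $\ell\le j$ and so on. Nonnegative scalings leave the convex hull unchanged, so each $\bm{\Sigma}_n$ lies in the cone generated by $\mathcal{C}_n$. This is the point where the completion of the family is used, and I expect it to be the only real obstacle. The degenerate case $\operatorname{tr}\bm{\Sigma}_n=0$ is trivial, since then $\bm{\Sigma}_n=0$ by positive semidefiniteness and every later trace is zero.

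With membership in place, \eqref{eq:asm-trace} gives $\operatorname{tr}\bm{\Sigma}_{n+1}\le(1+\varepsilon_n)(\sigma_{n+1}^2/\sigma_n^2)\operatorname{tr}\bm{\Sigma}_n$. Dividing by $\sigma_{n+1}^2$ yields
\[
\frac{\operatorname{tr}\bm{\Sigma}_{n+1}}{\sigma_{n+1}^2}\le(1+\varepsilon_n)\frac{\operatorname{tr}\bm{\Sigma}_n}{\sigma_n^2}.
\]
Chaining this from $j$ to $n$ gives the factor $\prod_{k=j}^{n-1}(1+\varepsilon_k)\le\exp\bigl(\sum_k\varepsilon_k\bigr)\le e^{\Lambda}$, using $1+x\le e^x$ and the budget $\sum_k\varepsilon_k\le\Lambda$ of Assumption~\ref{asm:noreinf}. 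The ratios $\sigma_{k+1}^2/\sigma_k^2$ telescope, and this telescoping is why the bound is stated relative to $\sigma_n^2$ and $\sigma_j^2$. The case $n=j$ holds with equality. The $\sigma_k$ are positive, since R1 makes $\sigma_k^2$ strictly increasing from $\sigma_0^2$, so every division is legitimate.
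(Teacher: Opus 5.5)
Your proof is correct and follows the paper's own argument. You define $\bm{\Sigma}_n=\bm{\Phi}_{n,j}\bm{\Sigma}\bm{\Phi}_{n,j}^\top$ and use $\bm{\Phi}_{n,j}\bm{\Phi}_{j,\ell}=\bm{\Phi}_{n,\ell}$ to keep each $\bm{\Sigma}_n$ in the same collection at site $n$, then apply \eqref{eq:asm-trace} at each step and bound $\prod_k(1+\varepsilon_k)\le e^{\Lambda}$; your extra points (homogeneity absorbing the normalization, the zero-trace case, trace cyclicity) only make explicit what the paper leaves implicit.
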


\begin{proof}
Set $\bm{\Sigma}_m=\bm{\Phi}_{m,j}\bm{\Sigma}\bm{\Phi}_{m,j}^\top$, so that
$\bm{\Sigma}_{m+1}=(\mathbf{I}+\mathbf{J}_m)\bm{\Sigma}_m(\mathbf{I}+\mathbf{J}_m)^\top$.
Since $\bm{\Phi}_{m,j}\bm{\Phi}_{j,\ell}=\bm{\Phi}_{m,\ell}$, each
$\bm{\Sigma}_m$ again belongs to the same collection of Eqn.~\eqref{eq:family} at
site $m$, so Eqn.~\eqref{eq:asm-trace} applies at every step,
\[
  \frac{\operatorname{tr}\bm{\Sigma}_{m+1}}{\sigma_{m+1}^2}
  \;\le\;(1+\varepsilon_m)\,\frac{\operatorname{tr}\bm{\Sigma}_m}{\sigma_m^2},
\]
and $\prod_{m}(1+\varepsilon_m)\le e^{\sum_m\varepsilon_m}\le e^{\Lambda}$.
\end{proof}

\begin{lemma}[propagated sources]\label{lem:gamma2}
Under Assumption~\ref{asm:noreinf},
\[
  \sum_{t=0}^{\Nsite-2}\frac{\norm{\bm{\Phi}_{\Nsite,t+2}\mathbf{J}_{t+1}}_F^2}{d}
  \le e^{\Lambda}C_0\,\frac{\sigma_\Nsite^2}{\sigma_1^2}.
\]
\end{lemma}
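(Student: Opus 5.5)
The plan is to treat each summand as the propagation of a covariance from the second collection of Eqn.~\eqref{eq:family}, and then sum a telescoping-type series using R1 and R2. First I write
\[
  \norm{\bm{\Phi}_{\Nsite,t+2}\mathbf{J}_{t+1}}_F^2
  =\operatorname{tr}\bigl(\bm{\Phi}_{\Nsite,t+2}\bm{\Sigma}_t\bm{\Phi}_{\Nsite,t+2}^\top\bigr),
  \qquad
  \bm{\Sigma}_t:=\mathbf{J}_{t+1}\mathbf{J}_{t+1}^\top .
\]
The matrix $\bm{\Sigma}_t$ is a member (up to normalization, which is harmless since Eqn.~\eqref{eq:asm-trace} is homogeneous) of the second collection at site $t+2$, because $\bm{\Phi}_{t+2,t+2}=\mathbf{I}$. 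Lemma~\ref{lem:covprop} with $j=t+2$ and $n=\Nsite$ then gives
\[
  \frac{\norm{\bm{\Phi}_{\Nsite,t+2}\mathbf{J}_{t+1}}_F^2}{d}
  \le e^{\Lambda}\,\frac{\sigma_\Nsite^2}{\sigma_{t+2}^2}\,\frac{\norm{\mathbf{J}_{t+1}}_F^2}{d}.
\]
I should check that the inductive step inside Lemma~\ref{lem:covprop} keeps the iterates in the second collection. It does, since $(\mathbf{I}+\mathbf{J}_m)\bm{\Phi}_{m,t+2}=\bm{\Phi}_{m+1,t+2}$.

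Next I apply R2 at site $t+1$, which gives $\norm{\mathbf{J}_{t+1}}_F^2/d\le C_0\Delta_{t+1}/\sigma_{t+1}^2$. Each summand is then at most
\[
  e^{\Lambda}C_0\,\sigma_\Nsite^2\,\frac{\Delta_{t+1}}{\sigma_{t+1}^2\sigma_{t+2}^2}.
\]
The key observation is the telescoping identity
\[
  \frac{\Delta_{t+1}}{\sigma_{t+1}^2\sigma_{t+2}^2}=\frac{1}{\sigma_{t+1}^2}-\frac{1}{\sigma_{t+2}^2}.
\]
Summing over $t=0,\dots,\Nsite-2$ therefore yields at most $1/\sigma_1^2-1/\sigma_\Nsite^2\le 1/\sigma_1^2$. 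Here R1 ($\Delta_k>0$) guarantees that the $\sigma_k^2$ are increasing and positive, so every term is nonnegative and the bound is clean. Combining the two steps gives exactly $e^{\Lambda}C_0\,\sigma_\Nsite^2/\sigma_1^2$.

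The main obstacle is the first step: legitimately invoking Lemma~\ref{lem:covprop} for the matrix $\mathbf{J}_{t+1}\mathbf{J}_{t+1}^\top$ started at site $t+2$. This requires that the completed family $\mathcal{C}_k$ contains the propagated versions of it at every intermediate site. I would verify this directly from the definition in Eqn.~\eqref{eq:family} (the index constraint $t+2\le k$), and note that the normalization used in the convex hull does not affect the trace ratios. The rest is the telescoping computation.
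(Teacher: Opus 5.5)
Your proof is correct and matches the paper's argument step for step. You identify $\mathbf{J}_{t+1}\mathbf{J}_{t+1}^\top$ as the member of the second collection at site $t+2$, apply Lemma~\ref{lem:covprop} from $j=t+2$ to $n=\Nsite$, bound $\norm{\mathbf{J}_{t+1}}_F^2$ by R2, and telescope via $\Delta_{t+1}/(\sigma_{t+1}^2\sigma_{t+2}^2)=1/\sigma_{t+1}^2-1/\sigma_{t+2}^2$.
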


\begin{proof}
The matrix $\mathbf{J}_{t+1}\mathbf{J}_{t+1}^\top$ is the member of the second
collection at site $j=t+2$, the case $k=t+2$, where
$\bm{\Phi}_{t+2,t+2}=\mathbf{I}$. Lemma~\ref{lem:covprop}, then R2 and
$\Delta_{t+1}/(\sigma_{t+1}^2\sigma_{t+2}^2)=1/\sigma_{t+1}^2-1/\sigma_{t+2}^2$,
give
\[
  \frac{\norm{\bm{\Phi}_{\Nsite,t+2}\mathbf{J}_{t+1}}_F^2}{\sigma_\Nsite^2}
  \;\le\;e^{\Lambda}\,\frac{\norm{\mathbf{J}_{t+1}}_F^2}{\sigma_{t+2}^2}
  \;\le\;e^{\Lambda}C_0\,d\Bigl(\frac{1}{\sigma_{t+1}^2}-\frac{1}{\sigma_{t+2}^2}\Bigr),
\]
and the sum over $t$ telescopes to at most $e^{\Lambda}C_0\,d/\sigma_1^2$.
\end{proof}

\subsection{Proof of the propagation theorem}

\begin{proof}[Proof of Theorem~\ref{thm:prop}, part (i)]
Write $f_t=\norm{\bm{\Phi}_{\Nsite,t+2}\mathbf{J}_{t+1}}_F$. Taking Frobenius norms
in Lemma~\ref{lem:ksmat},
\[
  \norm{\mathbf{K}_s}_F\;\le\;\sqrt d\,T_{\Nsite-1-s}+\sum_{t\ge s}T_{t-s}f_t .
\]
Squaring, summing over $s$, and applying Young's inequality
$\|T*f\|_{\ell_2}\le\|T\|_{\ell_1}\|f\|_{\ell_2}$ with $\|T\|_{\ell_1}=1+\mu_w$,
then Lemma~\ref{lem:gamma2},
\begin{align*}
  \sum_s\norm{\mathbf{K}_s}_F^2
  &\;\le\;2d\,\tau_2+2\,(1+\mu_w)^2\sum_tf_t^2
  \;\le\;2d\,\tau_2+2\,(1+\mu_w)^2\,d\,e^{\Lambda}C_0\,\frac{\sigma_\Nsite^2}{\sigma_1^2}.
\end{align*}
Inserting into Lemma~\ref{lem:msid} and dividing by $d\sigma_\Nsite^2$,
\[
  \frac{\mathbb{E}\,\|\mathbf{E}^{\mathrm{lin}}_\Nsite\|^2}{d\sigma_\Nsite^2}
  \;\le\;\frac{\qbound^2}{d^2\sigma_\Nsite^2}
  \Bigl[2d\,\tau_2+2\,(1+\mu_w)^2\,d\,e^{\Lambda}C_0\,\frac{\sigma_\Nsite^2}{\sigma_1^2}\Bigr]
  \;=\;2\Bigl[\tau_2\frac{\sigma_1^2}{\sigma_\Nsite^2}+(1+\mu_w)^2C_0e^{\Lambda}\Bigr]\bar{q}^2 .
  \qedhere
\]
\end{proof}

\begin{remark}[without compensation]\label{rem:nocomp}
Here $\bm{\rho}_s=\injnoise_s$ and
$\mathbf{K}_s=\bm{\Phi}_{\Nsite,s+1}$. The identity is the member of the first
collection of Eqn.~\eqref{eq:family} at site $s+1$ (the case $\ell=k$), so
Lemma~\ref{lem:covprop} gives
$\norm{\bm{\Phi}_{\Nsite,s+1}}_F^2\le e^{\Lambda}d\,\sigma_\Nsite^2/\sigma_{s+1}^2$,
and Lemma~\ref{lem:msid} with
$\sigma_{s+1}^2\ge\sigma_1^2+s\Delta_{\min}$ (R1) gives
\[
  \frac{\mathbb{E}\,\|\mathbf{E}^{\mathrm{lin}}_\Nsite\|^2}{d\sigma_\Nsite^2}
  \;\le\;e^{\Lambda}\bar{q}^2\sum_{s=0}^{\Nsite-1}\frac{\sigma_1^2}{\sigma_{s+1}^2}
  \;\le\;e^{\Lambda}\bar{q}^2\Bigl[1+\frac{\sigma_1^2}{\Delta_{\min}}
  \log\Bigl(1+\frac{(\Nsite-1)\Delta_{\min}}{\sigma_1^2}\Bigr)\Bigr].
\]
Each site enters with the weight $1/\sigma_{s+1}^2$, and the sum grows as
$\log\Nsite$. With compensation the same weight enters only through
Lemma~\ref{lem:gamma2}, multiplied by
$C_0\,\Delta_{t+1}/\sigma_{t+2}^2\le C_0\,\Delta_{t+1}/\sigma_{t+1}^2$, and the sum
telescopes to $e^{\Lambda}C_0\,d/\sigma_1^2$, which gives the term
$(1+\mu_w)^2C_0e^{\Lambda}$ of part (i).
\end{remark}

\begin{corollary}[general gain]\label{cor:gain}
Under the hypotheses of Theorem~\ref{thm:prop}, at gain $\eta\in[0,1]$,
\[
  \frac{\mathbb{E}\,\|\mathbf{E}^{\mathrm{lin}}_\Nsite\|^2}{d\sigma_\Nsite^2}
  \;\le\;4\eta^2\Big[\tau_2\frac{\sigma_1^2}{\sigma_\Nsite^2}+(1+\mu_w)^2C_0e^{\Lambda}\Big]\bar{q}^2
  +2(1-\eta)^2e^{\Lambda}\bar{q}^2\sum_{s=0}^{\Nsite-1}\frac{\sigma_1^2}{\sigma_{s+1}^2}.
\]
\end{corollary}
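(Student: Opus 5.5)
The plan is to split the net error by linearity in the gain. At gain $\eta$ the closed-form correction of Eqn.~\eqref{eq:kernel} scales by $\eta$, so the partial-sum coefficient is $\phi_m=(1-\eta)+\eta T_m$ (Remark~\ref{rem:srcvar}). Because the linearized deviation $\mathbf{E}^{\mathrm{lin}}_\Nsite=\sum_t\bm{\Phi}_{\Nsite,t+1}\bm{\rho}_t$ is linear in the net errors, I would write
\[
  \mathbf{E}^{\mathrm{lin}}_\Nsite(\eta)=(1-\eta)\,\mathbf{E}^{\mathrm{lin},0}_\Nsite+\eta\,\mathbf{E}^{\mathrm{lin},1}_\Nsite .
\]
Here $\mathbf{E}^{\mathrm{lin},0}$ is the uncompensated deviation, built from $\bm{\rho}_s=\injnoise_s$. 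The term $\mathbf{E}^{\mathrm{lin},1}$ is the deviation at gain one driven by the \emph{same} injected sequence $\{\injnoise_s\}$. This holds because $\bm{\rho}_t(\eta)=\injnoise_t+\eta\mathbf{C}_t^{(1)}=(1-\eta)\injnoise_t+\eta(\injnoise_t+\mathbf{C}_t^{(1)})$, where $\mathbf{C}^{(1)}$ is the gain-one correction written as a function of the injected errors. One subtlety needs checking. In the paper's model the injections $\injnoise_s$ are produced by the quantizer and may depend on the compensated inputs. However, the noise model Eqn.~\eqref{eq:noise-model} and Lemmas~\ref{lem:tails}--\ref{lem:msid} only use the martingale-difference structure of $\{\injnoise_s\}$ and the deterministic kernels. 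So both pieces can be analyzed with the same filtration, whatever sequence actually occurred.

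Next I would apply $\|a+b\|^2\le2\|a\|^2+2\|b\|^2$ and take expectations:
\[
  \mathbb{E}\|\mathbf{E}^{\mathrm{lin}}_\Nsite\|^2\le 2(1-\eta)^2\,\mathbb{E}\|\mathbf{E}^{\mathrm{lin},0}_\Nsite\|^2+2\eta^2\,\mathbb{E}\|\mathbf{E}^{\mathrm{lin},1}_\Nsite\|^2 .
\]
For the second term I would invoke Theorem~\ref{thm:prop}(i). That gives $2[\tau_2\sigma_1^2/\sigma_\Nsite^2+(1+\mu_w)^2C_0e^{\Lambda}]\bar q^2$ after normalizing by $d\sigma_\Nsite^2$, and the factor $2\eta^2$ turns it into the stated $4\eta^2[\cdots]\bar q^2$. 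For the first term I would use the first inequality of Remark~\ref{rem:nocomp}, which bounds the normalized uncompensated mean square by $e^{\Lambda}\bar q^2\sum_s\sigma_1^2/\sigma_{s+1}^2$. That remark uses $\mathbf{K}_s=\bm{\Phi}_{\Nsite,s+1}$, Lemma~\ref{lem:covprop} on the first collection, and Lemma~\ref{lem:msid}. Multiplying by $2(1-\eta)^2$ gives the second term of the corollary exactly.

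The main point to verify carefully is that Theorem~\ref{thm:prop}(i) applies verbatim to $\mathbf{E}^{\mathrm{lin},1}$. That piece is a linear functional $\sum_s\mathbf{K}_s\injnoise_s$ with the same deterministic $\mathbf{K}_s$ as in Lemma~\ref{lem:ksmat}, so Lemma~\ref{lem:msid} and the proof of part (i) go through without change. Assumption~\ref{asm:noreinf} is needed only on the families in Eqn.~\eqref{eq:family}, which do not depend on $\eta$. A sharper constant could come from expanding $\mathbf{K}_s(\eta)=(1-\eta)\bm{\Phi}_{\Nsite,s+1}+\eta\mathbf{K}_s$ directly in Frobenius norm. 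That route gives the same bound, since it too uses $\|A+B\|_F^2\le2\|A\|_F^2+2\|B\|_F^2$, so I would present the vector-level split as the cleaner argument.
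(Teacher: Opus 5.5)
Your proposal is correct and follows the paper's proof. You split $\bm{\rho}_k$ linearly in $\eta$ into the uncompensated and gain-one net errors, both driven by the same injections. You then apply $(x+y)^2\le 2x^2+2y^2$ and bound the two pieces by Theorem~\ref{thm:prop}(i) and Remark~\ref{rem:nocomp}, which yields exactly the stated factors $4\eta^2$ and $2(1-\eta)^2$.
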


\begin{proof}
By Eqn.~\eqref{eq:kernel} the correction is linear in the gain,
$\mathbf{C}_k=\eta\,\mathbf{C}^{(1)}_k$ with $\mathbf{C}^{(1)}_k$ the correction
at $\eta=1$, so $\bm{\rho}_k=(1-\eta)\,\injnoise_k+\eta\,(\injnoise_k+\mathbf{C}^{(1)}_k)$.
The linearized deviation is linear in the net errors, so it is $1-\eta$ times the
uncompensated deviation plus $\eta$ times the deviation at $\eta=1$, both driven by
the same injections. The bound follows from $(x+y)^2\le2x^2+2y^2$, part (i), and
Remark~\ref{rem:nocomp}.
\end{proof}

\begin{proof}[Proof of part (ii)]
The sums $\mathbf{M}_n=\sum_{s\le n}\mathbf{K}_s\injnoise_s$ form a martingale in
$\mathbb{R}^d$. Its predictable quadratic variation is at most the deterministic
majorant
\[
  v_\Nsite:=\frac{\qbound^2}{d}\sum_s\norm{\mathbf{K}_s}_F^2,
  \qquad
  \norm{\mathbf{K}_s\injnoise_s}\le\zeta_\Nsite:=\qbound\max_s\norm{\mathbf{K}_s} .
\]
By the first display in the proof of Lemma~\ref{lem:gamma2}, with
$\Delta_{t+1}\le\Delta_{\max}$,
\[
  \norm{\bm{\Phi}_{\Nsite,t+2}\mathbf{J}_{t+1}}
  \;\le\;\sigma_\Nsite\,\frac{\sqrt{d\,e^{\Lambda}C_0\,\Delta_{\max}}}{\sigma_{t+1}\sigma_{t+2}},
\]
so, by Lemma~\ref{lem:ksmat}, $T_m\le1$, $\sum_mT_m=1+\mu_w$, and
$\sigma_{t+1}\sigma_{t+2}\ge\sigma_1^2$,
\[
  \norm{\mathbf{K}_s}
  \;\le\;1+(1+\mu_w)\,\frac{\sigma_\Nsite}{\sigma_1^2}\,\sqrt{d\,e^{\Lambda}C_0\,\Delta_{\max}} .
\]
The Bennett--Bernstein inequality for Hilbert-space
martingales~\citep{pinelis1994}, applied with $v_\Nsite$ and $\zeta_\Nsite$, gives
\[
  \Pr\Bigl[\max_n\norm{\mathbf{M}_n}\ge\sqrt{2v_\Nsite \lambda_p}+\tfrac23\zeta_\Nsite \lambda_p\Bigr]\le p,
  \qquad \lambda_p=\log\tfrac2p .
\]
On the complementary event, $(x+y)^2\le2x^2+2y^2$ gives
$\|\mathbf{E}^{\mathrm{lin}}_\Nsite\|^2\le4v_\Nsite \lambda_p+\tfrac89\zeta_\Nsite^2\lambda_p^2$.
Dividing by $d\sigma_\Nsite^2$ and using part (i) for $v_\Nsite$ and the display
above for $\zeta_\Nsite$,
\[
  \frac{\|\mathbf{E}^{\mathrm{lin}}_\Nsite\|^2}{d\sigma_\Nsite^2}
  \;\le\;C\,\bar{q}^2\lambda_p+C'\bar{q}^2\lambda_p^2,
  \qquad
  C:=8\Bigl[\tau_2\frac{\sigma_1^2}{\sigma_\Nsite^2}+(1+\mu_w)^2C_0e^{\Lambda}\Bigr],
\]
\[
  C':=\frac89\Bigl(\frac{\sigma_1}{\sigma_\Nsite}
  +(1+\mu_w)\,\frac{\sqrt{d\,e^{\Lambda}C_0\,\Delta_{\max}}}{\sigma_1}\Bigr)^2,
\]
since $\zeta_\Nsite/(\sqrt d\,\sigma_\Nsite)
\le\bar{q}\,\sigma_1/\sigma_\Nsite
+\bar{q}\,(1+\mu_w)\sqrt{d\,e^{\Lambda}C_0\,\Delta_{\max}}/\sigma_1$. Both constants
are depth independent. The factor $d$ inside $C'$ comes from bounding the
operator norm of $\bm{\Phi}_{\Nsite,t+2}\mathbf{J}_{t+1}$ by its Frobenius norm,
and it multiplies only the $\log^2$ term.

The statement is for the fixed endpoint $\Nsite$. For a prefix endpoint $n$ the
same proof gives the constants with $\sigma_n$ in place of $\sigma_\Nsite$, and
since $\sigma_1/\sigma_n\le1$, all of them are dominated by the uniform constants
\[
  \bar C:=8\bigl[\tau_2+(1+\mu_w)^2C_0e^{\Lambda}\bigr],
  \qquad
  \bar C':=\frac89\Bigl(1+(1+\mu_w)\,\frac{\sqrt{d\,e^{\Lambda}C_0\,\Delta_{\max}}}{\sigma_1}\Bigr)^2 .
\]
Covering all prefixes at level $p/\Nsite$ replaces $\lambda_p$ by
$\log(2\Nsite/p)$ via a union bound.
\end{proof}

\begin{proposition}[second-order remainder]\label{prop:remainder}
Under the hypotheses of Theorem~\ref{thm:prop}, fix $p\in(0,1)$, let $\Theta_n$ be the bound
that the proof of Theorem~\ref{thm:prop}(ii) gives for the prefix ending at site
$n$, at level $p/\Nsite$, let $\tau$ be the first site with
$\norm{\mathbf{E}_n}>2\Theta_n$ ($\tau=\Nsite$ if there is none), and set
$\mathbf{r}^\tau_n=\mathbf{r}_n$ for $n<\tau$ and $\mathbf{0}$ otherwise. Then
\[
  \Bigl(\mathbb{E}\,\Bigl\|\sum_{n<\Nsite}\bm{\Phi}_{\Nsite,n+1}\mathbf{r}^\tau_n\Bigr\|^2\Bigr)^{1/2}
  \;\le\;C_2\,\bar{q}^2\sqrt\Nsite\,\sqrt d\,\sigma_\Nsite,
\]
with $C_2$ growing at most as $\log^2(\Nsite/p)$. The sum equals the second-order
part $\mathbf{E}_\Nsite-\mathbf{E}_\Nsite^{\mathrm{lin}}$ whenever no site exceeds the
threshold.
\end{proposition}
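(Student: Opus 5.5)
The plan is to bound each stopped remainder at its own site, propagate it to the output with Lemma~\ref{lem:covprop}, and then sum the propagated terms. The sum is not a martingale: the remainders $\mathbf{r}_n$ have no mean-zero property. The cross terms therefore cannot be dropped, and I would use Minkowski's inequality in $L^2$ instead of orthogonality:
\[
\Bigl(\mathbb{E}\Bigl\|\sum_{n<\Nsite}\bm{\Phi}_{\Nsite,n+1}\mathbf{r}^\tau_n\Bigr\|^2\Bigr)^{1/2}\le\sum_{n<\Nsite}\Bigl(\operatorname{tr}\bigl(\bm{\Phi}_{\Nsite,n+1}\,\mathbb{E}[\mathbf{r}^\tau_n(\mathbf{r}^\tau_n)^\top]\,\bm{\Phi}_{\Nsite,n+1}^\top\bigr)\Bigr)^{1/2}.
\]
This crude route gives a factor $\Nsite$, not $\sqrt\Nsite$, unless the per-site terms decay. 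So I would keep track of the $\sigma$ weights, which should supply that decay.

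\textbf{Step 1 (per-site size).} On $\{n<\tau\}$ we have $\norm{\mathbf{E}_n}\le2\Theta_n$. From the proof of Theorem~\ref{thm:prop}(ii), with $\lambda=\log(2\Nsite/p)$, the threshold satisfies $\Theta_n^2\le(\bar C\lambda+\bar C'\lambda^2)\,\bar{q}^2 d\sigma_n^2$. Eqn.~\eqref{eq:err-expand} and (R3) then give
\[
\norm{\mathbf{r}^\tau_n}\le\tfrac12 H_n(2\Theta_n)^2\le 2\bar h\,(\bar C\lambda+\bar C'\lambda^2)\,\bar{q}^2\sqrt d .
\]
The $\sigma_n^2$ coming from $\Theta_n^2$ cancels the $1/\sigma_n^2$ in $H_n$. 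Consequently $\operatorname{tr}\mathbb{E}[\mathbf{r}^\tau_n(\mathbf{r}^\tau_n)^\top]\le c\,\lambda^4\bar{q}^4 d$ uniformly in $n$.

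\textbf{Step 2 (propagation).} The matrix $\mathbb{E}[\mathbf{r}^\tau_n(\mathbf{r}^\tau_n)^\top]$ is, by construction, a member of the third collection of Eqn.~\eqref{eq:family} at site $n+1$. Lemma~\ref{lem:covprop} therefore applies and gives
\[
\operatorname{tr}\bigl(\bm{\Phi}_{\Nsite,n+1}\,\mathbb{E}[\cdot]\,\bm{\Phi}_{\Nsite,n+1}^\top\bigr)\le e^{\Lambda}\,\frac{\sigma_\Nsite^2}{\sigma_{n+1}^2}\,c\,\lambda^4\bar{q}^4 d .
\]
The propagated term at site $n$ is thus at most $\sqrt{c e^\Lambda}\,\lambda^2\bar{q}^2\sqrt d\,\sigma_\Nsite/\sigma_{n+1}$.

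\textbf{Step 3 (summation).} This is the main obstacle. Summing $1/\sigma_{n+1}$ with (R1), $\sigma_{n+1}^2\ge\sigma_1^2+n\Delta_{\min}$, gives $\sum_n1/\sigma_{n+1}\lesssim\sqrt{\Nsite/\Delta_{\min}}$. This yields exactly the $\sqrt\Nsite$ factor of the statement, with the constant
\[
C_2\propto\bar h\,(\bar C\lambda+\bar C'\lambda^2)\,e^{\Lambda/2}\bigl(\sigma_1^{-1}+2/\sqrt{\Delta_{\min}}\bigr),
\]
which grows as $\log^2(\Nsite/p)$. Two points need care:
\begin{itemize}
\item The $\sigma_1$ and $\Delta_{\min}$ factors must combine into a depth-independent constant. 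Factors of $\sigma_1$ can be absorbed into $\bar q$, and $\sigma_1$ is allowed in the constant, so this should be acceptable.
\item The stopping time is adapted, so the stopped remainder is still a legitimate random vector with finite second moment. This is why the third collection is defined using the stopped version.
\end{itemize}

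\textbf{Last claim.} If $\tau=\Nsite$, then $\mathbf{r}^\tau_n=\mathbf{r}_n$ for every $n$. Subtracting $\mathbf{E}^{\mathrm{lin}}_\Nsite$ from Eqn.~\eqref{eq:carry} leaves exactly $\sum_n\bm{\Phi}_{\Nsite,n+1}\mathbf{r}_n$, which is the sum in the statement.
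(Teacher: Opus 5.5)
Your proposal is correct and follows the paper's proof essentially step for step: the per-site bound $\norm{\mathbf{r}^\tau_n}\le 2\bar h(\bar C\lambda+\bar C'\lambda^2)\bar{q}^2\sqrt d$ from R3 and the stopping rule, propagation of $\mathbb{E}[\mathbf{r}^\tau_n(\mathbf{r}^\tau_n)^\top]$ through Lemma~\ref{lem:covprop} as a member of the third collection of Eqn.~\eqref{eq:family}, and Minkowski's inequality in $L^2$. The only difference is the final summation: the paper telescopes $\sum_n 1/\sigma_{n+1}\le 2\sigma_\Nsite/\Delta_{\min}$ and then converts one factor of $\sigma_\Nsite$ into $\sqrt\Nsite$ via $\sigma_\Nsite\le(\sigma_1+\sqrt{\Delta_{\max}})\sqrt\Nsite$, whereas you obtain $\sqrt\Nsite$ directly from $\sigma_{n+1}^2\ge\sigma_1^2+n\Delta_{\min}$ by integral comparison, so $\Delta_{\max}$ does not appear in your $C_2$.
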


\begin{proof}[Proof of Proposition~\ref{prop:remainder}]
Set $\kappa_p^2:=\bar C\log(2\Nsite/p)+\bar C'\log^2(2\Nsite/p)$,
the uniform constant with which part (ii), applied at level $p/\Nsite$ and
extended to all prefixes, bounds every prefix deviation, so that
$\Theta_n=\kappa_p\,\bar{q}\,\sqrt d\,\sigma_n$, with $\tau$ and
$\mathbf{r}^\tau_n$ as in the statement. The
truncated sum $\sum_{n<\Nsite}\bm{\Phi}_{\Nsite,n+1}\mathbf{r}^\tau_n$ has the
deterministic endpoint $\Nsite$, and on the event $\tau=\Nsite$ it equals
$\mathbf{E}_\Nsite-\mathbf{E}^{\mathrm{lin}}_\Nsite$, the second-order part of
Eqn.~\eqref{eq:carry}.

For $n<\tau$, R3 and the definition of $\tau$ give
\[
  \norm{\mathbf{r}^\tau_n}
  \;\le\;\tfrac12H_n\norm{\mathbf{E}_n}^2
  \;\le\;\frac{\bar h}{2\sigma_n^2\sqrt d}\,(2\Theta_n)^2
  \;=\;2\bar h\,\kappa_p^2\,\bar{q}^2\sqrt d .
\]
The covariances of the $\mathbf{r}^\tau_n$ form the third collection of
Eqn.~\eqref{eq:family}, so Lemma~\ref{lem:covprop}, Minkowski's inequality, and
$\sum_n1/\sigma_{n+1}\le2\sigma_\Nsite/\Delta_{\min}$ (from R1) give
\begin{align*}
  \Bigl\|\sum_{n<\Nsite}\bm{\Phi}_{\Nsite,n+1}\mathbf{r}^\tau_n\Bigr\|_{L^2}
  \;\le\;\sum_n e^{\Lambda/2}\,\frac{\sigma_\Nsite}{\sigma_{n+1}}\,
        \bigl\|\mathbf{r}^\tau_n\bigr\|_{L^2}
  \;\le\;2\bar h\,\kappa_p^2\,e^{\Lambda/2}\bar{q}^2\sqrt d\;
        \frac{2\sigma_\Nsite^2}{\Delta_{\min}} .
\end{align*}
Since $\sigma_\Nsite\le\sigma_1+\sqrt{\Delta_{\max}\Nsite}
\le(\sigma_1+\sqrt{\Delta_{\max}})\sqrt\Nsite$,
\[
  \frac{4\bar h\,\kappa_p^2\,e^{\Lambda/2}}{\Delta_{\min}}\,
  \bar{q}^2\sqrt d\;\sigma_\Nsite^2
  \;\le\;
  \underbrace{\frac{4\bar h\,e^{\Lambda/2}(\sigma_1+\sqrt{\Delta_{\max}})}{\Delta_{\min}}\,
  \kappa_p^2}_{=:C_2}\;
  \bar{q}^2\sqrt\Nsite\,\sqrt d\,\sigma_\Nsite .
  \qedhere
\]
\end{proof}

\begin{corollary}[second-order budget]\label{cor:budget}
If in addition
\begin{equation}\label{eq:budget}
  \sum_{n}\frac{\bigl(\mathbb{E}\norm{\mathbf{r}^\tau_n}^2\bigr)^{1/2}}{\sqrt d\,\sigma_{n+1}}
  \;\le\;\Lambda_2\,\bar{q}^2
\end{equation}
with $\Lambda_2$ independent of $\Nsite$, then
\[
  \Bigl(\mathbb{E}\,\Bigl\|\sum_{n<\Nsite}\bm{\Phi}_{\Nsite,n+1}\mathbf{r}^\tau_n\Bigr\|^2\Bigr)^{1/2}
  \;\le\;\sum_n e^{\Lambda/2}\,\frac{\sigma_\Nsite}{\sigma_{n+1}}\,
        \bigl\|\mathbf{r}^\tau_n\bigr\|_{L^2}
  \;\le\;e^{\Lambda/2}\Lambda_2\,\bar{q}^2\sqrt d\,\sigma_\Nsite,
\]
with no factor $\sqrt\Nsite$.
\end{corollary}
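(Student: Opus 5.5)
The plan is to repeat the Minkowski step from the proof of Proposition~\ref{prop:remainder}, replacing the pointwise bound on the stopped remainders with the summed budget \eqref{eq:budget}. First I write the truncated second-order part as a finite sum of vectors $\bm{\Phi}_{\Nsite,n+1}\mathbf{r}^\tau_n$. I then apply Minkowski's inequality in $L^2$:
\[
\Bigl\|\sum_{n<\Nsite}\bm{\Phi}_{\Nsite,n+1}\mathbf{r}^\tau_n\Bigr\|_{L^2}\le\sum_n\bigl\|\bm{\Phi}_{\Nsite,n+1}\mathbf{r}^\tau_n\bigr\|_{L^2}.
\]

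Next I bound each term. Here $\mathbb{E}\norm{\bm{\Phi}_{\Nsite,n+1}\mathbf{r}^\tau_n}^2=\operatorname{tr}(\bm{\Phi}_{\Nsite,n+1}\bm{\Sigma}_n\bm{\Phi}_{\Nsite,n+1}^\top)$ with $\bm{\Sigma}_n=\mathbb{E}[\mathbf{r}^\tau_n(\mathbf{r}^\tau_n)^\top]$. Up to normalization, $\bm{\Sigma}_n$ is a member of the third collection of Eqn.~\eqref{eq:family} at site $n+1$, namely the case $k=j+1$, where $\bm{\Phi}_{n+1,n+1}=\mathbf{I}$. Lemma~\ref{lem:covprop} with $j=n+1$ and endpoint $\Nsite$ then gives
\[
\bigl\|\bm{\Phi}_{\Nsite,n+1}\mathbf{r}^\tau_n\bigr\|_{L^2}\le e^{\Lambda/2}\frac{\sigma_\Nsite}{\sigma_{n+1}}\bigl\|\mathbf{r}^\tau_n\bigr\|_{L^2}.
\]
This is the first inequality of the corollary.

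Finally, I factor out $e^{\Lambda/2}\sqrt d\,\sigma_\Nsite$. What remains is $\sum_n\|\mathbf{r}^\tau_n\|_{L^2}/(\sqrt d\,\sigma_{n+1})$, which hypothesis \eqref{eq:budget} bounds by $\Lambda_2\bar q^2$. This gives the second inequality. No factor $\sqrt\Nsite$ appears, because the per-site pointwise bound and the sum $\sum_n1/\sigma_{n+1}$ from the proof of Proposition~\ref{prop:remainder} are never used; the summation over sites is absorbed entirely into $\Lambda_2$.

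The only point needing care is the membership step. Lemma~\ref{lem:covprop} is stated for normalized members of the completed family, so I must check that the trace inequality is scale invariant: both sides of \eqref{eq:asm-trace} are linear in $\bm{\Sigma}$, so rescaling costs nothing. The case $\bm{\Sigma}_n=\mathbf{0}$ is trivial. I must also use the convention stated after Eqn.~\eqref{eq:family}, that the assumption holds with the same budget $\Lambda$ for the truncation threshold in use. With these checks the argument is a direct two-line estimate.
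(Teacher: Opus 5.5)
Your proposal is correct and is the argument the paper uses. Minkowski's inequality together with Lemma~\ref{lem:covprop}, applied to the stopped-remainder covariances in the third collection of Eqn.~\eqref{eq:family}, gives the first inequality as in the proof of Proposition~\ref{prop:remainder}, and hypothesis~\eqref{eq:budget} then replaces the pointwise bound and the sum $\sum_n 1/\sigma_{n+1}$ to give the second; your checks on homogeneity and on the truncation-threshold convention are the points the paper leaves implicit.
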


Condition~\eqref{eq:budget} asks that the total second-order response along a pass
stay at the level of a single response. Standard post-training bias correction
shows that quantization-induced mean shifts can be compensated
explicitly~\citep{nagel2019}, which motivates the mean part of this budget without
proving it.

\subsection{Sharpness}

\begin{proposition}[sharpness]\label{prop:sharp}
(a) There is a network satisfying R1--R3 and the noise model in which
$\mathbb{E}\|\mathbf{E}^{\mathrm{lin}}_\Nsite\|^2/(d\sigma_\Nsite^2)$ grows
exponentially in $\sqrt\Nsite$. Its budget $\Lambda$ in
Assumption~\ref{asm:noreinf} grows as $\sqrt\Nsite$. (b)
There is a network satisfying R1--R3, the noise model, and
Eqn.~\eqref{eq:asm-trace} with every $\varepsilon_k=0$ for every positive
semidefinite $\bm{\Sigma}$, in which the stopped remainder of
Proposition~\ref{prop:remainder} attains the order $\bar{q}^2\sqrt\Nsite$.
\end{proposition}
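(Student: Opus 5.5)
Both parts are explicit constructions in which every block is linear in the deviation, up to a controlled curvature term. The hidden-state energy will grow at a fixed rate, so R1 holds with $\Delta_{\min}=\Delta_{\max}=\Delta$. The amplification or the curvature is confined to a single fixed unit direction $\mathbf{e}$. Throughout, the clean trajectory is $\mathbf{z}_k$ with $\norm{\mathbf{z}_k}^2=d(\sigma_0^2+k\Delta)$. This can be arranged by letting the block output $F_k(\mathbf{z}_k)$ add an orthogonal fresh component of energy $d\Delta$. The Jacobians are specified independently of this energy-injecting part. For the noise we take uniform isotropic $\injnoise_s$ on the sphere of radius $\qbound$, which satisfies Eqn.~\eqref{eq:noise-model} with equality in the covariance.

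\textbf{Part (a).} Take $\mathbf{J}_k=c_k\,\mathbf{e}\mathbf{e}^\top$ with $c_k=\gamma/\sqrt{k+1}$. R2 then requires $c_k^2\le C_0\,d\,\Delta/\sigma_k^2\asymp C_0 d/(k+1)$, which holds for a fixed constant $C_0$. R3 holds trivially if $F_k$ is affine in the deviation. Along $\mathbf{e}$ the propagation is $\bm{\Phi}_{\Nsite,s+1}\mathbf{e}=\prod_{k>s}(1+c_k)\,\mathbf{e}$, and this product is $\exp(\Theta(\gamma\sqrt\Nsite))$ for small $s$. By Lemma~\ref{lem:ksmat} the coefficient $\mathbf{K}_s$ restricted to $\mathbf{e}$ equals $T_{\Nsite-1-s}+\sum_{t\ge s}T_{t-s}c_{t+1}\prod_{k>t+1}(1+c_k)$. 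Every term here is nonnegative, and the $t=s$ term alone is at least $w$-independent times $e^{\Theta(\sqrt\Nsite)}$. So Lemma~\ref{lem:msid}, used as an equality for isotropic noise, gives $\mathbb{E}\|\mathbf{E}^{\mathrm{lin}}\|^2\ge(\qbound^2/d)\,e^{\Theta(\sqrt\Nsite)}$. Dividing by $d\sigma_\Nsite^2\asymp d\Nsite$ still leaves growth exponential in $\sqrt\Nsite$. For the budget, the minimal admissible $\varepsilon_k$ for $\bm{\Sigma}=\mathbf{e}\mathbf{e}^\top$ satisfies $(1+c_k)^2\le(1+\varepsilon_k)\sigma_{k+1}^2/\sigma_k^2$. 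This gives $\varepsilon_k\approx 2c_k-1/k$, whose sum is $\Theta(\sqrt\Nsite)$. The budget can be no smaller, because Lemma~\ref{lem:covprop} combined with the lower bound would otherwise be contradicted.

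\textbf{Part (b).} Here we take $\mathbf{J}_k=0$ at the clean trajectory, so $\bm{\Phi}=\mathbf{I}$ and Eqn.~\eqref{eq:asm-trace} holds with $\varepsilon_k=0$ for every $\bm{\Sigma}\succeq0$, since $\sigma_{k+1}\ge\sigma_k$. We add curvature by a quadratic term $F_k(\mathbf{z}_k+\mathbf{E})-F_k(\mathbf{z}_k)=\tfrac12 h_k\norm{\mathbf{E}}^2\mathbf{e}$ with $h_k=\bar h/(\sigma_k^2\sqrt d)$, which saturates R3. In practice this should be a smooth bounded version, so that R3 holds on the relevant segment. The remainder $\mathbf{r}_n=\tfrac12 h_n\norm{\mathbf{E}_n}^2\mathbf{e}$ then always points along $+\mathbf{e}$, so the remainders add coherently and do not cancel. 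Before stopping, $\norm{\mathbf{E}_n}^2$ is dominated by the accumulated compensated noise. Its expectation is at least $\mathbb{E}\norm{\mathbf{S}_{n-1}}^2\ge T_0^2\qbound^2=\qbound^2$ from the isotropic variance in Remark~\ref{rem:srcvar}, up to a lower-order correction from the remainders themselves. Summing gives $\mathbb{E}\langle\sum_n\mathbf{r}^\tau_n,\mathbf{e}\rangle\gtrsim\sum_n\bar h\,\qbound^2/(\sigma_n^2\sqrt d)$. With $\sigma_n^2\asymp n$ this alone yields only $\log\Nsite$. So we instead let the curvature constant track the worst case allowed: we keep R3 but let the deviation scale be $\sqrt d\,\sigma_n\bar q$ times the $\kappa_p$ factor, that is, we choose the noise so that $\norm{\mathbf{E}_n}^2$ is of order $\Theta_n^2\propto\sigma_n^2$. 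This requires the uncompensated growth, which we obtain by taking a kernel with large $\mu_w$, or equivalently by aligning a constant fraction of the isotropic noise accumulation. Then each $\norm{\mathbf{r}_n}\asymp\bar h\kappa^2\bar q^2\sqrt d$, and $\Nsite$ coherent terms give $\Nsite\,\bar q^2\sqrt d\asymp\bar q^2\sqrt\Nsite\,\sqrt d\,\sigma_\Nsite$. This matches Proposition~\ref{prop:remainder}, and Jensen's inequality transfers the bound on the mean along $\mathbf{e}$ into the $L^2$ norm.

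The main obstacle is part (b). The compensated source keeps $\norm{\mathbf{E}_n}$ at the level $\qbound$, not $\sigma_n\bar q\sqrt d$, so a naive construction only gives $\log\Nsite$. The first thing I would try is to weaken the compensation: exploit that the statement fixes only R1--R3, the noise model and Eqn.~\eqref{eq:asm-trace}, and use the $\eta<1$ uncompensated share, or a slowly decaying kernel, to make $\mathbb{E}\norm{\mathbf{E}_n}^2$ grow linearly in $n$. Failing that, I would make $\bar h$ dominant through the normalization scaling, since it enters linearly in $C_2$.
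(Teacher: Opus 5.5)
Your part (a) is essentially the paper's construction and argument. You take $\mathbf{J}_k=(c/\sigma_k)\,\mathbf{e}\mathbf{e}^\top$ with $\sigma_k^2=\sigma_0^2+k\Delta$, use the one-sign terms of $\mathbf{K}_s\mathbf{e}$ from Lemma~\ref{lem:ksmat}, and apply Lemma~\ref{lem:msid} as an equality for spherical noise. You check the budget on $\bm\Sigma=\mathbf{e}\mathbf{e}^\top$, which is a normalized member of the second collection in Eqn.~\eqref{eq:family} and the worst case for $(\mathbf{I}+\mathbf{J}_k)^\top(\mathbf{I}+\mathbf{J}_k)$. The paper checks the isotropic deviations $\bm\Phi_{k,s+1}\bm\Phi_{k,s+1}^\top$ instead, and your choice is a harmless variant.

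Part (b) has a genuine gap, and it comes from the choice $\mathbf{J}_k=0$. With $\bm\Phi=\mathbf{I}$ and gain $\eta=1$ (the setting of Proposition~\ref{prop:remainder}), the linearized deviation is the source itself, $\mathbf{E}^{\mathrm{lin}}_n=\mathbf{S}_{n-1}$. Theorem~\ref{thm:source} bounds this by $(1+\mu_w)\qbound$ deterministically, for \emph{every} admissible error sequence. So to leading order $\norm{\mathbf{r}_n}\lesssim\bar h(1+\mu_w)^2\qbound^2/(\sigma_n^2\sqrt d)$, and the coherent sum is $O(\qbound^2\log\Nsite/\sqrt d)$. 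The target $\bar q^2\sqrt\Nsite\,\sqrt d\,\sigma_\Nsite$ is of order $\Nsite\qbound^2/\sqrt d$.

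None of your remedies closes this gap within the hypotheses:
\begin{itemize}
\item Aligning the noise contradicts the mean-zero isotropic model of Eqn.~\eqref{eq:noise-model}, and no sequence beats the deterministic source bound anyway.
\item A gain $\eta<1$ leaves the gain-one setting of Theorem~\ref{thm:prop} and Proposition~\ref{prop:remainder}.
\item A kernel whose $\mu_w$ grows with $\Nsite$ makes $C_2$ depth dependent, since $C_2$ contains $\kappa_p^2$ and hence $\tau_2$ and $(1+\mu_w)^2$.
\item Driving $\norm{\mathbf{E}_n}$ to the threshold $\Theta_n$ would activate the stopping rule and destroy the coherence you need.
\item Enlarging $\bar h$ changes only a constant.
\end{itemize}

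The missing idea is that Eqn.~\eqref{eq:asm-trace} with $\varepsilon_k=0$ still lets a deviation grow at the critical rate $\sigma_{k+1}/\sigma_k$, as fast as the hidden state. Full compensation does not cancel an error propagated this way. The paper uses $0$-homogeneous blocks $F_k(\mathbf{x})=b_k\,n(\mathbf{x})+\tfrac12h\sqrt d\,\norm{\bm\Pi n(\mathbf{x})}^2\mathbf{f}$ on the ray $\mathbf{z}_k=\sqrt d\,\sigma_k\mathbf{e}_0$. Here $n(\mathbf{x})=\mathbf{x}/\norm{\mathbf{x}}$, $b_k=\sqrt d(\sigma_{k+1}-\sigma_k)$, and $\bm\Pi=\mathbf{I}-\mathbf{e}_0\mathbf{e}_0^\top$. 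Then $\mathbf{J}_k=\alpha_k\bm\Pi$ with $1+\alpha_k=\sigma_{k+1}/\sigma_k$, so $(\mathbf{I}+\mathbf{J}_k)^\top(\mathbf{I}+\mathbf{J}_k)\preceq(\sigma_{k+1}^2/\sigma_k^2)\mathbf{I}$.

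Inject a single error at site $0$ and zero errors afterwards, which the noise model permits. This gives $\bm\Pi\mathbf{E}^{\mathrm{lin}}_k=\sigma_k g_k\bm\Pi\injnoise_0$ with $g_k=1/\sigma_1-\sum_{t=1}^{k-1}w_{t-1}/\sigma_{t+1}$, which decreases to a limit $g_\infty>0$. The later corrections $-w_{k-1}\injnoise_0$ are amplified less than the original error, so the tangential deviation grows like $\sigma_k$.

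The R3-saturating quadratic term then produces at every site a one-sign remainder of constant size $\tfrac{h}{2\sqrt d}g_k^2\norm{\bm\Pi\injnoise_0}^2$ along the fixed tangential direction $\mathbf{f}$. Each remainder is carried to the output with gain $\sigma_\Nsite/\sigma_{n+1}$. Since $\sum_n\sigma_\Nsite/\sigma_{n+1}\asymp\sigma_\Nsite\sqrt\Nsite$, the total is of order $\bar q^2\sqrt\Nsite\,\sqrt d\,\sigma_\Nsite$. Taking $\qbound$ small for each $\Nsite$ keeps the truncation inactive. You must still check, as the paper does, that the curvature of $n(\mathbf{x})$ adds only mean-zero cross terms carrying the decaying factor $T_{k-1}$.
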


\begin{proof}[Proof of (a)]
Fix orthonormal vectors $\mathbf{e},\mathbf{e}_0$, a constant $c>0$, and the
scale path $\sigma_k^2=\sigma_0^2+k\Delta$, and take the affine blocks
\[
  F_k(\mathbf{x})=\frac{c}{\sigma_k}\,\mathbf{e}\mathbf{e}^\top\mathbf{x}
  +\sqrt d\,(\sigma_{k+1}-\sigma_k)\,\mathbf{e}_0
\]
on the clean ray $\mathbf{z}_k=\sqrt d\,\sigma_k\mathbf{e}_0$. Then
$\mathbf{z}_{k+1}=\mathbf{z}_k+F_k(\mathbf{z}_k)$ stays on the ray,
\[
  \mathbf{J}_k=\frac{c}{\sigma_k}\,\mathbf{e}\mathbf{e}^\top,
  \qquad
  (\mathbf{I}+\mathbf{J}_k)\mathbf{e}=\Bigl(1+\frac{c}{\sigma_k}\Bigr)\mathbf{e},
\]
and $H_k=0$, so R3 is trivial and R1 and R2 hold with constants depending on
$c$, $\Delta$, and $d$. Let the injected errors be independent and uniform on the
sphere of radius $\qbound$, so the noise model holds with conditional covariance
exactly $(\qbound^2/d)\,\mathbf{I}$ and Lemma~\ref{lem:msid} is an equality.

In Lemma~\ref{lem:ksmat} all terms of $\mathbf{K}_s\mathbf{e}$ carry one sign, so
for an early site $s$,
\[
  \norm{\mathbf{K}_s\mathbf{e}}
  \;\ge\;T_0\,\frac{c}{\sigma_{s+1}}\prod_{\ell=s+2}^{\Nsite-1}\Bigl(1+\frac{c}{\sigma_\ell}\Bigr),
  \qquad
  \sum_{\ell}\log\Bigl(1+\frac{c}{\sigma_\ell}\Bigr)\;\asymp\;2c\sqrt{\Nsite/\Delta}.
\]
Hence
\[
  \mathbb{E}\,\|\mathbf{E}^{\mathrm{lin}}_\Nsite\|^2
  =\frac{\qbound^2}{d}\sum_{s'}\norm{\mathbf{K}_{s'}}_F^2
  \;\ge\;\frac{\qbound^2}{d}\,\norm{\mathbf{K}_{s}\mathbf{e}}^2,
\]
which grows exponentially in $\sqrt\Nsite$ even after dividing by
$d\sigma_\Nsite^2$.

It remains to check that the network violates
Assumption~\ref{asm:noreinf}. An injection at site $s$ produces at site $k$ the
deviation $\mathbf{v}=\bm{\Phi}_{k,s+1}\injnoise_s$ with covariance proportional
to
$\bm{\Phi}_{k,s+1}\bm{\Phi}_{k,s+1}^\top
=\mathbf{I}-\mathbf{e}\mathbf{e}^\top+\pi_k^2\,\mathbf{e}\mathbf{e}^\top$, where
$\pi_k=\prod_{\ell=s+1}^{k-1}(1+c/\sigma_\ell)\to\infty$. Since
$\mathbf{I}+\mathbf{J}_k$ fixes $\mathbf{e}^\perp$ and stretches $\mathbf{e}$ by
$1+c/\sigma_k$,
\[
  \frac{\mathbb{E}\|(\mathbf{I}+\mathbf{J}_k)\mathbf{v}\|^2}{\mathbb{E}\norm{\mathbf{v}}^2}
  =\frac{(d-1)+\pi_k^2(1+c/\sigma_k)^2}{(d-1)+\pi_k^2}
  \;\ge\;1+\frac{2c}{\sigma_k}\cdot\frac{\pi_k^2}{(d-1)+\pi_k^2}
  \;\ge\;1+\frac{c}{\sigma_k}
\]
once $\pi_k^2\ge d$. The assumption then requires
$(1+\varepsilon_k)(1+\Delta/\sigma_k^2)\ge1+c/\sigma_k$, so at every site with
$\pi_k^2\ge d$ and $\sigma_k\ge\max(2\Delta/c,\sqrt\Delta)$,
\[
  \varepsilon_k\;\ge\;\frac{c/\sigma_k-\Delta/\sigma_k^2}{1+\Delta/\sigma_k^2}
  \;\ge\;\frac{c}{4\sigma_k}.
\]
Both conditions hold from a fixed site on, with $\pi_k\to\infty$ at $s=0$ and
$\sigma_k\to\infty$, and the sum over the remaining sites,
$\sum_k c/(4\sigma_k)\asymp(c/2)\sqrt{\Nsite/\Delta}$, grows as $\sqrt\Nsite$, so no
$\Lambda$ independent of $\Nsite$ exists. Conversely,
$(\mathbf{I}+\mathbf{J}_k)^\top(\mathbf{I}+\mathbf{J}_k)\preceq(1+c/\sigma_k)^2\mathbf{I}$
gives $\varepsilon_k\le2c/\sigma_k+c^2/\sigma_k^2$, so the budget grows
exactly as $\sqrt\Nsite$.
\end{proof}

\begin{proof}[Proof of (b)]
Take orthonormal $\mathbf{e}_0,\mathbf{f}$,
$\bm{\Pi}=\mathbf{I}-\mathbf{e}_0\mathbf{e}_0^\top$,
$n(\mathbf{x})=\mathbf{x}/\norm{\mathbf{x}}$, a constant $h>0$, and the
$0$-homogeneous blocks
\[
  F_k(\mathbf{x})=b_k\,n(\mathbf{x})+\tfrac12h\sqrt d\,\norm{\bm{\Pi}n(\mathbf{x})}^2\,\mathbf{f},
  \qquad
  b_k=\sqrt d\,(\sigma_{k+1}-\sigma_k),
\]
on the clean ray $\mathbf{z}_k=\sqrt d\,\sigma_k\mathbf{e}_0$,
$\sigma_k^2=\sigma_0^2+k\Delta$. At the clean point the quadratic term has zero
first derivative, and
\[
  \mathbf{J}_k=\alpha_k\bm{\Pi},\quad 1+\alpha_k=\frac{\sigma_{k+1}}{\sigma_k},
  \qquad
  (\mathbf{I}+\mathbf{J}_k)^\top(\mathbf{I}+\mathbf{J}_k)
  =\mathbf{I}+\frac{\Delta}{\sigma_k^2}\,\bm{\Pi}
  \preceq\frac{\sigma_{k+1}^2}{\sigma_k^2}\,\mathbf{I},
\]
so Eqn.~\eqref{eq:asm-trace} holds with $\varepsilon_k=0$ for every positive
semidefinite $\bm{\Sigma}$, and R1--R3 hold.

Inject one error at site $0$, uniform on the sphere of radius $\qbound$, and none
later. Its tangential part is carried at the critical gain,
$\bm{\Phi}_{n,k}\big|_{\bm{\Pi}}=(\sigma_n/\sigma_k)\,\bm{\Pi}$, so
$\mathbf{X}_k:=\bm{\Pi}\mathbf{E}^{\mathrm{lin}}_k/\sigma_k$ satisfies
$\mathbf{X}_{k+1}=\mathbf{X}_k+\bm{\Pi}\bm{\rho}_k/\sigma_{k+1}$. For a single
injection Eqn.~\eqref{eq:kernel} gives $\bm{\rho}_0=\injnoise_0$ and
$\bm{\rho}_k=-w_{k-1}\injnoise_0$ for $k\ge1$, so
$\mathbf{X}_k=g_k\,\bm{\Pi}\injnoise_0$ with
\[
  g_k=\frac1{\sigma_1}-\sum_{t=1}^{k-1}\frac{w_{t-1}}{\sigma_{t+1}}
  =\frac{T_{k-1}}{\sigma_k}+\sum_{t=0}^{k-2}T_t\Bigl(\frac1{\sigma_{t+1}}-\frac1{\sigma_{t+2}}\Bigr),
\]
where the second form follows by summation by parts. Since
$g_{k+1}-g_k=-w_{k-1}/\sigma_{k+1}\le0$, the sequence decreases toward
$g_\infty=\sum_{t\ge0}T_t(1/\sigma_{t+1}-1/\sigma_{t+2})>0$, so $g_k\ge g_\infty$
for every $k$. Along $\mathbf{e}_0$ nothing is propagated, so only the tail of
the injected error remains,
$\langle\mathbf{e}_0,\mathbf{E}^{\mathrm{lin}}_k\rangle=T_{k-1}\langle\mathbf{e}_0,\injnoise_0\rangle$.

The $\mathbf{f}$-component of the second-order response at site $k$ collects two
curvatures. The quadratic term of the block gives
\[
  \frac{h}{2\sigma_k^2\sqrt d}\,\bigl\|\bm{\Pi}\mathbf{E}^{\mathrm{lin}}_k\bigr\|^2
  \bigl(1+O(\qbound)\bigr)
  \;=\;\frac{h}{2\sqrt d}\,g_k^2\,\norm{\bm{\Pi}\injnoise_0}^2\bigl(1+O(\qbound)\bigr),
\]
terms of one sign carrying the same random factor $\norm{\bm{\Pi}\injnoise_0}^2$,
of mean $\qbound^2(d-1)/d$. The curvature of $n(\mathbf{x})$ in the
$b_k\,n(\mathbf{x})$ term adds the cross term
\[
  -\frac{b_k}{d\sigma_k^2}\,
  \langle\mathbf{f},\mathbf{E}^{\mathrm{lin}}_k\rangle
  \langle\mathbf{e}_0,\mathbf{E}^{\mathrm{lin}}_k\rangle\bigl(1+O(\qbound)\bigr),
\]
of the same order in $\qbound$ but carrying the factor $T_{k-1}$, which decays
geometrically in $k$, and of mean zero. In the sums below the cross terms
therefore contribute $O(1)$ against $\asymp\sqrt\Nsite$ one-sign terms, and the
quadratic term dominates. The remaining second-order components of
$\mathbf{r}_k$ carry an extra factor $1/\sigma_k$ or $T_{k-1}$. Those along
$\mathbf{e}_0$ are orthogonal to $\mathbf{f}$, the $\mathbf{f}$-part of those along
$\bm{\Pi}\injnoise_0$ is the cross term above, and neither changes the sums. For each $\Nsite$ take $\qbound$ small enough that the
deviation stays within its linear bound, so the truncation is never active and
$\mathbf{r}^\tau_n=\mathbf{r}_n$. This is possible because
$\mathbf{E}^{\mathrm{lin}}_n$ and the threshold $\Theta_n$ both scale linearly in
$\qbound$ while the remainder is quadratic. The propagated responses then add
coherently along $\mathbf{f}$,
\[
  \sum_{n}\frac{(\mathbb{E}\norm{\mathbf{r}^\tau_n}^2)^{1/2}}{\sqrt d\,\sigma_{n+1}}
  \;\asymp\;\bar{q}^2\sqrt\Nsite,
  \qquad
  \frac1{d\sigma_\Nsite^2}\,
  \mathbb{E}\Bigl\|\sum_n\bm{\Phi}_{\Nsite,n+1}\mathbf{r}^\tau_n\Bigr\|^2
  \;\asymp\;\bar{q}^4\Nsite .
\]
The network satisfies every hypothesis of Theorem~\ref{thm:prop} with
$\varepsilon_k=0$, so no argument from those hypotheses alone can remove the
$\sqrt\Nsite$ of Proposition~\ref{prop:remainder}.
\end{proof}

\subsection{The backward source bound}

On the backward pass the compensator runs the same recurrence along the backward
site sequence (Sec.~\ref{subsec:backward-impl}), so at gain one the correction
applied at a site is again built from the errors injected at the sites visited
before it, with the weights of Eqn.~\eqref{eq:kernel}. We keep the forward index
throughout. The backward pass visits the sites in decreasing order of $k$, so the
error injected at site $s$ is corrected at the sites $k=s-1,s-2,\dots$, where
$\sigma_k\le\sigma_s$. What changes is the point of entry. An error
or a correction handled at site $k$ reaches the gradient of the hidden state
through the Jacobian of the normalization at that site
(Sec.~\ref{subsec:noise-process}). Write this operator at site $k$ as
$\mathbf{A}_k=\mathbf{D}_k/\sigma_k$ with $\norm{\mathbf{D}_k}\le\bar\gamma$,
where $\bar\gamma$ is a constant of the order of the largest magnitude
among the normalization weights. The
error $\injnoise_s$ then enters through $\mathbf{A}_s$ at its own site, while the
fraction $w_{s-1-k}\injnoise_s$ subtracted at a later visited site $k<s$ enters
through $\mathbf{A}_k$. Once the backward pass has reached site $t$, the
compensated partial sum is
\[
  \mathbf{S}^{\mathrm{bwd}}_t=\sum_{s\ge t}\mathbf{B}_{s,t}\,\injnoise_s,
  \qquad
  \mathbf{B}_{s,t}=\mathbf{A}_s-\sum_{k=t}^{s-1}w_{s-1-k}\,\mathbf{A}_k .
\]
A correction now leaves through a different operator than the one its error
entered through, so the cancellation is no longer exact, and the bound depends on
how fast the operators vary along the trajectory.

\begin{proposition}[backward source]\label{prop:bwdsrc}
Suppose, in addition to R1, that the backward injections satisfy the noise model
of Eqn.~\eqref{eq:noise-model} with the history taken in the order in which the
backward pass visits the sites and with $\qbound$ their bound in gradient units, and
that the operators vary smoothly along
the trajectory,
$\norm{\mathbf{D}_s-\mathbf{D}_k}\le c_D\,(s-k)\sqrt{\Delta_{\max}}/\sigma_s$
for $k\le s$. Then
\[
  \max_t\;\mathbb{E}\,\bigl\|\mathbf{S}^{\mathrm{bwd}}_t\bigr\|^2
  \;\le\;C_{\mathrm{bwd}}\;\frac{\qbound^2}{\sigma_1^2},
\]
with $C_{\mathrm{bwd}}$ independent of $\Nsite$, built from $\bar\gamma$, $c_D$,
the kernel moments $\mu_w$ and $\tau_2$, and the scale constants
$\sigma_0,\sigma_1,\Delta_{\min},\Delta_{\max}$.
\end{proposition}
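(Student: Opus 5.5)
The plan follows Remark~\ref{rem:srcvar}: use the martingale structure to reduce the mean square to a sum of deterministic operator norms. Under the noise model, with the history ordered by the backward visit, the cross terms $\mathbb{E}\langle\mathbf{B}_{s,t}\injnoise_s,\mathbf{B}_{s',t}\injnoise_{s'}\rangle$ vanish for $s\neq s'$. Hence
\[
  \mathbb{E}\,\|\mathbf{S}^{\mathrm{bwd}}_t\|^2\le\qbound^2\sum_{s\ge t}\|\mathbf{B}_{s,t}\|^2 .
\]
We use operator norms times $\mathbb{E}\|\injnoise_s\|^2\le\qbound^2$, which avoids any Frobenius factor $d$. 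Everything then reduces to bounding $\sum_{s\ge t}\|\mathbf{B}_{s,t}\|^2$ by $C/\sigma_1^2$ uniformly in $t$ and $\Nsite$.

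The key step is to split $\mathbf{B}_{s,t}$ so that the exact-cancellation structure of Lemma~\ref{lem:tails} reappears. Add and subtract $\mathbf{A}_s$ inside the correction sum. With $m=s-t$ and $\sum_{r<m}w_r=1-T_m$,
\[
  \mathbf{B}_{s,t}=T_{s-t}\,\mathbf{A}_s+\sum_{k=t}^{s-1}w_{s-1-k}\,(\mathbf{A}_s-\mathbf{A}_k).
\]
For the first term, $\|\mathbf{A}_s\|\le\bar\gamma/\sigma_s\le\bar\gamma/\sigma_1$. This contributes at most $2\bar\gamma^2\tau_2/\sigma_1^2$ after $(x+y)^2\le2x^2+2y^2$ and $\sum_mT_m^2=\tau_2$. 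For the mismatch term, write
\[
  \mathbf{A}_s-\mathbf{A}_k=(\mathbf{D}_s-\mathbf{D}_k)/\sigma_s+\mathbf{D}_k(1/\sigma_s-1/\sigma_k).
\]
The smoothness hypothesis bounds the first piece by $c_D(s-k)\sqrt{\Delta_{\max}}/\sigma_s^2$. R1 gives $\sigma_s^2-\sigma_k^2\le(s-k)\Delta_{\max}$, so $|1/\sigma_k-1/\sigma_s|\le(s-k)\Delta_{\max}/(\sigma_k\sigma_s(\sigma_k+\sigma_s))$, and the second piece is at most $\bar\gamma(s-k)\Delta_{\max}/(\sigma_k^2\sigma_s)$. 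Each piece is therefore of the form age $\times$ an inverse power of the scale.

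The main obstacle is summing these pieces over $s$ without picking up a factor of $\Nsite$. Here $\sigma_k\le\sigma_s$ works against us in the second piece, and the mismatch weights $w_r\,(r+1)$ are summable only because $\sum_r(r+1)w_r=1+\mu_w$. I would bound the mismatch term of $\mathbf{B}_{s,t}$ by $c\,(1+\mu_w)/\sigma_{t}^2$-type quantities in a form that decays in $s$.

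\emph{First attempt.} Use $\sigma_s^2\ge\sigma_1^2+(s-1)\Delta_{\min}$. The first piece then gives $\sum_{s\ge t}\bigl(\sum_r(r+1)w_r\bigr)^2c_D^2\Delta_{\max}/\sigma_s^4$, which converges since $\sum_s1/\sigma_s^4\lesssim\sum_s1/(\sigma_1^2+s\Delta_{\min})^2\le C/(\sigma_1^2\Delta_{\min})$.

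\emph{The delicate piece.} For the second piece, only ages $r=s-1-k$ with significant $w_r$ matter. The point is to show that $\sigma_k$ is comparable to $\sigma_s$ on the bulk of the kernel mass.

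\emph{Fallback.} If that comparability is too weak, I would use the crude bound $\sigma_k\ge\sigma_1$. The weights then sum to a second kernel moment $\sum_r(r+1)^2w_r$, which is finite for geometric weights. The tail decay in $s$ still comes from the factor $1/\sigma_s$, squared and summed against $1/\sigma_s^2$, giving $\sum_s1/\sigma_s^2\cdot1/\sigma_1^2$, and this diverges logarithmically. So the fallback must keep one factor $1/\sigma_s^2$ from each piece to reach $\sum_s\sigma_s^{-4}$.

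Collecting the pieces gives $C_{\mathrm{bwd}}$ built from $\bar\gamma,c_D,\mu_w,\tau_2$ and the scale constants.
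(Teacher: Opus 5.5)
Your skeleton matches the paper's. Orthogonality of the increments in visiting order reduces everything to $\sum_{s\ge t}\|\mathbf{B}_{s,t}\|^2$. The add-and-subtract identity $\mathbf{B}_{s,t}=T_{s-t}\mathbf{A}_s+\sum_k w_{s-1-k}(\mathbf{A}_s-\mathbf{A}_k)$ isolates the mismatch, and you handle the $\mathbf{D}_s-\mathbf{D}_k$ piece exactly as the paper does.

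The gap is the piece you call delicate, which you leave open. Your estimate $\bar\gamma(s-k)\Delta_{\max}/(\sigma_k^2\sigma_s)$ keeps only one power of $1/\sigma_s$, and as you observe yourself, the squared sum over $s$ then diverges like $\log\Nsite$. You never carry out the comparability route, and you show the fallback fails. So the proposal never produces the depth-independent bound, which is the whole content of the statement. A comparability argument via $\sigma_s^2/\sigma_k^2\le1+(s-k)\Delta_{\max}/\sigma_0^2$ can be made to work. However, it brings in the second kernel moment $\sum_r(r+1)^2w_r$, which is not among the constants the statement lists.

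The missing step is a one-line change in where you drop the denominator. In $\sigma_s-\sigma_k=(\sigma_s^2-\sigma_k^2)/(\sigma_s+\sigma_k)$, use $\sigma_s+\sigma_k\ge\sigma_s$ rather than $\ge\sigma_k$. This gives $\sigma_s-\sigma_k\le(s-k)\Delta_{\max}/\sigma_s$, and you then spend the leftover $1/\sigma_k$ on the crude bound $\sigma_k\ge\sigma_0$:
\[
\bar\gamma\,\frac{\sigma_s-\sigma_k}{\sigma_s\sigma_k}\;\le\;\frac{\bar\gamma\,\Delta_{\max}}{\sigma_0}\,\frac{s-k}{\sigma_s^2}.
\]
Both pieces then have the form age times $\sigma_s^{-2}$, namely $\|\mathbf{A}_s-\mathbf{A}_k\|\le C_{\mathrm A}(s-k)/\sigma_s^2$ with $C_{\mathrm A}=c_D\sqrt{\Delta_{\max}}+\bar\gamma\Delta_{\max}/\sigma_0$. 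The first kernel moment bounds the mismatch term by $C_{\mathrm A}(1+\mu_w)/\sigma_s^2$, and $\sum_s\sigma_s^{-4}$ is bounded by integral comparison under R1. No comparability of $\sigma_k$ with $\sigma_s$ is needed.

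There is also a smaller slip. Sites start at $0$ and $\sigma_0<\sigma_1$, so $\|\mathbf{A}_s\|\le\bar\gamma/\sigma_1$ fails at $s=0$, which occurs when $t=0$. Likewise $\sigma_s^2\ge\sigma_1^2+(s-1)\Delta_{\min}$ needs $s\ge1$. Using $\sigma_0$ in both places only multiplies the constant by $\sigma_1^2/\sigma_0^2$.
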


\begin{proof}
Since $\sum_rw_r=1$,
\[
  \mathbf{B}_{s,t}
  =\mathbf{A}_s\,T_{s-t}
  +\sum_{k=t}^{s-1}w_{s-1-k}\,(\mathbf{A}_s-\mathbf{A}_k).
\]
For $k<s$, R1 gives $\sigma_s^2-\sigma_k^2\le(s-k)\Delta_{\max}$, hence
$\sigma_s-\sigma_k\le(s-k)\Delta_{\max}/\sigma_s$, and with $\sigma_k\ge\sigma_0$
and the smoothness hypothesis,
\[
  \norm{\mathbf{A}_s-\mathbf{A}_k}
  \;\le\;\frac{\norm{\mathbf{D}_s-\mathbf{D}_k}}{\sigma_s}
  +\bar\gamma\,\frac{\sigma_s-\sigma_k}{\sigma_s\sigma_k}
  \;\le\;\frac{C_{\mathrm{A}}\,(s-k)}{\sigma_s^2},
  \qquad
  C_{\mathrm{A}}=c_D\sqrt{\Delta_{\max}}+\frac{\bar\gamma\,\Delta_{\max}}{\sigma_0}.
\]
The kernel has $\sum_rw_r(r+1)=1+\mu_w$, the Frobenius norms carry a factor
$\sqrt d$, and Lemma~\ref{lem:msid} applies with $\mathbf{B}_{s,t}$ in place of
$\mathbf{K}_s$ and the history taken in visiting order, so
\begin{align*}
  \mathbb{E}\,\bigl\|\mathbf{S}^{\mathrm{bwd}}_t\bigr\|^2
  \;\le\;\frac{\qbound^2}{d}\sum_{s\ge t}\norm{\mathbf{B}_{s,t}}_F^2
  &\;\le\;2\qbound^2\,\bar\gamma^2\sum_{s\ge t}\frac{T_{s-t}^2}{\sigma_s^2}
  +2\qbound^2\,C_{\mathrm{A}}^2(1+\mu_w)^2\sum_{s\ge t}\frac{1}{\sigma_s^4}\\
  &\;\le\;\frac{2\qbound^2}{\sigma_1^2}\Bigl[\bar\gamma^2\,c_\Delta
  +C_{\mathrm{A}}^2(1+\mu_w)^2\,\frac{c_\Delta'}{\sigma_1^2}\Bigr],
\end{align*}
where $c_\Delta:=\sup_t\sum_{s\ge t}T_{s-t}^2\,\sigma_1^2/\sigma_s^2$ and
$c_\Delta':=\sum_s\sigma_1^4/\sigma_s^4$. Both are finite with explicit bounds:
$\sigma_s\ge\sigma_0$ gives $c_\Delta\le(\sigma_1^2/\sigma_0^2)\,\tau_2$, and
$\sigma_s^4\ge(\sigma_0^2+s\Delta_{\min})^2$ gives
$c_\Delta'\le\sigma_1^4/\sigma_0^4+\sigma_1^4/(\Delta_{\min}\sigma_0^2)$ by
comparison with the integral. Neither depends on $\Nsite$.
\end{proof}

\looseness=-1 The backward pass therefore has the analogous source guarantee, in mean
square and under a smoothness condition on the normalization Jacobians. The guarantee
follows from the same cancellation mechanism, with constants degraded by the operator mismatch
between a site and the sites that correct it. 
\section{Training Configuration}\label{app:train}

The training runs of Sec.~\ref{subsec:exp-train} use the NeMo Qwen3-0.6B
configuration, trained from scratch on Dolma C4: 28 layers, hidden size 1,024, FFN size
3,072, 16 attention heads with 8 query groups, RMSNorm, QK layer normalization,
RoPE base $10^6$, and no dropout. The 151,669-token vocabulary of the Qwen
tokenizer is padded to 151,936. Sequences have 2,048 tokens. The main run uses a
micro batch of 24 and a global batch of 384 sequences, 786,432 tokens per step,
for 12,716 steps, about 10.0B tokens in total. Parallelism is $\TP=4$ without
pipeline, context, data, or sequence parallelism. Training uses BF16 mixed
precision with FP32 master weights, gradients, and optimizer states. The
optimizer is AdamW with $\beta_1=0.9$, $\beta_2=0.95$, $\epsilon=10^{-5}$,
weight decay $0.1$, and gradient clipping at $1.0$. The learning rate
warms up over 100 steps to its peak, $3.0\times10^{-4}$ for the 16-bit run and
$3.5\times10^{-4}$ for the 4-bit run, and follows a cosine decay to
$3\times10^{-5}$. The learning-rate ablation of Table~\ref{tab:train}b uses the
same configuration with a global batch of 32 sequences for 4,096 steps, about
0.27B tokens. The reported loss is the training cross-entropy averaged over
tokens.

\section{Implementation Details}\label{app:abi}

The pipeline of Sec.~\ref{subsec:pipeline} keeps four stages busy at once on
each of the $\TP$ GPUs, and a tile has to move from one stage to the next
without the host taking part. Three things have to be in place beforehand. Each
GPU has to know where to write inside its peers. The stages within a GPU have to
overlap across tiles while the stages of any one tile stay in order. Finally, a GPU
has to detect when the data it needs from a peer has arrived.
The rest of this appendix describes these in turn, then the cache that keeps
their setup out of the per-call cost, and finally the exactness of the
backward rescaling of Sec.~\ref{subsec:backward-impl}.

\begin{figure}[!htb]
\centering
\begin{tikzpicture}[ecfig]
\def\cw{0.95}\def\segp{2.00}\def\rh{0.48}
\def\clr#1{\ifcase#1 pA\or pB\or pC\or pD\fi}

\def\rowA{{"0.8","-1.4","2.1","0.6","-0.5","1.9","1.2","-0.9"}}
\def\rowB{{"-1.1","0.7","0.4","-2.2","1.6","0.3","-0.8","1.5"}}
\def\rowC{{"1.3","0.9","-1.8","1.1","0.7","-1.4","2.0","0.4"}}
\def\rowD{{"0.6","-0.8","1.5","0.2","-1.2","0.9","0.3","-1.7"}}
\foreach \rr in {0,1,2,3}{
  \pgfmathtruncatemacro{\gn}{\rr}
  \node[hdr,anchor=east] at (-0.62,{-\rr*\rh}) {GPU~\gn};
  \foreach \kk in {0,...,7}{
    \pgfmathtruncatemacro{\jj}{\kk/2}
    \pgfmathsetmacro{\X}{\jj*\segp+mod(\kk,2)*\cw}
    \ifcase\rr \pgfmathsetmacro{\val}{\rowA[\kk]}\or \pgfmathsetmacro{\val}{\rowB[\kk]}%
      \or \pgfmathsetmacro{\val}{\rowC[\kk]}\or \pgfmathsetmacro{\val}{\rowD[\kk]}\fi
    \ifnum\rr=\jj
      \node[draw=\clr{\jj}!60!black,line width=1.1pt,fill=\clr{\jj}!42!white,
            minimum width=\cw cm,minimum height=\rh cm,inner sep=0pt,font=\fignote]
        at (\X,{-\rr*\rh}) {$\pgfmathprintnumber[fixed,fixed zerofill,precision=1]{\val}$};
    \else
      \node[draw=\clr{\jj}!85,line width=0.4pt,fill=\clr{\jj}!16!white,
            minimum width=\cw cm,minimum height=\rh cm,inner sep=0pt,font=\fignote]
        at (\X,{-\rr*\rh}) {$\pgfmathprintnumber[fixed,fixed zerofill,precision=1]{\val}$};\fi}}
\foreach \jj in {0,1,2,3}{\node[hdr,anchor=south] at ({\jj*\segp+0.475},0.36) {segment \jj};}
\node[note,anchor=north] at (3.71,-1.78) {each GPU quantizes the three segments it does not own};

\draw[flow] (7.70,-0.72)
  -- node[midway,above=3pt,font=\figlab,fill=white,inner sep=1pt] {all-to-all} (8.80,-0.72);

\begin{scope}[xshift=9.20cm]
\def\sumA{{"1.57","-0.64"}}\def\sumB{{"2.20","-0.36"}}
\def\sumC{{"0.56","1.59"}}\def\sumD{{"2.64","-0.77"}}
\foreach \rr in {0,1,2,3}{
  \foreach \kk in {0,1,2,3}{
    \ifnum\rr=\kk
      \node[draw=\clr{\rr}!60!black,line width=1.1pt,fill=\clr{\rr}!42!white,
            minimum width=0.28cm,minimum height=\rh cm,inner sep=0pt] at ({\kk*0.32},{-\rr*\rh}) {};
    \else
      \node[draw=\clr{\rr}!85,line width=0.4pt,densely dashed,fill=\clr{\rr}!10!white,
            minimum width=0.28cm,minimum height=\rh cm,inner sep=0pt] at ({\kk*0.32},{-\rr*\rh}) {};\fi}
  \node[font=\figlab] at (1.46,{-\rr*\rh}) {$\Sigma$};
  \foreach \kk in {0,1}{
    \ifcase\rr \pgfmathsetmacro{\val}{\sumA[\kk]}\or \pgfmathsetmacro{\val}{\sumB[\kk]}%
      \or \pgfmathsetmacro{\val}{\sumC[\kk]}\or \pgfmathsetmacro{\val}{\sumD[\kk]}\fi
    \node[draw=\clr{\rr}!85,line width=0.4pt,fill=\clr{\rr}!16!white,
          minimum width=\cw cm,minimum height=\rh cm,inner sep=0pt,font=\fignote]
      at ({2.16+\kk*\cw},{-\rr*\rh}) {$\pgfmathprintnumber[fixed,fixed zerofill,precision=2]{\val}$};}}
\node[hdr,anchor=south,align=center] at (1.80,0.36) {the owner adds four copies\\ in high precision};
\node[note,anchor=north,align=center] at (1.80,-1.86) {the solid copy is the owner's own,\\ the dashed ones crossed a link};
\end{scope}

\draw[flow,rounded corners=3pt]
     (13.25,-1.78) -- (13.25,-2.72)
  -- node[midway,below=3pt,font=\figlab,fill=white,inner sep=1pt] {all-gather} (7.80,-2.72);

\def\outv{{"1.57","-0.67","2.20","-0.31","0.45","1.59","2.64","-0.76"}}
\def\exv{{"1.60","-0.60","2.20","-0.30","0.60","1.70","2.70","-0.70"}}
\def\dev{{"-0.03","-0.07","0.00","-0.01","-0.15","-0.11","-0.06","-0.06"}}
\foreach \kk in {0,...,7}{
  \pgfmathtruncatemacro{\jj}{\kk/2}
  \pgfmathsetmacro{\X}{\jj*\segp+mod(\kk,2)*\cw}
  \pgfmathsetmacro{\val}{\outv[\kk]}
  \node[draw=\clr{\jj}!85,line width=0.4pt,fill=\clr{\jj}!16!white,
        minimum width=\cw cm,minimum height=\rh cm,inner sep=0pt,font=\fignote]
    at (\X,-2.72) {$\pgfmathprintnumber[fixed,fixed zerofill,precision=2]{\val}$};
  \pgfmathsetmacro{\wal}{\exv[\kk]}
  \node[draw=black!30,line width=0.4pt,fill=white,
        minimum width=\cw cm,minimum height=\rh cm,inner sep=0pt,font=\fignote]
    at (\X,-3.20) {$\pgfmathprintnumber[fixed,fixed zerofill,precision=2]{\wal}$};
  \pgfmathsetmacro{\dvv}{\dev[\kk]}
  \node[draw=carnelian!40,line width=0.4pt,fill=carnelian!5,text=carnelian,
        minimum width=\cw cm,minimum height=\rh cm,inner sep=0pt,font=\fignote]
    at (\X,-3.68) {$\pgfmathprintnumber[fixed,fixed zerofill,precision=2]{\dvv}$};}
\node[hdr,anchor=east] at (-0.62,-2.72) {Echo output};
\node[hdr,anchor=east] at (-0.62,-3.20) {exact sum};
\node[hdr,anchor=east,text=carnelian] at (-0.62,-3.68) {error $\bm{\delta}$};
\node[note,anchor=north] at (3.71,-4.02)
  {the same codes reach every GPU, so this row is what every GPU holds};
\end{tikzpicture}
\caption{\textbf{Worked example at $\TP=4$.} The quantizer is
Eqn.~\eqref{eq:qsym} with $\bits=4$ and a group equal to the two values of one
segment. Top left: each GPU quantizes the three segments it does not own and
keeps its own segment in full precision. Top right: the owner adds the four
copies in high precision, three of them dequantized from $\bits$-bit codes and
one its own. Bottom: the all-gather sends one quantized copy of each segment
sum, so every GPU holds the same values. The last row is the error this
all-reduce injects, the $\injnoise_\site$ of
Fig.~\ref{fig:noise-process}. Inputs are shown to one decimal and sums to two.}
\label{fig:decomp-detailed}
\end{figure}

\noindent\textbf{Workspace layout.}
Before a GPU can write into a peer, it must hold that peer's addresses. A call
uses three kinds of memory on every GPU: a buffer that receives the $\bits$-bit
codes its peers send, a buffer that receives the scale and zero-point that
decode those codes (Sec.~\ref{subsec:quant}), and an array of signal words
that peers write to announce that the data has arrived. Each GPU allocates its
own three and shares their addresses with the others when the tensor-parallel
group is created, which gives one table of $3\TP{+}1$ addresses: the $\TP$ code
buffers, the $\TP$ parameter buffers, the $\TP$ signal arrays, and one flag
base. The table is built once and stays fixed, and every later call reads the
same entries, so a kernel reaches any peer by indexing the table with its rank.
The compensator state $\*m^{(1)},\*m^{(2)}$ of
Sec.~\ref{subsec:fusion} is kept out of this table and owned
by the compensation path, so the interface of the collective is the same with
and without compensation.

\noindent\textbf{Stream and event schedule.}
Push, reduce, gather, and tail each get their own CUDA stream. Work issued to
different streams runs concurrently, so putting the stages on separate streams
lets push quantize tile $3$ while gather is still sending tile $1$.
Streams do not enforce the order within a tile, and tile $2$ must not be
reduced before it has been pushed. CUDA events enforce this order. When
push finishes tile $2$ it records an event, and reduce waits on that event
before it starts tile $2$. Since a stage waits only for the stage directly
before it, the event of a tile is no longer needed once the next stage has
started on the tile, and two event slots used in turn suffice for the whole pipeline
(Fig.~\ref{fig:decomposition}(b)). The caller is synchronized at the two ends. It
records an event before the first push, so the pipeline starts only once the
input is ready, and waits on one after the last tail, so its next kernel starts
only once the output is complete. The compensator update is issued on a fifth
stream after the last tail and runs alongside the caller's next matrix
multiplication, and the entry stage
of the next call waits on an event recorded at the end of the update. The host launches the kernels and returns. It
performs no synchronization while the tiles are in flight.

\noindent\textbf{Flag values.}
Events do not cross GPUs. A stage that needs a segment from a peer therefore
polls the flag of Sec.~\ref{subsec:pipeline} and waits until it reads the
value that stands for the data it needs. Each stage publishes a value one higher
than the stage before it. Push publishes the base value, reduce publishes the
base plus one, and gather publishes the base plus two, which is the value tail
polls for. The host raises the base by three at every call, so no value is used
twice in the life of the group, and no flag needs to be cleared.
A consumer that arrives early reads a smaller number and keeps polling, and a
flag still holding a value from the previous call is below anything the current
call polls for, so it cannot release a consumer by mistake. No slot is cleared
between calls and no CPU round trip takes place on the readiness path.

\noindent\textbf{Schedule cache.}
The tiling, the launch order, and the streams and events above depend only on
the shape of the call, so they are computed once and reused, with a cache keyed
by the data type, bit width, policy, group size, and segment size. The cache
stores the tiling and launch table together with the streams and events for each
device. Only the first call for a new shape, or the first after a rebuild, pays
this build and warm-up cost, which the latency measurements of
Sec.~\ref{sec:experiments} exclude, so all reported numbers are steady state.

\noindent\textbf{Backward rescaling.}
Section~\ref{subsec:backward-impl} multiplies the gradient by a shared
power-of-two factor before the quantized all-reduce and divides the result by
the same factor after it. Three choices make the scaling exact. The factor is
a power of two, so multiplying and dividing shift only the floating-point
exponent and add no rounding of their own. Its exponent is derived from the
largest gradient magnitude across the tensor-parallel group, obtained with a
$\max$ all-reduce of a single scalar, so every GPU applies the same factor
and the all-reduce computes the same sum, only rescaled. Finally, the two steps of
the compensator run outside the scaling. The compensator forms $\*x'=\*x+\*c$ before the
multiplication and $\*u=\*x'-\hat{\*y}/\TP$ after the division, so
$\*m^{(1)}$, $\*m^{(2)}$ stay in original gradient units and the analysis of
Secs.~\ref{subsec:compensator} and~\ref{sec:theory} applies unchanged.

\end{document}